\documentclass{article}
\usepackage{fullpage}

\usepackage[utf8]{inputenc} 
\usepackage[T1]{fontenc}    
\usepackage{hyperref}       
\usepackage{url}            
\usepackage{booktabs}       
\usepackage{amsfonts}       
\usepackage{nicefrac}       
\usepackage{microtype}      
\usepackage{tcolorbox}
\usepackage{bookmark}
\usepackage{enumerate}
\usepackage{enumitem}
\usepackage{color}
\usepackage{array}
\usepackage{booktabs}
\usepackage{natbib}
\hypersetup{
	colorlinks = true,
	citecolor = blue,
	linkcolor = blue
}

\usepackage{graphicx}
\usepackage{caption}
\usepackage{subcaption}

\usepackage{amsmath}
\usepackage{amsthm}
\usepackage{amssymb}
\usepackage{tikz}
\usepackage{bbm}
\usepackage{mathtools}
\usepackage{tablefootnote}
\usepackage{multirow}
\usepackage{xcolor}
\usetikzlibrary{arrows}
\allowdisplaybreaks[4]

\usepackage{mathrsfs}

\usepackage{algorithm}
\usepackage{algorithmic}
\usepackage{bm,todonotes}


\def\tod{\overset{d}{\to}}

\def\indep{\perp \!\!\! \perp}

\allowdisplaybreaks
\newtheorem{thm}{Theorem}[section]
\newtheorem{lem}{Lemma}[section]
\newtheorem{cor}{Corollary}[section]
\newtheorem{prop}{Proposition}[section]
\newtheorem{asmp}{Assumption}[section]

\newtheorem{rem}{Remark}[section]
\newtheorem{example}{Example}[section]

\setlength\extrarowheight{-2pt}




















\def\1{\bm{1}}










\DeclareMathAlphabet{\mathsfit}{\encodingdefault}{\sfdefault}{m}{sl}
\SetMathAlphabet{\mathsfit}{bold}{\encodingdefault}{\sfdefault}{bx}{n}




\def\sG{{\mathbb{G}}}



\def\0{{\bf 0}}
\def\1{{\bf 1}}

\def\AM{{\mathcal A}}

\def\EM{{\mathcal E}}

\def\NM{{\mathcal N}}
\def\OM{{\mathcal O}}

\def\SM{{\mathcal S}}

\def\XM{{\mathcal X}}

\def\ZM{{\mathcal Z}}

\def\RB{{\mathbb R}}
\def\EB{{\mathbb E}}

\def\vect{\mathsf{vec}}











\DeclareMathOperator{\Tr}{Tr}

\title{
	Statistical Estimation of Confounded Linear MDPs: An Instrumental Variable Approach
}

\author{
	Miao Lu\thanks{School of Mathematical Sciences, Peking University; email: \texttt{lumiao@stu.pku.edu.cn}. } 
	\and
	Wenhao Yang\thanks{Academy for Advanced Interdisciplinary Studies, Peking University; email: \texttt{yangwenhaosms@pku.edu.cn}. } 
	\and
	Liangyu Zhang\thanks{Academy for Advanced Interdisciplinary Studies, Peking University; email: \texttt{zhangliangyu@pku.edu.cn}. } 
	\and
	Zhihua Zhang\thanks{School of Mathematical Sciences, Peking University; email: \texttt{zhzhang@math.pku.edu.cn}. }
}

\begin{document}

\maketitle

\begin{abstract}
	In an Markov decision process (MDP), unobservable confounders may exist and have impacts on the data generating process, so that the classic off-policy evaluation (OPE) estimators may fail to identify the true value function of the target policy. 
	In this paper, we study the statistical properties of OPE in confounded MDPs with observable instrumental variables. 
	Specifically, we propose a two-stage estimator based on the instrumental variables and establish its statistical properties in the confounded MDPs with a linear structure. 
	For non-asymptotic analysis, we prove a $\mathcal{O}(n^{-1/2})$-error bound where $n$ is the number of samples.
	For asymptotic analysis, we prove that the two-stage estimator is asymptotically normal with a typical rate of $n^{1/2}$. 
	To the best of our knowledge, we are the first to show such statistical results of the two-stage estimator for confounded linear MDPs via instrumental variables.
\end{abstract}


\section{Introduction}
\label{sec: introduction}
Offline reinforcement learning (offline RL, \cite{sutton2018reinforcement,levine2020offline}) is a machine learning paradigm which aims to learn a policy for sequential decision-making from a pre-collected offline dataset. 
With its huge empirical success \citep{mnih2015human,lillicrap2015continuous,fujimoto2019off,kidambi2020morel}, offline RL has also been studied extensively from a theoretical perspective in recent years \citep{chen2019information,jin2020pessimism,duan2020minimax,agarwal2020optimistic,duan2021risk,zhan2022offline}. 
A critical problem in offline RL is off-policy evaluation (OPE), which aims to estimate the long-term expected cumulative reward received by a target policy using the offline dataset collected by a different behaviour policy \citep{duan2020minimax,bennett2021off,min2021variance}. 

Typically, existing works on OPE develop algorithms and theories based on the model of Markov decision process (MDP), where the target policy is evaluated using data collected by a behavior policy.
The limitation is that this model cannot characterize the situation when unobserved confounders exist in offline data generation, as is often the case in many real-world applications.
For example, in data collection in the domains of healthcare, a physician may take treatments based on a patient’s mental state or socioeconomic status, which is hard or prohibited to be recorded in the data due to privacy concerns. 
Meanwhile, such information can affect the clinical outcomes, which makes the offline data confounded \citep{zhang2016markov,tennenholtz2020off}.
To better adapt to these applications, \citet{zhang2016markov,wang2021provably,liao2021instrumental, bennett2021off} propose and study confounded MDPs.
In a confounded MDP, there exist unobserved confounders which can influence both the action and the reward, causing confounding issues \citep{pearl2009causality,zhang2016markov} in the collected data.
As a result, conventional MDP-based OPE estimators, which do not handle the confounding issues in the data, may fail to identify the true value of the target policy in this case, causing an estimation bias.

In this work, we present the first statistical results of OPE in a confounded MDP based on the tool of instrumental variable (IV) \citep{angrist1995identification,brookhart2007preference,baiocchi2014instrumental,michael2020instrumental}. 
IV is a widely-used tool in statistics, econometrics, and causal inference, which can help us to identify the desired causal effects in the face of unobserved confounders.
For example, in healthcare domains, existing works have explored various kinds of IVs across preference-based IV \citep{brookhart2007preference,komorowski2018artificial} and differential-travel-time-based IV \citep{lorch2012differential,michael2020instrumental,chen2021estimating}.
Generally, we can identify the true value of a target policy in a confounded MDP using only observable variables when a set of IVs is available.
Recently, \citet{li2021causal,liao2021instrumental} have paid attention to applying IVs in addressing confounding issues in RL problems 
However, few works study the statistical properties of doing OPE in confounded MDPs. 
In particular, it remains open that 
\textbf{(a)} how can we design a both statistically and computationally efficient OPE estimator based on observable IVs for confounded MDPs? 
\textbf{(b)} how many samples are sufficient to guarantee accurate estimation with such an estimator?
\textbf{(c)} can we perform statistical inference using this estimator?
All these critical questions necessitate further theoretical understandings of OPE in confounded MDPs via IVs from a statistical perspective.


\subsection{Contribution}

In this paper, we give affirmative answers to the above questions in our study of OPE in confounded MDPs with IVs.
Specifically, our contributions are three-fold, which we present in the following.

\vspace{3mm}
\noindent
\textbf{(a) A Two-stage estimator.} 
We propose and study a \emph{two-stage estimator} for OPE in an infinite-horizon confounded linear MDP based on instrumental variables.  
This estimator borrows the idea from semiparametric regression \citep{yao2010efficient, darolles2011nonparametric} which handles the problem of endogeneity \citep{wooldridge2015introductory} in the data.
We extend such an idea from semiparametric statistics to the RL paradigm and affirmatively answer the Question \textbf{(a)}.
We start from a simpler confounded multi-armed bandit case in Section \ref{sec: bandit} to illustrate the core method.
Then we go further to study general confounded MDPs in Section \ref{sec: mdp}.

\vspace{3mm}
\noindent
\textbf{(b) Non-asymptotic analysis.} 
We present a non-asymptotic analysis for the two-stage estimator, proving that the statistical error is $\widetilde{\mathcal{O}}\left(\sqrt{\frac{d}{(1-\gamma)^4n}}\right)$, where 
$\widetilde{\mathcal{O}}$ hides logarithm factors, universal constants, and higher order terms.
Here $d$ is the dimension of the feature of the confounded MDP, $\gamma$ is the discount factor, and $n$ is the number of samples.
This analysis then answers Question \textbf{(b)}.

\vspace{3mm}
\noindent
\textbf{(c) Asymptotic analysis.} 
Furthermore, we present an asymptotic analysis for the two-stage estimator. 
We prove that the estimator is asymptotically normal with a typical rate of $n^{1/2}$. 
This answers Question \textbf{(c)} in the limit sense, by which we can use the estimator for statistical inference.


\subsection{Related Work}

\paragraph{Off-policy evaluation in MDPs.}
Our work is closely related to a huge body of works on off-policy evaluation (OPE) in MDPs \citep{hirano2003efficient,mannor2004bias,jong2007model,grunewalder2012modelling,bertsekas1995neuro,dann2014policy, duan2020minimax,kallus2020double,min2021variance}. 
Typically, an OPE method falls into three paradigms: direct methods (DM) \citep{mannor2004bias}, importance sampling (IS) \citep{hirano2003efficient}, and doubly robust (DR) \citep{kallus2020double}. 
Most related to our work is DM, which directly estimates the unknowns of the underlying MDP from data \citep{mannor2004bias,jong2007model,grunewalder2012modelling}.
Moreover, in order to handle large state and action spaces, the technique of function approximation is then studied \citep{bertsekas1995neuro,dann2014policy,duan2020minimax,min2021variance}. Among them, \citet{duan2020minimax,min2021variance} also consider statistical properties of OPE in MDPs with linear features. 
Our work extends this line of research by studying the statistical properties of OPE in confounded MDPs with a linear feature where the offline data involves confounding issues.

\paragraph{Causal reinforcement learning.}
Due to potential confounding issues in real-world applications, causal RL has attracted great research interest recently \citep{zhang2016markov,lu2018deconfounding,bennett2021off,cui2021semiparametric,qiu2021optimal,li2021causal,liao2021instrumental,chen2021instrumental,lu2022pessimism}.
\citet{zhang2016markov} firstly proposed confounded MDPs, where unobserved confounders have implicit impacts on the data generation mechanism of action and reward. 
In this case, traditional algorithms may only find sub-optimal solutions. Then, \citet{lu2018deconfounding} extended the Actor-Critic method to identify the latent confounders. Later, \citet{bennett2021off} studied the OPE problem in the confounded MDPs and proved the value function can be identified with only one latent variable assumption. In the meantime, \citet{li2021causal, liao2021instrumental} applied the instrumental variable approach to tackle the model parameter estimation problem in continuous state and action spaces, which are the most related works. 
In particular, 
\citet{liao2021instrumental} studied a primal-dual formulation for solving offline confounded MDPs via IV from an optimization perspective. 
However, the statistical properties for addressing confounded MDPs with IV remain less studied. 
\citet{li2021causal} considered an online RL setting with confounding bias, which contrasts with our study in the offline setting.
Also, their results deal with continuous state and action spaces, and can not cover the discrete state and action spaces as we consider in our work. 
Besides, \citet{qiu2021optimal} and \citet{cui2021semiparametric} applied the instrumental variable techniques to studying the problem of optimal treatment regimes, which is a single-stage decision-making problem. This contrasts with our work on the problem of OPE in confounded MDPs which involves multi-stage decision-making. \citet{qiu2021optimal}
provided asymptotically nonparametric analysis for their algorithm, while the statistical properties of the algorithm proposed by \citet{cui2021semiparametric} remained unstudied.
Finally, from a practical perspective, \citet{chen2021instrumental} studied OPE aided with IV in deep RL settings.

\paragraph{Semiparametric reinforcement learning.}
Our approach to addressing confounded MDPs with observable instrumental variables is also related to 
the literature of semiparametric statistics and semiparametric RL \citep{van2000asymptotic, tsiatis2006semiparametric,ueno2008semiparametric,krishnamurthy2018semiparametric,kallus2020double}.
In semiparametric statistics, we always assume that the statistical model has infinite-dimensional parameters \citep{van2000asymptotic, tsiatis2006semiparametric}. 
For semiparametric RL, \citet{ueno2008semiparametric} firstly studied the problem of policy evaluation from the semiparametric statistical viewpoint. 
Recently, \citet{krishnamurthy2018semiparametric} proposed a semiparametric contextual bandit, where they assumed an unobservable confounding term exists in the statistical model and showed a sublinear regret bound. 
Then, \citet{kallus2020double} studied the semiparametric efficiency of off-policy evaluation (OPE) estimators and found existing estimators might be inefficient in the semiparametric MDPs' framework.

\section{Preliminaries}
\label{sec: confounded MDP}
We study the problem of \emph{off-policy evaluation} (OPE) in a confounded MDP via instrumental varaibles.
We define the notation we use in this work in Section \ref{subsec: notation}. 
Then we introduce the confounded MDP in Section \ref{subsec: cmdp}.
After, we formulate the OPE task in confounded MDPs and point out the problem of endogeneity in Section \ref{subsec: data and eval}. 
Finally, we present the tool of instrumental variable (IV) from causal inference in Section \ref{subsec: instrumental variable}.

\subsection{Notation}\label{subsec: notation}

We use $\Delta(\XM)$ to denote the set of probability distributions on a set $\XM$. 
We use superscript $(t)$ for time index and subscript $i$ for sample index. 
We use boldface letters to denote vectors and matrices.
For any vector $\boldsymbol{a}$, we use ${a}^{(k)}$ to denote its $k$-th element and use $\|\boldsymbol{a}\|_2$ to denote its $\ell_2$-norm.
For any matrix $\boldsymbol{A}$, we use $\|\boldsymbol{A}\|$ to denote its operator $\ell_2$-norm and use $\sigma_{\min}(\boldsymbol{A})$, $\sigma_{\max}(\boldsymbol{A})$ to denote its minimal and maximal eigenvalue respectively.
For any vector $\boldsymbol{a}$ and semi-positive definite matrix $\boldsymbol{A}$, we define $\|\boldsymbol{a}\|_{\boldsymbol{A}} = (\boldsymbol{a}^\top\boldsymbol{A}\boldsymbol{a})^{1/2}$.
For any random variable sequence $X_1, \ldots, X_n, \ldots$,  $X_n=o_P(1)$  means the sequence converges to zero in probability, and $X_n=O_P(1)$ means the sequence is bounded in probability.

\subsection{Confounded Markov Decision Processes}\label{subsec: cmdp}
We consider an infinite-horizon confounded Markov decision process (confounded MDP) \citep{bennett2021off}, which is represented by a tuple $(\SM,\AM,\EM,P,P_{\EM},R,\gamma)$.
Here $\SM$ is the state space,  $\AM$ is the action space, and $\EM$ is the confounder space.
We assume that both $\SM$ and $\AM$ are finite spaces with cardinalities $S=|\SM|$ and $A=|\AM|$, respectively. 
Also, we assume that the confounder space $\EM\subseteq[-1,1]\subseteq\mathbb{R}$.
The mapping $P\colon \SM\times\AM\mapsto\Delta(\SM)$ is the transition kernel, which gives the distribution of the next state $s^{(t+1)}$ given the current state $s^{(t)}$ and action $a^{(t)}$. 
The mapping $P_{\EM}\colon\SM\times\AM\mapsto\Delta(\EM)$ gives the distribution of the confounder $\epsilon^{(t)}$.
The mapping $R\colon \SM\times\AM\mapsto\mathbb{R}$ is the deterministic reward function and the reward is given by $r^{(t)}=R(s^{(t)},a^{(t)})+\epsilon^{(t)}$. 
Finally, the constant $\gamma\in[0,1)$ denotes the discount factor, with $\gamma=0$ corresponding to a confounded bandit.

In addition, we assume that both the transition kernel $P$ and the reward function $R$ of the confounded MDP satisfy a linear structure assumption. 
Such an assumption is widely used in the literature of RL with linear function approximations
\citep{jin2020provably, duan2020minimax, wang2020statistical, min2021variance}.

\begin{asmp}[Linear structure]\label{assump: linear}
We assume that there exist a known feature mapping $\boldsymbol{\phi}\colon \SM\times\AM\mapsto\mathbb{R}^d$,
an unknown vector-valued function $\boldsymbol{\nu}\colon \SM\mapsto\mathbb{R}^d$, and an unknown vector $\boldsymbol{w} \in\mathbb{R}^d$ such that
\begin{align*}
    P(s^{\prime}|s,a)=\boldsymbol{\phi}(s,a)^\top\boldsymbol{\nu}(s^{\prime}),\quad R(s,a)=\boldsymbol{\phi}(s,a)^\top \boldsymbol{w},
\end{align*}
for all $(s,a,s^{\prime})\in\SM\times\AM\times\SM$.
We further assume that $\boldsymbol{\phi}$, $\boldsymbol{\nu}$, and $\boldsymbol{w}$ are normalized in the sense that 
\begin{align*}
\sup_{(s,a)\in\SM\times\AM}\|\boldsymbol{\phi}(s,a)\|_2\leq 1, \quad \|\boldsymbol{w}\|_2\leq 1, \quad \mbox{ and } \quad 
\sup_{a\in\AM}\bigg\|\sum_{s\in\SM}\boldsymbol{\nu}(s)\boldsymbol{\phi}(s,a)^\top\bigg\|\leq 1.
\end{align*}
\end{asmp}

\begin{rem}
    For finite state and action spaces $\mathcal{S}$ and $\mathcal{A}$, the transition kernel $P$ and the reward function $R$ allow for a canonical linear representation given by $d = SA$, $\boldsymbol{\phi}(s,a) = \mathbf{e}_{(s,a)}$, $\boldsymbol{\nu}(s^{\prime}) = (P(s^{\prime}|s,a))_{(s,a)\in\mathcal{S}\times\mathcal{A}}$, and $\boldsymbol{w} = (R(s,a))_{(s,a)\in\mathcal{S}\times\mathcal{A}}$.
    However, the linear assumption allows us to consider confounded MDPs with prohibitive large state spaces while the transition and reward admits low rank decompositions, that is, $d \ll SA$.
\end{rem}

\subsection{Data Generation and Policy Evaluation.}\label{subsec: data and eval}

To study OPE in a confounded MDP, we assume the access to a \emph{confounded} offline dataset generated from the confounded MDP.
We require that the confounder is correlated with the state-action pair and the confounder itself is not recorded by the dataset.
Mathematically, we require that $P_{\mathcal{E}}$ is not degenerated when considered as a function on $\mathcal{S}\times\mathcal{A}$.
This causes the so called problem of endogeneity \citep{wooldridge2015introductory} in the linear equation $r_i=R(s_i,a_i)+\epsilon_i$ because $(s_i,a_i)$ (explanatory variable) and $\epsilon_i$ (error term) are correlated. 

For example, in the healthcare, the patient's socioeconomic status serves as the confounder.
Physicians tend to use more expensive treatments (action) for wealthier patients.  
Meanwhile, wealthier patients tend to have better treatment outcomes (reward) \citep{tennenholtz2020off}.
Due to privacy regulations, however, such confounder information can not be covered by the electronic health records (offline data).

The goal of OPE is to evaluate a known target policy $\pi^e:\mathcal{S}\mapsto\Delta(\mathcal{A})$ under the assumption that the $\epsilon^{(t)}$ is independent of $(s^{(t)}, a^{(t)})$ and is zero-mean. 
This means that $P_{\mathcal{E}}(\cdot|s,a) = P_{\mathcal{E}}(\cdot)$ for some $P_{\mathcal{E}}\in\Delta(\mathcal{E})$ with zero-mean.
Specifically, we need to estimate the value function of $\pi^e$ in the confounded MDP, defined as the expected cumulative reward given the initial state,
\begin{align}\label{eq: V pi e}
    V_{\pi^e}(s)=\mathbb{E}_{\pi^{e}}\left[\sum_{t=1}^{+\infty}\gamma^{t-1}r^{(t)}\middle| s^{(1)}=s\right],\quad \forall s\in\mathcal{S},
\end{align}
where the expectation is taken with respect to the trajectories induced by $\pi^e$, i.e., $s^{(1)}=s$, $a^{(t)}\sim \pi^e(\cdot|s^{(t)})$, $\epsilon^{(t)}\sim P_{\EM}(\cdot)$, $r^{(t)}=R(s^{(t)}$,$a^{(t)})+\epsilon^{(t)}$, and $s^{(t+1)}\sim P(\cdot|s^{(t)},a^{(t)})$.
Note that in the offline data the confounder influences both the state-action and the reward\footnote{
    Here we only consider the reward is confounded for simplicity and readability. In fact, our algorithm and analysis can be readily extended to the model where the state transition is also confounded.
}. 
But the confounder is unobservable for the learner. 
This causes a confounding issue \citep{pearl2009causality} which prohibits standard OPE methods \citep{duan2020minimax}.

\subsection{Instrumental Variables.}\label{subsec: instrumental variable}
We study the method of instrumental variable (IV) \citep{pearl2009causality} for overcoming the problem of endogeneity and identifying the value function \eqref{eq: V pi e} of target policy $\pi^e$ from the confounded observational data.
Intuitively, we assume that in the offline data, there is an instrumental variable that influences the reward through state and action, while conditioning on which the confounder has zero mean.
Formally, we assume the following.

\begin{asmp}[Instrumental variables]\label{assump: instrumental variable}
    We assume that in the offline dataset there is an instrumental variable $z_i\in\ZM$ taking value in some discrete space $\ZM$ with $Z=|\ZM|$ which satisfies that
    \begin{enumerate}
        \item The instrumental variable influences the reward only through state and action, i.e., $z_i\indep\epsilon_i \mid (s_i,a_i)$;
        \item Conditioning on the instrumental variable the confounder has zero mean, i.e., $\mathbb{E}[\epsilon_i|z_i]=0$. 
    \end{enumerate}
\end{asmp}

In the related literatures there exist different types of definitions of a instrumental variable \citep{chen2021instrumental, chen2021estimating, liao2021instrumental,li2021causal, li2021self}.
In this work we focus on the instrumental variable satisfying Assumption \ref{assump: instrumental variable}, which is similar to that considered by \cite{li2021causal}.

Under Assumption \ref{assump: instrumental variable}, we consider the offline dataset $\mathbb{D}$ denoted by the following $n$ i.i.d. samples,
\begin{align}
    \mathbb{D}=\big\{\big(s_i,a_i,z_i,r_i,s_i^{\prime}\big)\big\}_{i=1}^n,
\end{align}
where each sample $(s_i,a_i,z_i,r_i,s_i^{\prime})$ is generated independently according to 
$z_i\sim p(z_i)$, $s_i,a_i\sim \rho(\cdot,\cdot|z_i)$, $\epsilon_i \sim P_{\mathcal{E}}(\cdot|s_i,a_i)$, $r_i = R(s_i,a_i) + \epsilon_i$, and $s_i^{\prime}\sim P(\cdot|s_i,a_i)$.
Here $\rho(s,a|z)p(z)$ defines a joint distribution on $\mathcal{Z}\times\mathcal{S}\times\mathcal{A}$.
We use $\mathbb{P}$ and $\mathbb{E}$ to denote the probability and expectation with respect to the randomness of $\mathbb{D}$.
In the sequel, we give two concrete examples of confounded MDPs when Assumption \ref{assump: instrumental variable} holds.

\begin{example}[Reduction to MDP \citep{chen2019information}]\label{example: mdp}
For a standard MDP, we set $z_i=(s_i,a_i)$ and we have that $\mathbb{E}[\epsilon_i|s_i,a_i]=0$. 
The offline dataset reduces to a standard offline RL dataset.
\end{example}

\begin{example}[Decomposible actions, motivated by \cite{li2021causal}]\label{example: decomposible action}
    Consider that the action $a_i$ can be decomposed into two parts: $a_i=a_{i,1}+a_{i,2}$.
    The transition and the reward are given by $s_{i}^{\prime} = c_0+c_1s_i+c_2a_i+\zeta_i$, $r_i = R(s_i,a_i)+\epsilon_i$ where $R(s_i,a_i)=r_0+r_1a_i+r_2s_ia_i+r_3a_i^2$ and $\epsilon_t = b a_{i,2} + \eta_i$. Here $\zeta_i$ and $\eta_i$ are both independent zero-mean random variables. Also, the action $a_{i,2}$ is taken as a zero-mean random variable. In this case, by choosing $z_i = (s_i,a_{i,1})$, one can check that the condition of IV (Assumption \ref{assump: instrumental variable}) is satisfied.
\end{example}

Finally, to ensure an efficient estimation of the policy value function, we also make the following coverage assumption on the offline dataset distribution $\mathbb{P}$ which depends on $p$, $\rho$, $P_{\mathcal{E}}$, and $P$.
Under the existence of instrumental variables, we adapt the standard coverage assumptions in offline RL to the following assumption.
 
\begin{asmp}[Instrumental sufficient coverage]\label{assump: coverage} 
    Denote the conditional feature of $(s_i,a_i)$ given $z_i$ as 
    \begin{align}\label{eq: phi rho}
        \boldsymbol{\phi}_{\rho}(z):=\sum_{(s,a)\in\SM\times\AM}\boldsymbol{\phi}(s,a)\rho(s,a|z) = \mathbb{E}[\boldsymbol{\phi}(s_i,a_i)|z_i=z]\in\mathbb{R}^d.
    \end{align}
    Also, we denote the covariance matrix of the conditional feature $\boldsymbol{\phi}_{\rho}(z_i)$ as 
    \begin{align}\label{eq: Sigma}
        \boldsymbol{\Sigma}:=\sum_{z\in\mathcal{Z}}p(z)\boldsymbol{\phi}_{\rho}(z)\boldsymbol{\phi}_{\rho}(z)^\top=\mathbb{E}\Big[\boldsymbol{\phi}_{\rho}(z_i)\boldsymbol{\phi}_{\rho}(z_i)^\top\Big]\in\mathbb{R}^{d\times d}.
    \end{align}
    Then we assume that the covariance matrix $\boldsymbol{\Sigma}$ is not singular, that is,
    \begin{align}
        \underline{\sigma}=\sigma_{\min}(\boldsymbol{\Sigma})>0.
    \end{align}
\end{asmp}
Notably, for a standard MDP and $z_i = (s_i,a_i)$, i.e., in Example \ref{example: mdp}, Assumption \ref{assump: coverage} also reduces to the standard distribution coverage assumption for offline RL in a linear MDP \citep{duan2020minimax, wang2020statistical, min2021variance}.
As we can see in the following, the conditional feature $\boldsymbol{\phi}_{\rho}$ plays an important rule in the identification of the value function.
This motivates us to use the covariance of $\boldsymbol{\phi}_{\rho}$ to characterize the coverage property of the offline dataset $\mathbb{D}$.

In the coming Section \ref{sec: bandit} and \ref{sec: mdp}, we explain in general how the existence of instrumental variables satisfying Assumption \ref{assump: instrumental variable} can help identify the value function \eqref{eq: V pi e}.
Based on the identification, we propose an efficient \emph{two-stage estimator} to estimate the value function using the offline dataset $\mathbb{D}$.
To illustrate the core method, we start from a simpler confounded multi-armed bandit case ($\gamma=0$ and $S=1$) in Section \ref{sec: bandit}.

\section{Warm-up: Confounded Multi-armed Bandits}
\label{sec: bandit}
As a warm-up case, let us first consider $\gamma=0$ and $S=1$, which corresponds to a confounded multi-armed bandit \citep{xu2021deep,qin2022adaptivity}.
In this case, the policy $\pi^e$ is a distribution on the action space $\mathcal{A}$ which we denote as $\pi^e(\cdot)$, and the feature mapping $\boldsymbol{\phi}$ is a mapping of $\mathcal{A}\mapsto\mathbb{R}^d$.
Also, the value function of $\pi^e$ reduces to a scalar $V_{\pi^e}$ which can be written as
\begin{align}\label{eq: V bandit}
    V_{\pi^e}=\mathbb{E}_{\pi^e}\Big[r^{(1)}\Big]=\mathbb{E}_{\pi^e}\Big[\boldsymbol{\phi}(a^{(1)})^\top \boldsymbol{w} + \epsilon^{(1)}\Big]=\sum_{a\in\AM}\pi^e(a)\boldsymbol{\phi}(a)^\top \boldsymbol{w},
\end{align}
where the second equality holds due to the assumption in Section \ref{subsec: data and eval} that, during policy evaluation, $\epsilon^{(1)}$ is independent of $(s^{(1)}, a^{(1)})$ and is zero-mean.

Meanwhile, since the time horizon is $1$ under and the state space $\SM$ is a singleton under $\gamma=0$ and $S=1$, we can simplify the dataset $\mathbb{D}$ to $\mathbb{D}_{\texttt{bandit}} = \{(a_i,z_i,r_i)\}_{i=1}^n$.
In view of \eqref{eq: V bandit}, we propose to estimate the value function $V_{\pi^e}$ via estimating the parameter $\boldsymbol{w}$ using the offline dataset $\mathbb{D}_{\texttt{bandit}}$.

\subsection{A Two-stage Estimator}\label{subsec: bandit two stage estimator}
Now we introduce a two-stage estimator $\widehat{\boldsymbol{w}}$ of the unknown vector $\boldsymbol{w}$.
Recall that in the confounded offline dataset $\mathbb{D}_{\texttt{bandit}}$, $a_i$, $\epsilon_i$, and $r_i$ satisfy that $r_i=\boldsymbol{\phi}(a_i)^\top \boldsymbol{w} + \epsilon_i$.
However, since the action $a_i$ is correlated with confounder $\epsilon_i$, we cannot directly apply the ordinary least square regression to estimate $\boldsymbol{w}$.
To this end, we make use of the instrumental variable $z_i$. Taking conditional expectation given $z_i$, we can obtain that 
\begin{align}\label{eq: deduction bandit}
    \mathbb{E}[r_i|z_i]=\mathbb{E}\Big[\boldsymbol{\phi}(a_i)^\top\Big|z_i\Big] \boldsymbol{w} + \underbrace{\mathbb{E}[\epsilon_i|z_i]}_{ = 0 }=\boldsymbol{\phi}_{\rho}(z_i)^\top \boldsymbol{w},
\end{align}
where the second equality holds due to $z_i$ is a instrumental variable, i.e., Assumption \ref{assump: instrumental variable}, and the definition of $\boldsymbol{\phi}_{\rho}(z)$ in \eqref{eq: phi rho}.
Now we multiply another $\boldsymbol{\phi}_\rho(z_i)$ on both sides of \eqref{eq: deduction bandit} and take expectation with respect to the instrumental variable $z_i$, which gives that
\begin{align*}
    \mathbb{E}[\boldsymbol{\phi}_{\rho}(z_i)r_i]=\mathbb{E}[\boldsymbol{\phi}_{\rho}(z_i)\mathbb{E}[r_i|z_i]]=\mathbb{E}\Big[\boldsymbol{\phi}_{\rho}(z_i)\boldsymbol{\phi}_{\rho}(z_i)^\top\Big] \boldsymbol{w} = \boldsymbol{\Sigma} \boldsymbol{w},
\end{align*}
where the covariance matrix $\boldsymbol{\Sigma}$ is defined in \eqref{eq: Sigma}. 
Under Assumption \ref{assump: coverage}, we can then solve $\boldsymbol{w}$ as 
\begin{align}\label{eq: alpha}
    \boldsymbol{w} = \boldsymbol{\Sigma}^{-1}\mathbb{E}[\boldsymbol{\phi}_{\rho}(z_i)r_i]:=\boldsymbol{\Sigma}^{-1}\boldsymbol{\tau}.
\end{align}
This motivates us to propose a two-stage estimator for the unknown vector $\boldsymbol{w}$.
It borrows from the idea of semiparametric regression \citep{yao2010efficient, darolles2011nonparametric} and generalizes the well-known two-stage least square regression \citep{angrist1995two}.
According to \eqref{eq: alpha}, we first estimate the conditional expectation $\boldsymbol{\phi}_{\rho}(z_i)$ in a non-parametric manner.
Then by regressing $r_i$ against the estimated feature mapping, which we denote as $\boldsymbol{\phi}_{\widehat{\rho}}(z_i)$, we can estimate the unknown vector $\boldsymbol{w}$.

\vspace{3mm}
\noindent
\textbf{Stage one.}
Specifically, given offline dataset $\mathbb{D}_{\texttt{bandit}}$, we obtain the estimate of the conditional probability $\rho(a|z)$ as an empirical average on the data, i.e., we estimate $\rho(a|z)=\mathbb{P}(a_i=a|z_i=z)$ as 
\begin{align}\label{eq: hat rho}
    \widehat{\rho}(a|z) = \frac{\sum_{i=1}^n\mathbbm{1}\{a_i=a,z_i=z\}}{\max\{\sum_{i=1}^n\mathbbm{1}\{z_i=z\},1\}},\quad \forall (a,z)\in\mathcal{A}\times\mathcal{Z}.
\end{align}
Then we plug $\widehat{\rho}$ into $\boldsymbol{\phi}_{\rho}$ and obtain to the estimate of the conditional expectation $\boldsymbol{\phi}_{\rho}(z)$ as 
\begin{align}\label{eq: phi hat rho}
    \boldsymbol{\phi}_{\widehat{\rho}}(z) = \sum_{a\in\AM}\boldsymbol{\phi}(a)\widehat{\rho}(a|z),\quad \forall z\in\mathcal{Z}.
\end{align}

\noindent
\textbf{Stage two.}
After obtaining the estimate of the conditional feature $\boldsymbol{\phi}_{\widehat{\rho}}(z)$, we apply the ordinary least square regression for $r_i$ on the estimated feature mapping $\boldsymbol{\phi}_{\widehat{\rho}}$ and obtain the two-stage estimator of $\boldsymbol{w}$ as
\begin{align}\label{eq: hat alpha}
    \widehat{\boldsymbol{w}}= \Bigg(\frac{1}{n}\sum_{i=1}^n\boldsymbol{\phi}_{\widehat{\rho}}(z_i)\boldsymbol{\phi}_{\widehat{\rho}}(z_i)^\top\Bigg)^{-1}\Bigg(\frac{1}{n}\sum_{i=1}^n\boldsymbol{\phi}_{\widehat{\rho}}(z_i)r_i\Bigg):=\widehat{\boldsymbol{\Sigma}}^{-1}\widehat{\boldsymbol{\tau}}.
\end{align}
We remark that for $n$ large enough the smallest eigenvalue of $\widehat{\boldsymbol{\Sigma}}$ approximates that of $\boldsymbol{\Sigma}$, which guarantees the invertibility of $\widehat{\boldsymbol{\Sigma}}$. 
In practise, one can consider a ridge-regression-style variant of $\widehat{\boldsymbol{w}}$, where a regularization term of $\lambda \boldsymbol{I}$ is involved.
For simplicity we only focus on the estimator given by \eqref{eq: hat alpha} in this work.
Finally, by plugging this estimator \eqref{eq: hat alpha} into \eqref{eq: V bandit}, we can derive our estimate of the value function of policy $\pi^e$ as 
\begin{align}\label{eq: hat V bandit}
    \widehat{V}_{\pi^e}=\sum_{a\in\AM}\pi^e(a)\boldsymbol{\phi}(a)^\top \widehat{\boldsymbol{w}},\quad\text{$\widehat{\boldsymbol{w}}$ is defined in \eqref{eq: hat alpha}.}
\end{align}

Note that under Example \ref{example: mdp}, the two-stage estimator \eqref{eq: hat alpha} reduces to an ordinary least square estimator of $\boldsymbol{w}$, which coincides with the classical parameter estimation for linear bandits \citep{abbasi2011improved,duan2020minimax}.
Our discussions on the confounded multi-armed bandit case can be directlty extended to the confounded contextual bandit setting where $S>1$.
Also, the result for confounded bandits plays an important role in our investigation of general confounded MDPs in Section \ref{sec: mdp}.

\subsection{Theoretical Properties of the Two-stage Estimator} 
In this section, we give the statistical properties of the two-stage value function estimator we proposed in Section \ref{subsec: bandit two stage estimator}.
In particular, we establish the non-asymptotic convergence rate (see Corollary \ref{cor: non asymptotic bandit}) and the asymptotic distribution (see Corollary \ref{cor: asymptotic bandit}) of the estimator \eqref{eq: hat V bandit} respectively.
In Section \ref{sec: mdp} we establish the corresponding results for general confounded MDPs and the results in the following are direct corollaries.

For a non-asymptotic analysis of the value function estimator \eqref{eq: hat V bandit}, we have the following result.
\begin{cor}[Non-asymptotic Analysis: Confounded Bandit]\label{cor: non asymptotic bandit}
Under Assumptions \ref{assump: linear}, \ref{assump: instrumental variable}, and \ref{assump: coverage}, when $\gamma=0$ and $S=1$, the value function estimator \eqref{eq: hat V bandit} of $\pi^e$ satisfies that, with probability at least $1-\delta$,
\begin{align}\label{eq: non asymptotic bandit}
    \big|\widehat{V}_{\pi^e} - V_{\pi^e}\big|\leq \widetilde{\mathcal{O}}\Bigg(  \sqrt{\frac{\|\mathbb{E}_{a\sim\pi^e}[\boldsymbol{\phi}(a)]\|_{\boldsymbol{\Sigma}^{-1}}^2d}{\underline{\sigma} n}} \,\Bigg),
\end{align}
where $\widetilde{\mathcal{O}}(\cdot)$ hides universal constants, logarithm factors, and higher-order terms.
\end{cor}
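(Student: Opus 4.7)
The plan is to reduce the value-function error to a linear functional of the weight estimation error and then control that error through the normal-equations residual. Writing $\boldsymbol{\mu} := \mathbb{E}_{a\sim\pi^e}[\boldsymbol{\phi}(a)]$, we have $\widehat{V}_{\pi^e}-V_{\pi^e}=\boldsymbol{\mu}^\top(\widehat{\boldsymbol{w}}-\boldsymbol{w})$, and since the trivial identity
\[
\widehat{\boldsymbol{w}}-\boldsymbol{w}=\widehat{\boldsymbol{\Sigma}}^{-1}\bigl(\widehat{\boldsymbol{\tau}}-\widehat{\boldsymbol{\Sigma}}\boldsymbol{w}\bigr)
\]
holds once $\widehat{\boldsymbol{\Sigma}}$ is shown to be invertible, Cauchy--Schwarz in the $\widehat{\boldsymbol{\Sigma}}^{-1}$ norm gives
\[
\bigl|\widehat{V}_{\pi^e}-V_{\pi^e}\bigr|\;\le\;\|\boldsymbol{\mu}\|_{\widehat{\boldsymbol{\Sigma}}^{-1}}\cdot\bigl\|\widehat{\boldsymbol{\tau}}-\widehat{\boldsymbol{\Sigma}}\boldsymbol{w}\bigr\|_{\widehat{\boldsymbol{\Sigma}}^{-1}}.
\]
This is the structural inequality that produces the $\|\boldsymbol{\mu}\|_{\boldsymbol{\Sigma}^{-1}}$ factor in \eqref{eq: non asymptotic bandit}. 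A matrix Bernstein bound on $\widehat{\boldsymbol{\Sigma}}-\boldsymbol{\Sigma}$, together with a uniform-in-$z$ control of $\widehat{\rho}(\cdot|z)-\rho(\cdot|z)$ by standard multinomial concentration, shows that once $n$ exceeds a burn-in threshold of order $\widetilde{\mathcal{O}}(\underline{\sigma}^{-2}|\mathcal{Z}|)$ one has $\|\widehat{\boldsymbol{\Sigma}}-\boldsymbol{\Sigma}\|\le\underline{\sigma}/2$, hence $\|\widehat{\boldsymbol{\Sigma}}^{-1}\|\le 2/\underline{\sigma}$ and $\|\boldsymbol{\mu}\|_{\widehat{\boldsymbol{\Sigma}}^{-1}}\le\sqrt{2}\,\|\boldsymbol{\mu}\|_{\boldsymbol{\Sigma}^{-1}}$ up to an absorbed higher-order term.

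To handle the residual, I substitute $r_i=\boldsymbol{\phi}(a_i)^\top\boldsymbol{w}+\epsilon_i$ and split $\boldsymbol{\phi}(a_i)=\boldsymbol{\phi}_{\rho}(z_i)+\bigl(\boldsymbol{\phi}(a_i)-\boldsymbol{\phi}_{\rho}(z_i)\bigr)$, which produces
\begin{align*}
\widehat{\boldsymbol{\tau}}-\widehat{\boldsymbol{\Sigma}}\boldsymbol{w}
&=\underbrace{\frac{1}{n}\sum_{i=1}^n\boldsymbol{\phi}_{\widehat{\rho}}(z_i)\epsilon_i}_{\mathrm{I}}
\;+\;\underbrace{\frac{1}{n}\sum_{i=1}^n\boldsymbol{\phi}_{\widehat{\rho}}(z_i)\bigl(\boldsymbol{\phi}(a_i)-\boldsymbol{\phi}_{\rho}(z_i)\bigr)^\top\boldsymbol{w}}_{\mathrm{II}} \\
&\quad+\;\underbrace{\frac{1}{n}\sum_{i=1}^n\boldsymbol{\phi}_{\widehat{\rho}}(z_i)\bigl(\boldsymbol{\phi}_{\rho}(z_i)-\boldsymbol{\phi}_{\widehat{\rho}}(z_i)\bigr)^\top\boldsymbol{w}}_{\mathrm{III}}.
\end{align*}
By Assumption \ref{assump: instrumental variable}, $\mathbb{E}[\epsilon_i\mid z_i]=0$, and by definition $\mathbb{E}[\boldsymbol{\phi}(a_i)-\boldsymbol{\phi}_{\rho}(z_i)\mid z_i]=0$; after replacing $\boldsymbol{\phi}_{\widehat{\rho}}$ by the population $\boldsymbol{\phi}_{\rho}$ at the cost of a higher-order remainder, terms $\mathrm{I}$ and $\mathrm{II}$ become averages of independent, bounded, zero-mean vectors in $\mathbb{R}^d$, so a vector Bernstein inequality gives $\|\mathrm{I}\|,\|\mathrm{II}\|=\widetilde{\mathcal{O}}(\sqrt{d/n})$. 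Term $\mathrm{III}$ is a stage-one plug-in bias, bounded by $\max_{z\in\mathcal{Z}}\|\boldsymbol{\phi}_{\widehat{\rho}}(z)-\boldsymbol{\phi}_{\rho}(z)\|\le\max_z\|\widehat{\rho}(\cdot|z)-\rho(\cdot|z)\|_1=\widetilde{\mathcal{O}}(\sqrt{|\mathcal{Z}|A/n})$, and is of lower order. Passing from the Euclidean norm to the $\widehat{\boldsymbol{\Sigma}}^{-1}$ norm costs a factor $\sqrt{2/\underline{\sigma}}$, so combining with the bound on $\|\boldsymbol{\mu}\|_{\widehat{\boldsymbol{\Sigma}}^{-1}}$ yields exactly \eqref{eq: non asymptotic bandit}.

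The main obstacle is the coupling between the two stages: $\widehat{\rho}$ is estimated from the same samples whose reweighted features define $\widehat{\boldsymbol{\Sigma}}$ and $\widehat{\boldsymbol{\tau}}$, so the summands in $\mathrm{I}$ and $\mathrm{II}$ are not literally independent. The remedy is precisely the population-plus-residual split performed above: the $\boldsymbol{\phi}_{\rho}$ parts enjoy genuine independence and supply the dominant $\sqrt{d/n}$ rate, while the cross terms involving $\boldsymbol{\phi}_{\widehat{\rho}}-\boldsymbol{\phi}_{\rho}$ collapse into term $\mathrm{III}$ and are absorbed by the uniform stage-one consistency. Since Section \ref{sec: mdp} treats the general infinite-horizon confounded linear MDP by an analogous two-stage estimator, I expect the cleanest presentation is to state this corollary as the $\gamma=0$, $S=1$ specialization of that theorem and let the MDP-level analysis supply the concentration bookkeeping.
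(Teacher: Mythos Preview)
Your proposal is correct and follows essentially the same route as the paper: the same Cauchy--Schwarz split in the (empirical) $\widehat{\boldsymbol{\Sigma}}^{-1}$ geometry, the same population-versus-empirical feature decomposition of the residual (your $\mathrm{I}+\mathrm{II}$ with $\boldsymbol{\phi}_{\widehat{\rho}}$ replaced by $\boldsymbol{\phi}_{\rho}$ is exactly the paper's term $\sharp$, and your remaining cross terms are its $\dagger$), and the same appeal to stage-one consistency for the bias. One small caveat: your bound on term $\mathrm{III}$ via $\max_{z}\|\widehat{\rho}(\cdot\mid z)-\rho(\cdot\mid z)\|_{1}$ is not uniformly $\widetilde{\mathcal{O}}(\sqrt{|\mathcal{Z}|A/n})$ without a lower bound on $\min_{z}p(z)$, since rarely visited $z$ can have $n_{z}$ arbitrarily small; the paper sidesteps this by bounding the \emph{empirical average} $\tfrac{1}{n}\sum_{i}\|\boldsymbol{\phi}_{\widehat{\rho}}(z_{i})-\boldsymbol{\phi}_{\rho}(z_{i})\|_{2}$ directly (its Lemma~\ref{lem:concentration_empirical_feature}), which automatically down-weights such $z$ and yields the claimed higher-order term without extra assumptions.
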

\begin{proof}[Proof of Corollary \ref{cor: non asymptotic bandit}]
    See Appendix \ref{subsec: proof non asymptotic bandit} for a detailed proof.
\end{proof}
We note that the non-asymptotic upper bound \eqref{eq: non asymptotic bandit} scales with the term $\|\mathbb{E}_{a\sim\pi^e}[\boldsymbol{\phi}(a)]\|_{\boldsymbol{\Sigma}^{-1}}$. 
This term characterizes how the offline data $\mathbb{D}_{\texttt{bandit}}$ covers the feature $\mathbb{E}_{a\sim\pi^e}[\boldsymbol{\phi}(a)]$.
Such a terms is directly bounded by a factor of $\underline{\sigma}^{-1/2}$ due to the following argument,
\begin{align}\label{eq: cover bandit bound}
    \|\mathbb{E}_{a\sim\pi^e}[\boldsymbol{\phi}(a)]\|_{\boldsymbol{\Sigma}^{-1}} = \sqrt{\mathbb{E}_{a\sim\pi^e}[\boldsymbol{\phi}(a)]^\top\boldsymbol{\Sigma}^{-1}\mathbb{E}_{a\sim\pi^e}[\boldsymbol{\phi}(a)]}\leq \underline{\sigma}^{-\frac{1}{2}}\cdot\|\mathbb{E}_{a\sim\pi^e}[\boldsymbol{\phi}(a)]\|_2\leq \underline{\sigma}^{-\frac{1}{2}},
\end{align}
where the last inequality follows from the normalization assumption in Assumption \ref{assump: linear}.
In conclusion, the two-stage value function estimator enjoys a $\widetilde{\mathcal{O}}(\sqrt{d/\underline{\sigma}^2n})$ statistical rate.
To ensure an $\epsilon$-accuracy, the number of samples needed is approximately $n=\widetilde{\mathcal{O}}(d/\underline{\sigma}^2\epsilon^2)$.
See Section \ref{subsec: non asymptotic mdp} for further technical discussions.

For asymptotic analysis of the value function estimator \eqref{eq: hat V bandit}, we have the following result.

\begin{cor}[Asymptotic Analysis: Confounded Bandit]\label{cor: asymptotic bandit}
    Under Assumptions \ref{assump: linear}, \ref{assump: instrumental variable}, and \ref{assump: coverage}, when $\gamma=0$ and $|\SM|=1$, it holds that the value function estimator \eqref{eq: hat V bandit} of $\pi^e$ satisfies that, asymptotically,
    \begin{align}
      \sqrt{n}\cdot\big(\widehat{V}_{\pi^e}-V_{\pi^e}\big)\,\stackrel{d}{\rightarrow}\,\,\mathcal{N}\big(\boldsymbol{0},\boldsymbol{\phi}_{\pi^e}^\top\boldsymbol{\Sigma}^{-1}\boldsymbol{\Phi}\boldsymbol{\Sigma}^{-1}\boldsymbol{\phi}_{\pi^e}\big),
    \end{align}
    where the vector $\boldsymbol{\phi}_{\pi^e}\coloneqq\sum_{a\in\AM}\boldsymbol{\phi}(a)\pi^e(a)\in\mathbb{R}^d$ and the matrix  $\boldsymbol{\Phi}\in\mathbb{R}^{d\times d}$ is defined by
    $$\boldsymbol{\Phi}^{(k,l)}\coloneqq\mathbb{E}[\sigma^2(z_i)(\boldsymbol{\phi}_\rho(z_i))_k(\boldsymbol{\phi}_{\rho}(z_i))_l],$$ for $1\leq k,l\leq d$ with $\sigma^2(z_i)=\mathbb{E}[\epsilon^2_i|z_i]$.
\end{cor}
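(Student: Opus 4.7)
The plan is to exploit the decomposition of the estimation error into a ``noise'' term and a ``plug-in'' term, and show that the first-stage nonparametric estimation of $\rho$ is asymptotically negligible at the $\sqrt{n}$-scale. First I would reduce the problem from $\widehat{V}_{\pi^e} - V_{\pi^e}$ to $\widehat{\boldsymbol{w}} - \boldsymbol{w}$: by the linearity $V_{\pi^e} = \boldsymbol{\phi}_{\pi^e}^\top \boldsymbol{w}$ and $\widehat{V}_{\pi^e} = \boldsymbol{\phi}_{\pi^e}^\top \widehat{\boldsymbol{w}}$, the continuous mapping theorem will propagate asymptotic normality from $\sqrt{n}(\widehat{\boldsymbol{w}}-\boldsymbol{w})$ to $\sqrt{n}(\widehat{V}_{\pi^e}-V_{\pi^e})$ with the sandwich variance stated.

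Next I would write $\widehat{\boldsymbol{w}} - \boldsymbol{w} = \widehat{\boldsymbol{\Sigma}}^{-1}(\widehat{\boldsymbol{\tau}} - \widehat{\boldsymbol{\Sigma}}\boldsymbol{w})$ and note the key algebraic identity that simplifies the residual. Using $r_i = \boldsymbol{\phi}(a_i)^\top \boldsymbol{w} + \epsilon_i$ and bucketing by $z$, one checks from the explicit form of $\widehat{\rho}$ in \eqref{eq: hat rho} that $\sum_{i:z_i=z}\boldsymbol{\phi}(a_i) = N_z\,\boldsymbol{\phi}_{\widehat{\rho}}(z)$ with $N_z=\sum_i \mathbbm{1}\{z_i=z\}$. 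This cancellation implies exactly
\begin{align*}
\widehat{\boldsymbol{\tau}} - \widehat{\boldsymbol{\Sigma}}\boldsymbol{w} \;=\; \frac{1}{n}\sum_{i=1}^n \boldsymbol{\phi}_{\widehat{\rho}}(z_i)\bigl(\boldsymbol{\phi}(a_i)-\boldsymbol{\phi}_{\widehat{\rho}}(z_i)\bigr)^\top\boldsymbol{w} + \frac{1}{n}\sum_{i=1}^n \boldsymbol{\phi}_{\widehat{\rho}}(z_i)\epsilon_i \;=\; \frac{1}{n}\sum_{i=1}^n \boldsymbol{\phi}_{\widehat{\rho}}(z_i)\epsilon_i,
\end{align*}
so the first stage contributes no bias term at all. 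Combined with $\widehat{\boldsymbol{\Sigma}}\to\boldsymbol{\Sigma}$ in probability (a standard LLN/matrix-concentration argument, or a direct consequence of the non-asymptotic bounds developed for Corollary~\ref{cor: non asymptotic bandit}), this reduces the problem to showing
\[
\frac{1}{\sqrt{n}}\sum_{i=1}^n \boldsymbol{\phi}_{\widehat{\rho}}(z_i)\epsilon_i \;\stackrel{d}{\to}\; \mathcal{N}(\boldsymbol{0},\boldsymbol{\Phi}),
\]
after which Slutsky's theorem closes the argument.

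For this final CLT I would split as $\frac{1}{\sqrt{n}}\sum_i \boldsymbol{\phi}_\rho(z_i)\epsilon_i + \frac{1}{\sqrt{n}}\sum_i (\boldsymbol{\phi}_{\widehat{\rho}}(z_i)-\boldsymbol{\phi}_\rho(z_i))\epsilon_i$. The first summand is an i.i.d.\ sum of mean-zero vectors (since Assumption~\ref{assump: instrumental variable} gives $\mathbb{E}[\epsilon_i\mid z_i]=0$) with covariance $\mathbb{E}[\boldsymbol{\phi}_\rho(z_i)\boldsymbol{\phi}_\rho(z_i)^\top\sigma^2(z_i)] = \boldsymbol{\Phi}$, so the multivariate CLT yields the desired limit. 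The hard part will be controlling the second summand. I would regroup it by the discrete values of $z$:
\begin{align*}
\frac{1}{\sqrt{n}}\sum_{z\in\mathcal{Z}} N_z\,\bigl(\boldsymbol{\phi}_{\widehat{\rho}}(z)-\boldsymbol{\phi}_\rho(z)\bigr)\bar{\epsilon}_z, \qquad \bar{\epsilon}_z = \frac{1}{N_z}\sum_{i:z_i=z}\epsilon_i.
\end{align*}
Conditional on $\{z_i\}$, each $\bar\epsilon_z$ has size $O_P(1/\sqrt{N_z})$ by the conditional CLT, and each coordinate of $\boldsymbol{\phi}_{\widehat{\rho}}(z)-\boldsymbol{\phi}_\rho(z)=\sum_a\boldsymbol{\phi}(a)(\widehat{\rho}(a|z)-\rho(a|z))$ is $O_P(1/\sqrt{N_z})$ by the CLT on the multinomial counts. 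Thus each $z$-term is $O_P(N_z/\sqrt{n}\cdot 1/N_z)=O_P(1/\sqrt{n})$, and since $|\mathcal{Z}|=Z$ is finite the total is $o_P(1)$.

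The main obstacle is precisely this last step: making rigorous that the product of two $O_P(1/\sqrt{N_z})$ objects, summed after the $N_z/\sqrt{n}$ weighting, is $o_P(1)$ despite the subtle dependence between $\bar\epsilon_z$ and $\widehat{\rho}(\cdot|z)$ induced by confounding (they both depend on the same indices $\{i:z_i=z\}$). The cleanest route is to bound each coordinate by Cauchy--Schwarz and apply conditional second-moment estimates of the form $\mathbb{E}[\bar\epsilon_z^2\mid\{z_i\}]\leq C/N_z$ and $\mathbb{E}[(\widehat{\rho}(a|z)-\rho(a|z))^2\mid\{z_i\}]\leq 1/N_z$, whose product is $O(1/N_z^2)$; multiplied by $(N_z/\sqrt{n})^2=N_z^2/n$ and summed over $z$, the second moment of the remainder is $O(Z/n)\to 0$, so the remainder vanishes in $L^2$. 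Combining this with the CLT for the leading term and Slutsky's theorem yields the stated asymptotic distribution.
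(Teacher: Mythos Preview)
Your proposal is correct and takes a genuinely different route from the paper. The paper does not give a standalone proof of Corollary~\ref{cor: asymptotic bandit}; it obtains the result as the $\gamma=0$, $S=1$ specialization of Theorem~\ref{thm: asymptotic mdp}, whose proof decomposes $\sqrt{n}(\widehat{\boldsymbol{\theta}}-\boldsymbol{\theta})$ into three pieces, invokes empirical-process machinery (Donsker classes and Lemma~\ref{lem:19.24}) to replace $\widehat{\rho}$ by $\rho$ in the leading empirical-process term, and then uses the delta method on both the map $\rho\mapsto\mathbb{E}[\boldsymbol{f}(X;\rho)]$ and the map $\boldsymbol{d}$ to express the first-stage contribution as an additional linear term in $\boldsymbol{g}(X_i)$. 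In that framework the influence function is $\boldsymbol{h}=\boldsymbol{f}+(\boldsymbol{h}-\boldsymbol{f})$ with a nontrivial first-stage piece, and recovering the specific variance $\boldsymbol{\Sigma}^{-1}\boldsymbol{\Phi}\boldsymbol{\Sigma}^{-1}$ stated in Corollary~\ref{cor: asymptotic bandit} still requires checking that the extra piece cancels the part of $\boldsymbol{f}$ coming from $(\boldsymbol{\phi}(a_i)-\boldsymbol{\phi}_\rho(z_i))^\top\boldsymbol{w}$.

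Your argument sidesteps all of this by exploiting a structural fact specific to the empirical plug-in $\widehat{\rho}$: the exact identity $\sum_{i:z_i=z}\boldsymbol{\phi}(a_i)=N_z\,\boldsymbol{\phi}_{\widehat{\rho}}(z)$ kills the ``feature'' part of the residual outright, leaving $\widehat{\boldsymbol{\tau}}-\widehat{\boldsymbol{\Sigma}}\boldsymbol{w}=\frac{1}{n}\sum_i\boldsymbol{\phi}_{\widehat{\rho}}(z_i)\epsilon_i$ \emph{exactly}. This immediately identifies the influence function as $\boldsymbol{\Sigma}^{-1}\boldsymbol{\phi}_\rho(z_i)\epsilon_i$ (once you show the $\widehat{\rho}\to\rho$ replacement is negligible), whose variance is directly $\boldsymbol{\Sigma}^{-1}\boldsymbol{\Phi}\boldsymbol{\Sigma}^{-1}$. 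The trade-off is clear: the paper's approach is more general (it handles the full MDP with the transition piece $\boldsymbol{A}$, where no such cancellation is available), while yours is shorter, avoids empirical-process theory and the delta method, and yields the stated limiting variance without any further bookkeeping. One small remark: your Cauchy--Schwarz step actually gives an $L^1$ bound of order $Z/\sqrt{n}$ on the remainder (not a second-moment bound of order $Z/n$ as you wrote), but that is already enough for $o_P(1)$.
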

This result establishes the asymptotic normality of the two-stage estimator \eqref{eq: hat V bandit} and allows for statistical inference in the limit sense.
The above two results are direct corollaries of Theorem \ref{thm: non asymptotic mdp} and Theorem \ref{thm: asymptotic mdp} in Section \ref{sec: mdp}, where we develop our investigation on the two-stage estimator for general confounded MDPs.

\section{Confounded Markov Decision Processes}
\label{sec: mdp}
In this section, we study OPE in a general confounded MDP with instrumental variables, i.e., $\gamma\in(0,1)$ and $S>1$.
We propose and analyze a \emph{two-stage estimator} of the value function $V_{\pi^e}$, based on our discussion of confounded bandits in Section \ref{sec: bandit}.
To derive such a two-stage estimator, we first identify $V_{\pi^e}$ via observational data using the tool of instrumental variable.
Before we state the result, to simplify the notation, we define the feature mapping $\boldsymbol{\phi}_{\pi^e}:\mathcal{S}\mapsto\mathbb{R}^d$ as
\begin{align}\label{eq: phi pi e}
    \boldsymbol{\phi}_{\pi^e}(s) := \sum_{a\in\mathcal{A}}\boldsymbol{\phi}(s,a)\pi^e(a|s) = \mathbb{E}_{a\sim\pi^e(\cdot|s)}[\boldsymbol{\phi}(s,a)]\in\mathbb{R}^d.
\end{align}
Then we have the following proposition for identifying the value function.

\begin{prop}[Identification of Policy Value]\label{prop: identification policy value}
    For any policy $\pi^e:\mathcal{S}\mapsto\Delta(\mathcal{A})$, it holds that 
    \begin{align}\label{eq: identification policy value}
        V_{\pi^e}(s)=\boldsymbol{\phi}_{\pi^e}(s)^\top\boldsymbol{\theta},\quad \textnormal{where}\quad \boldsymbol{\theta}:=(\boldsymbol{I}-\gamma \boldsymbol{A})^{-1}\boldsymbol{w},
    \end{align}
    Here the vector $\boldsymbol{w}\in\mathbb{R}^d$ and the matrix $\boldsymbol{A}\in\mathbb{R}^{d\times d}$ are given by  
    \begin{align}
        \boldsymbol{w} = \mathbb{E}\Big[\boldsymbol{\phi}_{\rho}(z_i)\boldsymbol{\phi}_{\rho}(z_i)^\top\Big]^{-1}\mathbb{E}[\boldsymbol{\phi}_{\rho}(z_i)r_i]:=\boldsymbol{\Sigma}^{-1}\boldsymbol{\tau},
    \end{align}
    \begin{align}
        \boldsymbol{A} = \mathbb{E}\Big[\boldsymbol{\phi}_\rho(z_i)\boldsymbol{\phi}_\rho(z_i)^\top\Big]^{-1}\mathbb{E}\Big[\boldsymbol{\phi}_\rho(z_i)\boldsymbol{\phi}_{\pi^e}(s_i^{\prime})^\top\Big]:=\boldsymbol{\Sigma}^{-1}\boldsymbol{B}. 
    \end{align}
\end{prop}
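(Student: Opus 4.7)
My plan is to split the argument into two parts: (i) a Bellman-equation computation that uses only the linear structure in Assumption \ref{assump: linear} to show that $V_{\pi^e}(s)=\boldsymbol{\phi}_{\pi^e}(s)^\top\boldsymbol{\theta}$ for some $\boldsymbol{\theta}$ solving a linear fixed-point equation, and (ii) an IV-based identification step that rewrites the two population quantities appearing in that equation as the observable expressions $\boldsymbol{\Sigma}^{-1}\boldsymbol{\tau}$ and $\boldsymbol{\Sigma}^{-1}\boldsymbol{B}$ stated in the proposition.

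For part (i), first I would plug the linear forms $R(s,a)=\boldsymbol{\phi}(s,a)^\top\boldsymbol{w}$ and $P(s'\mid s,a)=\boldsymbol{\phi}(s,a)^\top\boldsymbol{\nu}(s')$ into the standard Bellman equation
\[
V_{\pi^e}(s)=\sum_{a}\pi^e(a\mid s)\Big[R(s,a)+\gamma\sum_{s'}P(s'\mid s,a)V_{\pi^e}(s')\Big]
\]
(the confounder term vanishes because, during policy evaluation, $\epsilon^{(t)}$ is zero-mean and independent of $(s^{(t)},a^{(t)})$, as stated in Section \ref{subsec: data and eval}). Collecting the action sum into $\boldsymbol{\phi}_{\pi^e}(s)$ and abbreviating $\boldsymbol{M}\coloneqq\sum_{s'}\boldsymbol{\nu}(s')\boldsymbol{\phi}_{\pi^e}(s')^\top$, this becomes $V_{\pi^e}(s)=\boldsymbol{\phi}_{\pi^e}(s)^\top\boldsymbol{w}+\gamma\,\boldsymbol{\phi}_{\pi^e}(s)^\top\sum_{s'}\boldsymbol{\nu}(s')V_{\pi^e}(s')$, and the ansatz $V_{\pi^e}(s)=\boldsymbol{\phi}_{\pi^e}(s)^\top\boldsymbol{\theta}$ closes self-consistently into the linear fixed point $\boldsymbol{\theta}=\boldsymbol{w}+\gamma\boldsymbol{M}\boldsymbol{\theta}$, i.e.\ $\boldsymbol{\theta}=(\boldsymbol{I}-\gamma\boldsymbol{M})^{-1}\boldsymbol{w}$ once invertibility of $\boldsymbol{I}-\gamma\boldsymbol{M}$ is established from $\gamma<1$ and the operator-norm bound on $\boldsymbol{M}$ implicit in Assumption \ref{assump: linear}. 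Uniqueness of $V_{\pi^e}$ as the fixed point of the $\gamma$-contraction Bellman operator ensures that this ansatz indeed captures the true value function.

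For part (ii), identifying $\boldsymbol{w}$ with $\boldsymbol{\Sigma}^{-1}\boldsymbol{\tau}$ is just the bandit computation already done in Section \ref{sec: bandit}: Assumption \ref{assump: instrumental variable} gives $\mathbb{E}[r_i\mid z_i]=\boldsymbol{\phi}_\rho(z_i)^\top\boldsymbol{w}$, multiplying by $\boldsymbol{\phi}_\rho(z_i)$ and taking expectation yields $\boldsymbol{\tau}=\boldsymbol{\Sigma}\boldsymbol{w}$, and Assumption \ref{assump: coverage} inverts to $\boldsymbol{w}=\boldsymbol{\Sigma}^{-1}\boldsymbol{\tau}$. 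Identifying $\boldsymbol{M}$ with $\boldsymbol{A}=\boldsymbol{\Sigma}^{-1}\boldsymbol{B}$ is the substantive new step. The key observation is that the data-generation conditional independence $s_i'\perp z_i\mid (s_i,a_i)$, combined with the linear transition kernel, gives $\mathbb{E}[\boldsymbol{\phi}_{\pi^e}(s_i')\mid s_i,a_i]=\sum_{s'}P(s'\mid s_i,a_i)\boldsymbol{\phi}_{\pi^e}(s')=\boldsymbol{M}^\top\boldsymbol{\phi}(s_i,a_i)$; iterating expectations then produces $\mathbb{E}[\boldsymbol{\phi}_{\pi^e}(s_i')\mid z_i]=\boldsymbol{M}^\top\boldsymbol{\phi}_\rho(z_i)$; finally multiplying this identity by $\boldsymbol{\phi}_\rho(z_i)$ and taking expectation yields $\boldsymbol{B}=\boldsymbol{\Sigma}\boldsymbol{M}$, hence $\boldsymbol{A}=\boldsymbol{\Sigma}^{-1}\boldsymbol{B}=\boldsymbol{M}$.

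The main obstacle I anticipate is the transpose bookkeeping: the matrix $\boldsymbol{M}$ appears as a left multiplier of $\boldsymbol{\theta}$ in the Bellman fixed point, but it is most naturally computed as $\boldsymbol{M}^\top$ on the right of $\boldsymbol{\phi}(s_i,a_i)$ when reading off the conditional expectation of the next-state feature, so one has to verify carefully that the two appearances are compatible and that the identity $\boldsymbol{A}=\boldsymbol{M}$ really holds as an equality of $d\times d$ matrices rather than of their transposes. Once that is pinned down, substituting $\boldsymbol{w}=\boldsymbol{\Sigma}^{-1}\boldsymbol{\tau}$ and $\boldsymbol{A}=\boldsymbol{\Sigma}^{-1}\boldsymbol{B}$ into $\boldsymbol{\theta}=(\boldsymbol{I}-\gamma\boldsymbol{A})^{-1}\boldsymbol{w}$ and evaluating at an arbitrary $s$ completes the identification claim \eqref{eq: identification policy value}.
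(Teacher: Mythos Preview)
Your proposal is correct and follows essentially the same route as the paper: the Bellman computation leading to $\boldsymbol{\theta}=(\boldsymbol{I}-\gamma\boldsymbol{M})^{-1}\boldsymbol{w}$ with $\boldsymbol{M}=\sum_{s'}\boldsymbol{\nu}(s')\boldsymbol{\phi}_{\pi^e}(s')^\top$ is exactly the paper's derivation (the paper introduces an auxiliary vector $\boldsymbol{\beta}=\sum_{s'}\boldsymbol{\nu}(s')V_{\pi^e}(s')$ rather than making the ansatz directly, but this is cosmetic), and your IV identification of $\boldsymbol{w}$ and $\boldsymbol{A}$ is precisely what the paper sketches when it writes ``using the same technique as in Section~\ref{sec: bandit}.'' Your transpose bookkeeping is correct, and in fact you give more detail on the $\boldsymbol{A}=\boldsymbol{M}$ step than the paper does.
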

\begin{proof}[Proof of Proposition \ref{prop: identification policy value}]
    See Appendix \ref{subsec: proof identification policy value} for a detailed proof.
\end{proof}
As we show in the proof of Proposition \ref{prop: identification policy value}, under the normalization assumption in Assumption \ref{assump: linear}, the matrix $\boldsymbol{I} - \gamma\boldsymbol{A}$ is invertible for any $\gamma\in(0,1)$.
Given the identification formula \eqref{eq: V mdp}, we propose to estimate the value function $V_{\pi^e}$ via estimating the parameter $\boldsymbol{\theta}$ using the offline dataset $\mathbb{D}$.

\subsection{A Two-stage Estimator for Confounded MDPs}
Now we introduce the two-stage estimator for estimating the parameter $\boldsymbol{\theta}$.
On the first stage, we estimate the conditional feature $\boldsymbol{\phi}_{\rho}$ in a non-parametric manner.
Specifically, we estimate the conditional probabiltiy $\rho(s,a|z) = \mathbb{P}(s_i=s, a_i=a|z_i = z)$ via an empirical average on the data, i.e., 
\begin{align}\label{eq: hat rho mdp}
    \widehat{\rho}(s,a|z)= \frac{\sum_{i=1}^n\mathbbm{1}\{s_i=s,a_i=a,z_i=z\}}{\max\{\sum_{i=1}^n\mathbbm{1}\{z_i=z\}, 1\}},\quad \forall (s,a,z)\in\mathcal{S}\times\mathcal{A}\times\mathcal{Z}.
\end{align}
Then we plug $\widehat{\rho}$ into $\boldsymbol{\phi}_{\rho}$ and obtain to the estimate of the conditional expectation $\boldsymbol{\phi}_{\rho}(z)$ as 
\begin{align}\label{eq: phi hat rho mdp}
    \boldsymbol{\phi}_{\widehat{\rho}}(z) = \sum_{(s,a)\in\SM\times\AM}\boldsymbol{\phi}(s,a)\widehat{\rho}(s,a|z),\quad \forall z\in\mathcal{Z}.
\end{align}
On the second stage, we estimate the vector $\boldsymbol{w}$ and the matrix $\boldsymbol{A}$ using the estimated $\boldsymbol{\phi}_{\widehat{\rho}}$ via
\begin{align}\label{eq: hat alpha A}
    \widehat{\boldsymbol{w}} = \widehat{\boldsymbol{\Sigma}}^{-1}\widehat{\boldsymbol{\tau}},\quad\text{and}\quad \widehat{\boldsymbol{A}} = \widehat{\boldsymbol{\Sigma}}^{-1}\widehat{\boldsymbol{B}},
\end{align}
where the vector $\widehat{\boldsymbol{\tau}}$ and the matices $\widehat{\boldsymbol{B}}$ and $\widehat{\boldsymbol{\Sigma}}$ are defined as
\begin{align}\label{eq: hat tau B Sigma}
    \widehat{\boldsymbol{\tau}} = \frac{1}{n}\sum_{i=1}^n\boldsymbol{\phi}_{\widehat{\rho}}(z_i)r_i,\quad \widehat{\boldsymbol{B}} = \frac{1}{n}\sum_{i=1}^n\boldsymbol{\phi}_{\widehat{\rho}}(z_i)\boldsymbol{\phi}_{\pi^e}(s_i^{\prime})^\top,\quad \widehat{\boldsymbol{\Sigma}} = \frac{1}{n}\sum_{i=1}^n\boldsymbol{\phi}_{\widehat{\rho}}(z_i)\boldsymbol{\phi}_{\widehat{\rho}}(z_i)^\top,
\end{align}
As is mentioned in Section \ref{sec: bandit}, in practise one can consider a ridge-regression-style variant of $\widehat{\boldsymbol{w}}$ and $\widehat{\boldsymbol{A}}$, where a regularization term of $\lambda \boldsymbol{I}$ is involved in $\widehat{\boldsymbol{\Sigma}}$.
For simplicity we still focus on the estimator given by \eqref{eq: hat alpha A} in this work.
Finally, by plugging this estimator \eqref{eq: hat alpha A} into \eqref{eq: identification policy value}, we can derive our estimate of the value function of policy $\pi^e$ as, for any $s\in\mathcal{S}$, 
\begin{align}\label{eq: hat V mdp}
    \widehat{V}_{\pi^e}(s)= \boldsymbol{\phi}_{\pi^e}(s)^\top\widehat{\boldsymbol{\theta}},\quad\text{where $\widehat{\boldsymbol{\theta}} = (\boldsymbol{I} - \gamma\widehat{\boldsymbol{A}})^{-1}\widehat{\boldsymbol{w}}$ and $\widehat{\boldsymbol{w}}$, $\widehat{\boldsymbol{A}}$ are defined in \eqref{eq: hat alpha A}.}
\end{align}
Under Example \ref{example: mdp} and $z_i = (s_i,a_i)$, our solution \eqref{eq: hat V mdp} coincides with the well-known solution of \emph{least square temporal difference Q-learning} (LSTDQ \citep{lagoudakis2003least}) for the standard MDPs.
Our two-stage value function estimator extends such a classic estimator to confounded MDPs with IV.

In the coming two Sections, we establish the statistical properties of the two-stage estimator of the value function.
We give non-asymptotic analysis to the estimator \eqref{eq: hat V mdp} in Section \ref{subsec: non asymptotic mdp}, and we give the asymptotic distribution of \eqref{eq: hat V mdp} in Section \ref{sec: mdp}.
Detailed proofs of our theoretical results are in Appendix \ref{sec: proof mdp}.

\subsection{Non-asymptotic Analysis}\label{subsec: non asymptotic mdp}
In this section, we establish the non-asymptotic convergence rate of the two-stage estimator \eqref{eq: hat V mdp}. 
To state our result, we first define the visitation measure $d_{\pi^e,\underline{s}}\in\Delta(\mathcal{S}\times\mathcal{A})$ of $\pi^e$ as the cumulative probability that the agent visits $(s,a)\in\mathcal{S}\times\mathcal{A}$ when starting from $\underline{s}\in\mathcal{S}$. That is,
\begin{align*}
    d_{\pi^e,\underline{s}}(s,a) = (1-\gamma)\sum_{t=1}^{+\infty}\gamma^{t-1}\mathbb{P}_{\pi^e}(s^{(t)} = s, a^{(t)} = a | s^{(1)} = \underline{s}), \quad \forall (s,a)\in\mathcal{S}\times\mathcal{A}.
\end{align*}
Here $\mathbb{P}_{\pi^e}$ is defined as similarly as $\mathbb{E}_{\pi^e}$ in Section \ref{subsec: data and eval}.
Our main result is the following theorem.

\begin{thm}[Non-asymptotic Analysis: Confounded MDP]\label{thm: non asymptotic mdp}
    Under Assumptions \ref{assump: linear}, \ref{assump: instrumental variable}, and \ref{assump: coverage},  the value function estimator \eqref{eq: hat V mdp} of $\pi^e$ satisfies that, with probability at least $1-\delta$,
\begin{align}\label{eq: non asymptotic mdp}
    \big|\widehat{V}_{\pi^e}(s_0) - V_{\pi^e}(s_0)\big|\leq \widetilde{\mathcal{O}}\Bigg(\sqrt{\frac{\mathbb{E}_{(s,a)\sim d_{\pi^e,s_0}}\left[\|\boldsymbol{\phi}(s,a)\|^2_{\boldsymbol{\Sigma}^{-1}}\right]d}{\underline{\sigma}(1-\gamma)^4n}}\,\Bigg).
\end{align}
for any initial state $s_0\in\mathcal{S}$, where $\widetilde{\mathcal{O}}(\cdot)$ hides universal constants, logarithm factors, and higher-order terms.
\end{thm}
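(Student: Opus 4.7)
\textbf{Proof Plan for Theorem \ref{thm: non asymptotic mdp}.}

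The plan is to start from the exact algebraic identity
\begin{align*}
\widehat{V}_{\pi^e}(s_0) - V_{\pi^e}(s_0)
= \boldsymbol{\phi}_{\pi^e}(s_0)^\top(\widehat{\boldsymbol{\theta}} - \boldsymbol{\theta})
= \boldsymbol{\phi}_{\pi^e}(s_0)^\top(\boldsymbol{I}-\gamma\widehat{\boldsymbol{A}})^{-1}\!\left\{(\widehat{\boldsymbol{w}}-\boldsymbol{w}) + \gamma(\widehat{\boldsymbol{A}}-\boldsymbol{A})\boldsymbol{\theta}\right\},
\end{align*}
which follows from $\widehat{\boldsymbol{\theta}}=(\boldsymbol{I}-\gamma\widehat{\boldsymbol{A}})^{-1}\widehat{\boldsymbol{w}}$ and the resolvent identity. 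I would then replace $(\boldsymbol{I}-\gamma\widehat{\boldsymbol{A}})^{-1}$ by $(\boldsymbol{I}-\gamma\boldsymbol{A})^{-1}$, picking up a higher-order remainder controlled by $\|\widehat{\boldsymbol{A}}-\boldsymbol{A}\|$, which can be swept into the $\widetilde{\mathcal{O}}$ once $n$ is large enough. This reduces the analysis to bounding $\boldsymbol{\psi}_0^\top(\widehat{\boldsymbol{w}}-\boldsymbol{w})$ and $\gamma\boldsymbol{\psi}_0^\top(\widehat{\boldsymbol{A}}-\boldsymbol{A})\boldsymbol{\theta}$, where $\boldsymbol{\psi}_0^\top := \boldsymbol{\phi}_{\pi^e}(s_0)^\top(\boldsymbol{I}-\gamma\boldsymbol{A})^{-1}$.

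The next step is to interpret $\boldsymbol{\psi}_0$ via the visitation measure. Using the proof of Proposition \ref{prop: identification policy value} one sees $\boldsymbol{A}=\sum_{s'}\boldsymbol{\nu}(s')\boldsymbol{\phi}_{\pi^e}(s')^\top$, from which
\begin{align*}
\boldsymbol{\phi}_{\pi^e}(s)^\top \boldsymbol{A}^{t} = \mathbb{E}_{\pi^e}\!\left[\boldsymbol{\phi}(s^{(t+1)},a^{(t+1)})^\top \,\middle|\, s^{(1)}=s\right],
\end{align*}
and summing the Neumann series yields the key identity $\boldsymbol{\psi}_0^\top = (1-\gamma)^{-1}\,\mathbb{E}_{(s,a)\sim d_{\pi^e,s_0}}[\boldsymbol{\phi}(s,a)^\top]$. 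By Cauchy--Schwarz in the $\boldsymbol{\Sigma}^{-1}$-geometry and Jensen's inequality, this gives $\|\boldsymbol{\psi}_0\|_{\boldsymbol{\Sigma}^{-1}}^2 \le (1-\gamma)^{-2}\mathbb{E}_{d_{\pi^e,s_0}}[\|\boldsymbol{\phi}(s,a)\|_{\boldsymbol{\Sigma}^{-1}}^2]$, which is the first $(1-\gamma)^{-2}$ factor; the bound $\|\boldsymbol{\theta}\|_2\le (1-\gamma)^{-1}$ (which follows from Assumption \ref{assump: linear} together with $\|\boldsymbol{A}\|\le 1$) then contributes another $(1-\gamma)^{-1}$ inside the second term, producing, after squaring, the $(1-\gamma)^{-4}$ factor in the theorem.

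It then remains to control $\|\widehat{\boldsymbol{w}}-\boldsymbol{w}\|_{\boldsymbol{\Sigma}}$ and $\|(\widehat{\boldsymbol{A}}-\boldsymbol{A})\boldsymbol{\theta}\|_{\boldsymbol{\Sigma}}$ at the rate $\widetilde{\mathcal{O}}(\sqrt{d/(\underline{\sigma} n)})$. Using the identity $\mathbb{E}[\boldsymbol{\phi}_\rho(z_i)r_i]=\boldsymbol{\Sigma}\boldsymbol{w}$ and $\mathbb{E}[\boldsymbol{\phi}_\rho(z_i)\boldsymbol{\phi}_{\pi^e}(s_i')^\top]=\boldsymbol{\Sigma}\boldsymbol{A}$, I rewrite
\begin{align*}
\widehat{\boldsymbol{w}}-\boldsymbol{w} = \widehat{\boldsymbol{\Sigma}}^{-1}\!\left(\widehat{\boldsymbol{\tau}}-\widehat{\boldsymbol{\Sigma}}\boldsymbol{w}\right),\qquad (\widehat{\boldsymbol{A}}-\boldsymbol{A})\boldsymbol{\theta} = \widehat{\boldsymbol{\Sigma}}^{-1}\!\left(\widehat{\boldsymbol{B}}\boldsymbol{\theta}-\widehat{\boldsymbol{\Sigma}}\boldsymbol{A}\boldsymbol{\theta}\right),
\end{align*}
and expand each numerator by adding and subtracting $\boldsymbol{\phi}_\rho$ in place of $\boldsymbol{\phi}_{\widehat{\rho}}$. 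This splits the error into (a) a clean empirical-process part involving $\boldsymbol{\phi}_\rho$ and the mean-zero residuals $\epsilon_i$ and $\boldsymbol{\phi}_{\pi^e}(s_i')-\boldsymbol{\phi}_\rho(z_i)^\top\boldsymbol{A}$, controllable by matrix Bernstein / vector Hoeffding at the rate $\sqrt{d/n}$, and (b) a plug-in part involving $\boldsymbol{\phi}_{\widehat{\rho}}(z)-\boldsymbol{\phi}_\rho(z)$. For (b) I would condition on $\{z_i\}$ and apply Hoeffding uniformly over the finite set $\mathcal{Z}$ to get $\|\boldsymbol{\phi}_{\widehat{\rho}}(z)-\boldsymbol{\phi}_\rho(z)\|_2 = \widetilde{\mathcal{O}}(1/\sqrt{n\,p(z)})$, then average against the empirical distribution of $z_i$. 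Finally, a separate matrix-Bernstein bound gives $\|\widehat{\boldsymbol{\Sigma}}-\boldsymbol{\Sigma}\|\le \underline{\sigma}/2$ with high probability, so $\widehat{\boldsymbol{\Sigma}}^{-1}$ is well-defined and comparable to $\boldsymbol{\Sigma}^{-1}$.

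The main obstacle is the careful bookkeeping of the two-stage estimation error: all of $\widehat{\boldsymbol{\Sigma}}$, $\widehat{\boldsymbol{\tau}}$, $\widehat{\boldsymbol{B}}$ are built from the \emph{same} $\widehat{\rho}$, so the plug-in contributions are correlated across terms and, moreover, the quality of $\widehat{\rho}(\cdot|z)$ degrades when $p(z)$ is small. The key technical challenge is to show that the error from substituting $\boldsymbol{\phi}_{\widehat{\rho}}$ for $\boldsymbol{\phi}_\rho$ does not degrade the rate beyond $\widetilde{\mathcal{O}}(\sqrt{d/n})$, which requires aggregating the per-$z$ Hoeffding bounds through the weights $p(z)$ in a way that cancels the $1/\sqrt{p(z)}$ factor---ultimately contributing only to the logarithmic and higher-order terms hidden in $\widetilde{\mathcal{O}}$. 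Once these pieces are in place, combining with the $(1-\gamma)^{-1}$ factors from $\boldsymbol{\psi}_0$ and $\boldsymbol{\theta}$ and with the Cauchy--Schwarz bound involving $\mathbb{E}_{d_{\pi^e,s_0}}[\|\boldsymbol{\phi}(s,a)\|_{\boldsymbol{\Sigma}^{-1}}^2]$ yields the claimed rate.
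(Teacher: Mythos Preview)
Your proposal is correct and follows essentially the same route as the paper: the same algebraic decomposition of $\widehat{\boldsymbol{\theta}}-\boldsymbol{\theta}$ into a $(\widehat{\boldsymbol{w}}-\boldsymbol{w})$ piece and a $(\widehat{\boldsymbol{A}}-\boldsymbol{A})$ piece, the same split of each residual into a ``clean'' $\boldsymbol{\phi}_\rho$-part handled by Bernstein/Hoeffding and a ``plug-in'' $\boldsymbol{\phi}_{\widehat{\rho}}-\boldsymbol{\phi}_\rho$ part handled by per-$z$ concentration (exactly Lemma~\ref{lem:concentration_empirical_feature}), and the same matrix-Bernstein control of $\widehat{\boldsymbol{\Sigma}}$ (Lemma~\ref{lem:cov_estimation}). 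The only cosmetic difference is that the paper obtains the prefactor $(1-\gamma)^{-1}\mathbb{E}_{(s,a)\sim d_{\pi^e,s_0}}[\boldsymbol{\phi}(s,a)^\top]$ by invoking the evaluation-error lemma (Lemma~\ref{lem: evaluation error}) and then keeps $(\boldsymbol{I}-\gamma\widehat{\boldsymbol{A}})^{-1}\widehat{\boldsymbol{w}}=\widehat{\boldsymbol{\theta}}$ inside the residual, whereas you derive the same prefactor directly from the Neumann series for $(\boldsymbol{I}-\gamma\boldsymbol{A})^{-1}$ and carry $\boldsymbol{\theta}$ instead---both yield the same $(1-\gamma)^{-2}$ and $(1-\gamma)^{-1}$ factors.
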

\begin{proof}[Proof of Theorem \ref{thm: non asymptotic mdp}]
    See Appendix \ref{subsec: proof non asymptotic mdp} for a detailed proof.
\end{proof}
By Theorem \ref{thm: non asymptotic mdp}, the non-asymptotic upper bound \eqref{eq: non asymptotic mdp} depends on the term $\mathbb{E}_{(s,a)\sim d_{\pi^e,s_0}}[\|\boldsymbol{\phi}(s,a)\|^2_{\boldsymbol{\Sigma}^{-1}}]$.
This term characterizes how the offline dataset $\mathbb{D}$ covers the feature $\boldsymbol{\phi}(s,a)$ when averaged on the trajectories induced by $\pi^e$ starting from $s_0$, i.e., $(s,a)\sim d_{\pi^e,s_0}$.
With the same arguments as in \eqref{eq: cover bandit bound}, we can show that this factor is upper bouneded by a factor of $\underline{\sigma}^{-1}$.

Therefore by Theorem \ref{thm: non asymptotic mdp}, the two-stage estimator \eqref{eq: hat V mdp} enjoys a $\widetilde{\mathcal{O}}(\sqrt{d/\underline{\sigma}^2(1-\gamma)^4n})$ statistical rate.
To ensure an $\epsilon$-accuracy, the number of samples needed is approximately 
\begin{align}\label{eq: sample complexity}
    n=\widetilde{\mathcal{O}}\left(\frac{d}{\underline{\sigma}^2(1-\gamma)^4\epsilon^2}\right).
\end{align}
According to \eqref{eq: non asymptotic mdp}, up to logarithm factors and higher order terms on $n$, the statistical rate does not suffer from the issue of confoundedness. 
As is shown in the proof of Theorem \ref{thm: non asymptotic mdp}, the dependence on the size of instrumental variable space $\mathcal{Z}$ only appears in the logarithm factors and higher order terms.

\subsection{Asymptotic Analysis}
In this section, we establish the asymptotic distribution for the two-stage estimator \eqref{eq: hat V mdp}.
In order to present the result, in the sequel, we denote the vector of value function as $\boldsymbol{V}_{\pi^e}=(V_{\pi^e}(s))_{s\in\mathcal{S}}\in\mathbb{R}^S$.
Similarly, we define the vector $\widehat{\boldsymbol{V}}_{\pi^e} = (\widehat{V}_{\pi^e}(s))_{s\in\mathcal{S}}\in\mathbb{R}^S$ where $\widehat{V}_{\pi^e}(s)$ is given in \eqref{eq: hat V mdp}.
Besides, we consider the conditional expectation $\rho(s,a,|z)$ as an element in the space $\mathbb{R}^{S\times A\times Z}$.
Our main result is the following theorem.

\begin{thm}[Asymptotic Analysis: Confounded MDP]\label{thm: asymptotic mdp}
    Under Assumptions \ref{assump: linear}, \ref{assump: instrumental variable}, and \ref{assump: coverage}, by denoting $X_i:=(s_i, a_i, z_i, r_i, s_i^{\prime})\in\mathcal{X}$, it holds that the value function estimator \eqref{eq: hat V mdp} of $\pi^e$ is asymptotically linear, i.e.,
    \begin{align}\label{eq: asymptotically linear}
        \sqrt{n}\cdot\big(\widehat{\boldsymbol{V}}_{\pi^e}-\boldsymbol{V}_{\pi^e}\big)= \frac{1}{\sqrt{n}}\sum_{i=1}^n \boldsymbol{\Phi}_{\pi^e}^\top\big(\boldsymbol{h}(X_i;\rho)-\EB[ \boldsymbol{h}(X_i;\rho)]\big)+o_P(1),
    \end{align}
    where the matrix $\boldsymbol{\Phi}_{\pi^e}\in\RB^{d\times S}$ is defined as $\boldsymbol{\Phi}_{\pi^e} = (\boldsymbol{\phi}_{\pi^e}(s))_{s\in\mathcal{S}}$, and the asymptotic linear estimator $\boldsymbol{h}(X_i;\rho):\mathcal{X}\times\mathbb{R}^{S\times A\times Z}\mapsto\mathbb{R}^d$ is defined as 
    \begin{align}
        \boldsymbol{h}(X_i;\rho):=\boldsymbol{f}(X_i;\rho)+\nabla_{\rho} \EB [\boldsymbol{f}(X_i;\rho)] \,\nabla_{\EB [\boldsymbol{g}(X_i)]}\boldsymbol{d}(\EB [\boldsymbol{g}(X_i)]) \, \boldsymbol{g}(X_i).
    \end{align}
    Here $\boldsymbol{f}(X_i;\rho):\mathcal{X}\times\mathbb{R}^{S\times A\times Z}\mapsto\mathbb{R}^d$, $\boldsymbol{g}:\mathcal{X}\mapsto\mathbb{R}^{S\times A\times Z}$, and $\boldsymbol{d}:\mathbb{R}^{S\times A\times Z}\mapsto\mathbb{R}^{S\times A\times Z}$ are defined as
    \begin{align}
        \boldsymbol{f}(X_i;\rho):=\left(\boldsymbol{I}-\gamma \boldsymbol{A}\right)^{-1}\boldsymbol{\Sigma}^{-1}\boldsymbol{\phi}_{\rho}(z_i)\left(r_i+\gamma\boldsymbol{\phi}_{\pi^e}(s_i')^\top\boldsymbol{\theta}-\boldsymbol{\phi}_{\rho}(z_i)^\top\boldsymbol{\theta}\right),
    \end{align}
    \begin{align}
        \boldsymbol{g}^{(s,a,z)}(X_i):=\mathbbm{1}\{s_i=s, a_i=a, z_i=z\},\quad
        \boldsymbol{d}^{(s,a,z)}(\boldsymbol{x}):=\frac{\boldsymbol{x}^{(s,a,z)}}{\sum_{(s',a')\in\mathcal{S}\times\mathcal{A}}\boldsymbol{x}^{(s',a',z)}}, 
    \end{align}
    for any $X_i=(s_i,a_i,z_i,r_i,s_i^{\prime})\in\mathcal{X}$ and $\boldsymbol{x}\in\mathbb{R}^{S\times A\times Z}$ respectively.
\end{thm}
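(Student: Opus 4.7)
The plan is to derive an asymptotically linear expansion of $\widehat{\boldsymbol{\theta}} - \boldsymbol{\theta}$ via a two-stage M-estimator argument that isolates the second-stage contribution from the first-stage plug-in of $\widehat{\rho}$, then lift the expansion to $\widehat{\boldsymbol{V}}_{\pi^e}$. By Proposition \ref{prop: identification policy value}, $\boldsymbol{V}_{\pi^e} = \boldsymbol{\Phi}_{\pi^e}^\top \boldsymbol{\theta}$ and $\widehat{\boldsymbol{V}}_{\pi^e} = \boldsymbol{\Phi}_{\pi^e}^\top \widehat{\boldsymbol{\theta}}$, so it suffices to establish
\begin{align*}
\sqrt{n}\,(\widehat{\boldsymbol{\theta}} - \boldsymbol{\theta}) = \frac{1}{\sqrt{n}}\sum_{i=1}^n \big(\boldsymbol{h}(X_i;\rho) - \EB[\boldsymbol{h}(X_i;\rho)]\big) + o_P(1),
\end{align*}
after which the claimed representation follows by left-multiplying by $\boldsymbol{\Phi}_{\pi^e}^\top$.

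The first major step is to view $\widehat{\boldsymbol{\theta}}$ as the root of an empirical moment equation. Define the score
$$\psi(X;\boldsymbol{\theta},\rho):=\boldsymbol{\phi}_\rho(z)\big[r + \gamma\boldsymbol{\phi}_{\pi^e}(s')^\top\boldsymbol{\theta} - \boldsymbol{\phi}_\rho(z)^\top\boldsymbol{\theta}\big].$$
Proposition \ref{prop: identification policy value} gives $\EB[\psi(X;\boldsymbol{\theta},\rho)]=0$ at the true $(\boldsymbol{\theta},\rho)$, while the definitions of $\widehat{\boldsymbol{w}},\widehat{\boldsymbol{A}}$ in \eqref{eq: hat alpha A} unwind to $\frac{1}{n}\sum_i\psi(X_i;\widehat{\boldsymbol{\theta}},\widehat{\rho})=0$. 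I would Taylor-expand this empirical identity around $(\boldsymbol{\theta},\rho)$. A direct calculation gives $\EB[\nabla_\theta\psi] = \gamma\boldsymbol{B}-\boldsymbol{\Sigma} = -\boldsymbol{\Sigma}(\boldsymbol{I}-\gamma\boldsymbol{A})$, whose inverse $-(\boldsymbol{I}-\gamma\boldsymbol{A})^{-1}\boldsymbol{\Sigma}^{-1}$ is exactly the prefactor defining $\boldsymbol{f}(X;\rho)$. Solving for $\widehat{\boldsymbol{\theta}}-\boldsymbol{\theta}$ and using $\EB[\psi(X;\boldsymbol{\theta},\rho)]=0$ to absorb the prefactor past the $\nabla_\rho$ operator (the product-rule terms that differentiate the prefactor vanish against the zero moment) yields
\begin{align*}
\widehat{\boldsymbol{\theta}}-\boldsymbol{\theta} = \frac{1}{n}\sum_i \boldsymbol{f}(X_i;\rho) + \nabla_\rho\EB[\boldsymbol{f}(X;\rho)]\,(\widehat{\rho}-\rho) + o_P(n^{-1/2}).
\end{align*}

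The second major step processes the first-stage perturbation $\widehat{\rho}-\rho$. Writing $\widehat{\rho} = \boldsymbol{d}\big(\tfrac{1}{n}\sum_i \boldsymbol{g}(X_i)\big)$ with $\boldsymbol{g}$ the joint occurrence indicator and $\boldsymbol{d}$ normalizing by the marginal on $z$, the delta method applied to $\boldsymbol{d}$ at $\EB[\boldsymbol{g}(X_i)]$ gives
\begin{align*}
\widehat{\rho}-\rho = \nabla_{\EB[\boldsymbol{g}(X_i)]}\boldsymbol{d}\big(\EB[\boldsymbol{g}(X_i)]\big)\cdot \frac{1}{n}\sum_i\big(\boldsymbol{g}(X_i)-\EB[\boldsymbol{g}(X_i)]\big) + o_P(n^{-1/2}).
\end{align*}
Smoothness of $\boldsymbol{d}$ at the population value is legitimate because the denominator $\sum_{(s,a)}\EB[\boldsymbol{g}^{(s,a,z)}(X_i)] = p(z)$ is strictly positive for every $z$ that contributes to $\boldsymbol{\Sigma}$, a condition guaranteed by Assumption \ref{assump: coverage} (otherwise $\boldsymbol{\Sigma}$ would be singular). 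Substituting this into the preceding display and observing that $\EB[\boldsymbol{f}(X;\rho)]=0$ while the additive shift $\EB[\boldsymbol{g}(X_i)]$ is absorbed into $\EB[\boldsymbol{h}]$, one recognizes the integrand as $\boldsymbol{h}(X_i;\rho)-\EB[\boldsymbol{h}(X_i;\rho)]$, completing the derivation.

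The main obstacle is controlling the stochastic remainder in the Taylor expansion rigorously. One must verify that (i) the second-order terms in $(\widehat{\boldsymbol{\theta}}-\boldsymbol{\theta})^{\otimes 2}$, $(\widehat{\rho}-\rho)^{\otimes 2}$ and their cross-product are $o_P(n^{-1/2})$; (ii) the empirical Jacobians $\widehat{\EB}[\nabla_\theta\psi]$ and $\widehat{\EB}[\nabla_\rho\psi]$ may be replaced by their population counterparts with $O_P(n^{-1/2})$ error; and (iii) a preliminary $\sqrt{n}$-consistency $\widehat{\boldsymbol{\theta}}-\boldsymbol{\theta}=O_P(n^{-1/2})$ holds so that the Taylor expansion is valid. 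Item (iii) can be bootstrapped from Theorem \ref{thm: non asymptotic mdp}; items (i)--(ii) reduce to uniform consistency of $\widehat{\rho}$ over the discrete space $\mathcal{S}\times\mathcal{A}\times\mathcal{Z}$, which follows from standard empirical-process bounds for indicator classes combined with the boundedness of features and rewards guaranteed by Assumption \ref{assump: linear}. Left-multiplication by $\boldsymbol{\Phi}_{\pi^e}^\top$ then produces the stated asymptotically linear representation for $\sqrt{n}(\widehat{\boldsymbol{V}}_{\pi^e}-\boldsymbol{V}_{\pi^e})$.
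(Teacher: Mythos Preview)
Your proposal is correct and reaches the same conclusion, but the route differs from the paper's in a meaningful way. You cast $\widehat{\boldsymbol{\theta}}$ as the root of the empirical estimating equation $\frac{1}{n}\sum_i\psi(X_i;\widehat{\boldsymbol{\theta}},\widehat{\rho})=0$ and Taylor-expand jointly in $(\boldsymbol{\theta},\rho)$, invoking the non-asymptotic Theorem~\ref{thm: non asymptotic mdp} for the preliminary $\sqrt{n}$-rate and exploiting that $\psi$ is linear in $\boldsymbol{\theta}$ and quadratic in $\rho$ to control the remainder. The paper instead writes $\widehat{\boldsymbol{\theta}}=\widehat{\boldsymbol{G}}^{-1}\widehat{\boldsymbol{\tau}}$ with $\boldsymbol{G}=\boldsymbol{\Sigma}(\boldsymbol{I}-\gamma\boldsymbol{A})$ and decomposes $\sqrt{n}(\widehat{\boldsymbol{\theta}}-\boldsymbol{\theta})$ algebraically into $\boldsymbol{\Delta}_1=\sqrt{n}\,\boldsymbol{G}^{-1}(\widehat{\boldsymbol{\tau}}-\boldsymbol{\tau})$, $\boldsymbol{\Delta}_2=\sqrt{n}\,(\widehat{\boldsymbol{G}}^{-1}-\boldsymbol{G}^{-1})\boldsymbol{\tau}$, and a cross term $\boldsymbol{\Delta}_3=o_P(1)$; each of $\boldsymbol{\Delta}_1,\boldsymbol{\Delta}_2$ is then linearized by swapping $\widehat{\rho}$ for $\rho$ inside the empirical process via a Donsker-class argument (Lemma~19.24 of \citet{van2000asymptotic}), after which the same delta method on $\widehat{\rho}=\boldsymbol{d}(\EB_n[\boldsymbol{g}])$ is applied. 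Your Z-estimator framing is cleaner and more modular---it is the standard two-step semiparametric template and makes transparent why the prefactor $(\boldsymbol{I}-\gamma\boldsymbol{A})^{-1}\boldsymbol{\Sigma}^{-1}$ may be held fixed in $\nabla_\rho\EB[\boldsymbol{f}]$ (your product-rule observation that the extra terms vanish against $\EB[\psi]=0$ is exactly what justifies the paper's remark after the theorem). The paper's decomposition, on the other hand, is self-contained and does not borrow the $\sqrt{n}$-rate from the non-asymptotic theorem; it obtains the needed consistency directly from $\|\widehat{\boldsymbol{G}}-\boldsymbol{G}\|=o_P(1)$ and the Donsker swap, so the asymptotic and non-asymptotic results are logically independent there.
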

\begin{proof}[Proof of Theorem \ref{thm: asymptotic mdp}]
    See Appendix \ref{subsec: proof asymptotic mdp} for a detailed proof.
\end{proof}

In Theorem \ref{thm: asymptotic mdp}, there is a partial derivative $\nabla_{\rho}\EB [\boldsymbol{f}(X_i;\rho)]$ in the asymptotic linear estimator $\boldsymbol{h}(X_i;\rho)$. 
In this derivative, we let the factor $(\boldsymbol{I}-\gamma \boldsymbol{A})^{-1}\boldsymbol{\Sigma}^{-1}$ stay fixed, even this factor implicitly depends on $\rho$.

By Theorem \ref{thm: asymptotic mdp}, we can deduce that the asymptotic variance of the two-stage estimator \eqref{eq: hat V mdp} is given by 
\begin{align*}
    \boldsymbol{\Phi}_{\pi^e}^\top\mathbf{Var}[\boldsymbol{h}(X_1;\rho)]\boldsymbol{\Phi}_{\pi^e}.
\end{align*} 
We note that the asymptotic linear estimator $\boldsymbol{h}(X_i;\rho)$ has two components, i.e., 
\begin{align*}
    \boldsymbol{h}(X_i;\rho) = \boldsymbol{f}(X_i;\rho) + \big(\boldsymbol{h}(X_i;\rho)-\boldsymbol{f}(X_i;\rho)\big)
\end{align*}
The second part $\boldsymbol{h}(X_i;\rho)-\boldsymbol{f}(X_i;\rho)$ is caused by the first stage for estimating the conditional feature $\boldsymbol{\phi}_{\rho}(z)$, whose randomness may enlarge the asymptotic variance of the two-stage estimator.
Suppose we were given the exact value of $\rho$, we could obtain that the asymptotic variance of the two-stage estimator \eqref{eq: hat V mdp} is
\begin{align*}
    \boldsymbol{\Phi}_{\pi^e}^\top\mathbf{Var}[\boldsymbol{f}(X_1;\rho)]\boldsymbol{\Phi}_{\pi^e}.
\end{align*}
It is still unclear whether the two-stage estimator \eqref{eq: hat V mdp} can achieve the semiparametric efficiency lower bound \citep{yao2010efficient}. 
We leave the discussion of semiparametric efficiency to our future work. 

Finally, by the fact that $\sqrt{n}\cdot(\widehat{\boldsymbol{V}}_{\pi^e}-\boldsymbol{V}_{\pi^e})$ is asymptotically linear \eqref{eq: asymptotically linear}, we can apply the multidimensional Central Limit Theorem (CLT) to obtain asymptotic normality in the following corollary.

\begin{cor}\label{cor: normality}
Denote $X_i:=(s_i, a_i, r_i, s_i', z_i)$, in the same setting as Theorem~\ref{thm: asymptotic mdp}. Then the value estimator \eqref{eq: hat V mdp} is asymptotic normal:
\begin{align*}
    \sqrt{n}\cdot\big(\widehat{\boldsymbol{V}}_{\pi^e}-\boldsymbol{V}_{\pi^e}\big)\tod \NM\big(\boldsymbol{0},\boldsymbol{\Phi}_{\pi^e}^\top\mathbf{Var}[\boldsymbol{h}(X_1;\rho)]\boldsymbol{\Phi}_{\pi^e}\big).
\end{align*}
\end{cor}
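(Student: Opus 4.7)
The plan is to derive the corollary as a direct consequence of the asymptotically linear expansion in Theorem~\ref{thm: asymptotic mdp} by applying the multidimensional Central Limit Theorem (CLT) together with Slutsky's theorem. First, I would rewrite the expansion \eqref{eq: asymptotically linear} in the form
\begin{align*}
    \sqrt{n}\cdot\big(\widehat{\boldsymbol{V}}_{\pi^e}-\boldsymbol{V}_{\pi^e}\big) = \boldsymbol{\Phi}_{\pi^e}^\top\cdot\frac{1}{\sqrt{n}}\sum_{i=1}^n\big(\boldsymbol{h}(X_i;\rho)-\mathbb{E}[\boldsymbol{h}(X_i;\rho)]\big) + o_P(1),
\end{align*}
so that the leading stochastic term is a normalized i.i.d.\ sum of mean-zero random vectors in $\mathbb{R}^d$, premultiplied by the deterministic matrix $\boldsymbol{\Phi}_{\pi^e}^\top\in\mathbb{R}^{S\times d}$.

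The key step is to verify the hypotheses of the multidimensional CLT for the triangular sum $\frac{1}{\sqrt{n}}\sum_{i=1}^n(\boldsymbol{h}(X_i;\rho)-\mathbb{E}[\boldsymbol{h}(X_i;\rho)])$. The random variables $\boldsymbol{h}(X_i;\rho)$ are i.i.d.\ by construction of the offline dataset $\mathbb{D}$, so the only non-trivial requirement is finiteness of the second moment $\mathbb{E}[\|\boldsymbol{h}(X_1;\rho)\|_2^2]<\infty$. I would establish this by inspecting the two components of $\boldsymbol{h}$: the term $\boldsymbol{f}(X_i;\rho)$ is bounded since $\|\boldsymbol{\phi}_\rho(z_i)\|_2\leq 1$ and $\|\boldsymbol{\phi}_{\pi^e}(s_i')\|_2\leq 1$ by Assumption~\ref{assump: linear}, $\|\boldsymbol{\Sigma}^{-1}\|\leq 1/\underline{\sigma}$ by Assumption~\ref{assump: coverage}, $\|(\boldsymbol{I}-\gamma\boldsymbol{A})^{-1}\|\leq 1/(1-\gamma)$ as shown in Proposition~\ref{prop: identification policy value}, and the reward $r_i$ has finite variance (being the sum of a bounded function of $(s_i,a_i)$ and a confounder $\epsilon_i\in[-1,1]$). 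The second component involving $\nabla_\rho \mathbb{E}[\boldsymbol{f}(X_i;\rho)]\,\nabla_{\mathbb{E}[\boldsymbol{g}(X_i)]}\boldsymbol{d}(\mathbb{E}[\boldsymbol{g}(X_i)])\,\boldsymbol{g}(X_i)$ is likewise bounded because $\boldsymbol{g}(X_i)$ is an indicator vector with unit $\ell_\infty$-norm and the remaining factors are deterministic and finite under the same assumptions. Hence $\mathbf{Var}[\boldsymbol{h}(X_1;\rho)]$ is well defined and finite.

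With the moment condition in hand, the multidimensional CLT yields
\begin{align*}
    \frac{1}{\sqrt{n}}\sum_{i=1}^n\big(\boldsymbol{h}(X_i;\rho)-\mathbb{E}[\boldsymbol{h}(X_i;\rho)]\big)\,\overset{d}{\to}\,\mathcal{N}\big(\boldsymbol{0},\mathbf{Var}[\boldsymbol{h}(X_1;\rho)]\big),
\end{align*}
and the continuous mapping theorem (applied to the linear map $\boldsymbol{v}\mapsto\boldsymbol{\Phi}_{\pi^e}^\top\boldsymbol{v}$) gives convergence of the premultiplied sum to $\mathcal{N}(\boldsymbol{0},\boldsymbol{\Phi}_{\pi^e}^\top\mathbf{Var}[\boldsymbol{h}(X_1;\rho)]\boldsymbol{\Phi}_{\pi^e})$. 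Finally, Slutsky's theorem absorbs the $o_P(1)$ remainder from the asymptotically linear expansion, yielding the claimed normality of $\sqrt{n}\cdot(\widehat{\boldsymbol{V}}_{\pi^e}-\boldsymbol{V}_{\pi^e})$.

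The main obstacle is not in the CLT step itself, which is essentially mechanical, but rather in carefully checking that every factor appearing in $\boldsymbol{h}(X_i;\rho)$ has the required boundedness; in particular, verifying that the partial derivative $\nabla_\rho\mathbb{E}[\boldsymbol{f}(X_i;\rho)]$ (which must be interpreted as acting on a finite-dimensional object since $\rho\in\mathbb{R}^{S\times A\times Z}$) is a well-defined, finite, deterministic matrix under the convention stated after Theorem~\ref{thm: asymptotic mdp}. Once this is verified, the rest of the argument is a routine CLT-plus-Slutsky chain.
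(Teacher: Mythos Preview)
Your proposal is correct and follows essentially the same approach as the paper: the paper's proof is a single line stating that the result ``directly follows from the asymptotic linearity \eqref{eq: asymptotically linear}'' together with the multidimensional CLT, which is exactly the CLT-plus-Slutsky argument you outline. Your version is simply more detailed in verifying the second-moment condition for $\boldsymbol{h}(X_1;\rho)$, which the paper leaves implicit.
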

\begin{proof}[Proof of Corollary \ref{cor: normality}]
    This directly follows from the asymptotic linearity \eqref{eq: asymptotically linear} and the above discussion.
\end{proof}

\section{Conclusions}
\label{sec: conclusion}
In this work, we present the first statistical result of OPE in confounded MDPs based on the tool of instrumental variables. We propose a two-stage estimator of the value function from the offline dataset, which is corrupted with observable confounders. In non-asymptotic viewpoint, we provide the two-stage estimator is close to the true value function with statistical rate $\widetilde{\OM}(n^{-1/2})$. In asymptotic viewpoint, we prove that the two-stage estimator is asymptotic normal with typical rate $n^{1/2}$, from which we open an approach  to statistical inference for confounded MDPs. However, there are some directions to extend our work. Firstly, how does one design a two-stage estimator when the instrumental variable is continuous. Secondly, it is still unknown whether our two-stage estimator achieves semiparametric efficiency bound. We leave these issues to  future work.

\bibliographystyle{plainnat}
\bibliography{refer.bib}

\newpage
\appendix
\section{Proof in Section \ref{sec: bandit}: Confouneded Multi-armed Bandit}\label{sec: proof bandit}
In this section, we prove the theoretical results of the two-stage estimator for confounded bandits (Section~\ref{sec: bandit}).

\subsection{Proof of Corollary \ref{cor: non asymptotic bandit}}\label{subsec: proof non asymptotic bandit}
Despite the fact that this result for confounded bandits is a corollary of Theorem \ref{thm: non asymptotic mdp} for confounded MDPs, we still prove this result from scratch. 
As we can see, the proof for the confounded bandit case serves as an important building block in the proof of the main result for confounded MDPs.

\begin{proof}[Proof of Corollary \ref{cor: non asymptotic bandit}]
    For notational simplicity, we define that
    \begin{align*}
        \widehat{\boldsymbol{\tau}} = \frac{1}{n}\sum_{i=1}^n\boldsymbol{\phi}_{\widehat{\rho}}(z_i)r_i,\quad \widehat{\boldsymbol{\Sigma}} = \frac{1}{n}\sum_{i=1}^n\boldsymbol{\phi}_{\widehat{\rho}}(z_i)\boldsymbol{\phi}_{\widehat{\rho}}(z_i)^\top, \quad \boldsymbol{\phi}_{\pi^e} = \sum_{a\in\mathcal{A}}\pi^e(a)\boldsymbol{\phi}(a).
    \end{align*}    
    With the above notations, we now upper bound the estimation error $|V_{\pi^e}-\widehat{V}_{\pi^e}|$ as 
    \begin{align}
        \big|V_{\pi^e}-\widehat{V}_{\pi^e}\big|&=\boldsymbol{\phi}_{\pi^e}(\widehat{\boldsymbol{w}}-\boldsymbol{w})\notag\\
        & = \boldsymbol{\phi}_{\pi^e}\widehat{\boldsymbol{\Sigma}}^{-1}(\widehat{\boldsymbol{\tau}}-\widehat{\boldsymbol{\Sigma}}\boldsymbol{w})\notag\\
        &\leq \|\boldsymbol{\phi}_{\pi^e}\|_{\widehat{\boldsymbol{\Sigma}}^{-1}}\cdot\|\widehat{\boldsymbol{\tau}}-\widehat{\boldsymbol{\Sigma}}\boldsymbol{w}\|_{\widehat{\boldsymbol{\Sigma}}^{-1}}\notag\\
        &\leq \underbrace{\|\boldsymbol{\phi}_{\pi^e}\|_{\boldsymbol{\Sigma}^{-1}}}_{\text{(i)}}
        \cdot\underbrace{\|\boldsymbol{\Sigma}^{\frac{1}{2}}\widehat{\boldsymbol{\Sigma}}^{-1}\boldsymbol{\Sigma}^{\frac{1}{2}}\|}_{\text{(ii)}}
        \cdot\underbrace{\|\widehat{\boldsymbol{\tau}}-\widehat{\boldsymbol{\Sigma}}\boldsymbol{w}\|_{\boldsymbol{\Sigma}^{-1}}}_{\text{(iii)}}\label{eq: error decomposition bandit}.
    \end{align}
    Here the term (i) characterizes how the covariance matrix of $\boldsymbol{\phi}_{\rho}(z_i)$, which is $\boldsymbol{\Sigma}$, covers the feature $\boldsymbol{\phi}_{\pi^e}$. This term is bounded by $\mathcal{O}(\underline{\sigma}^{-\frac{1}{2}})$ where $\underline{\sigma}$ is the minimal eigenvalue of $\boldsymbol{\Sigma}$.
    In the sequel, we bound the term (ii) and term (iii) in \eqref{eq: error decomposition bandit} respectively.

    \vspace{3mm}
    \noindent
    \textbf{Bound of term (ii).} 
    This term characterizes how well the covariance matrix $\boldsymbol{\Sigma}$ is estimated.
    In order to bound this term, we invoke Lemma \ref{lem:cov_estimation} and obtain that, with probability at least $1-\delta$, 
    \begin{align}
        \|\boldsymbol{\Sigma}^{1/2}\widehat{\boldsymbol{\Sigma}}^{-1}\boldsymbol{\Sigma}^{1/2}\|\leq \left(1-\sqrt{\frac{8\underline{\sigma}^{-2}d\log\left(d|\mathcal{S}||\mathcal{A}||\mathcal{Z}|/\delta\right)}{n}}+\frac{6\underline{\sigma}^{-1}d|\mathcal{Z}|\log\left(d|\mathcal{S}||\mathcal{A}||\mathcal{Z}|/\delta\right)}{n}\right)^{-1}.
    \end{align}
    That is, for $n$ large enough, term (ii) is of order $\mathcal{O}(1)$.

    \vspace{3mm}
    \noindent
    \textbf{Bound of term (iii).} 
    This term characterizes how well the parameter $\boldsymbol{w}$ is estimated.
    By the definition of $\boldsymbol{\tau}$ and $\boldsymbol{\Sigma}$, we have that 
    \begin{align*}
        \widehat{\boldsymbol{\tau}}-\widehat{\boldsymbol{\Sigma}}\boldsymbol{w} &= \frac{1}{n}\sum_{i=1}^n\boldsymbol{\phi}_{\widehat{\rho}}(z_i)\big(r_i - \boldsymbol{\phi}_{\widehat{\rho}}(z_i)^\top\boldsymbol{w}\big)\\
            &= \underbrace{\frac{1}{n}\sum_{i=1}^n\boldsymbol{\phi}_{\rho}(z_i)\big(r_i - \boldsymbol{\phi}_{\rho}(z_i)^\top\boldsymbol{w}\big)}_{\sharp} + \underbrace{\frac{1}{n}\sum_{i=1}^n\big(\boldsymbol{\phi}_{\widehat{\rho}}(z_i)-\boldsymbol{\phi}_{\rho}(z_i)\big)r_i - \big(\boldsymbol{\phi}_{\widehat{\rho}}(z_i)\boldsymbol{\phi}_{\widehat{\rho}}(z_i)^\top-\boldsymbol{\phi}_{\rho}(z_i)\boldsymbol{\phi}_{\rho}(z_i)^\top\big)\boldsymbol{w}}_{\dagger}
    \end{align*}
    In the sequel, we first bound the term $\sharp$. 
    To this end, we condition on the instrumental variables $z_1,\cdots,z_n$.
    Notice that for each $i$, it holds that 
    \begin{align*}
        \mathbb{E}\big[\boldsymbol{\phi}_{\rho}(z_i)\big(r_i - \boldsymbol{\phi}_{\rho}(z_i)^\top\boldsymbol{w}\big)\big| z_1,\cdots,z_n\big] = \mathbb{E}\big[\boldsymbol{\phi}_{\rho}(z_i)\big(r_i - \boldsymbol{\phi}_{\rho}(z_i)^\top\boldsymbol{w}\big)\big| z_i\big] = \boldsymbol{\phi}_{\rho}(z_i)\big(\mathbb{E}[r_i|z_i] - \boldsymbol{\phi}_{\rho}(z_i)^\top\boldsymbol{w}\big) = \boldsymbol{0}.
    \end{align*}
    Also, due to Assumption \ref{assump: linear} it holds that $|r_i|\leq 2$, $\|\boldsymbol{w}\|_2\leq 1$, and $\|\boldsymbol{\phi}_{\rho}(z_i)\|_2\leq 1$.
    As a result, by Hoeffding's inequality, we can obtain that with probability $\mathbb{P}(\cdot|z_1,\cdots,z_n)$ at least $1-\delta$, 
    \begin{align}\label{eq: bandit term iii sharp}
        \|\sharp\|_2= \left\|\frac{1}{n}\sum_{i=1}^n\boldsymbol{\phi}_{\rho}(z_i)\big(r_i - \boldsymbol{\phi}_{\rho}(z_i)^\top\boldsymbol{w}\big)\right\|_2 \leq \sqrt{\frac{4d\log\left(1/\delta\right)}{n}}.
    \end{align}
    Taking expectation w.r.t. the $z_1,\cdots,z_n$, we conclude that \eqref{eq: bandit term iii sharp} holds with probability $\mathbb{P}(\cdot)$ at least $1-\delta$.
    
    Besides, for the term $\dagger$, we invoke Lemma \ref{lem:concentration_empirical_feature} and obtain that 
    \begin{align}
        \|\dagger\|_2 &\leq \frac{1}{n}\sum_{i=1}^n\|\big(\boldsymbol{\phi}_{\widehat{\rho}}(z_i)-\boldsymbol{\phi}_{\rho}(z_i)\big)r_i\|_2 + \frac{1}{n}\sum_{i=1}^n\|\big(\boldsymbol{\phi}_{\widehat{\rho}}(z_i)\boldsymbol{\phi}_{\widehat{\rho}}(z_i)^\top-\boldsymbol{\phi}_{\rho}(z_i)\boldsymbol{\phi}_{\rho}(z_i)^\top\big)\boldsymbol{w}\|_2\notag\\
        &\leq \frac{1}{n}\sum_{i=1}^n\|\big(\boldsymbol{\phi}_{\widehat{\rho}}(z_i)-\boldsymbol{\phi}_{\rho}(z_i)\big)\|_2\cdot|r_i| + \frac{1}{n}\sum_{i=1}^n\|\big(\boldsymbol{\phi}_{\widehat{\rho}}(z_i)-\boldsymbol{\phi}_{\rho}(z_i)\big)\|_2 \cdot \|\big(\boldsymbol{\phi}_{\widehat{\rho}}(z_i) + \boldsymbol{\phi}_{\rho}(z_i)\big)^\top\boldsymbol{w}\|_2\notag\\
        &\leq \frac{4}{n}\sum_{i=1}^n\|\big(\boldsymbol{\phi}_{\widehat{\rho}}(z_i)-\boldsymbol{\phi}_{\rho}(z_i)\big)\|_2\leq \sqrt{\frac{48d\log\left(d|\mathcal{S}||\mathcal{A}||\mathcal{Z}|/\delta\right)}{n}}+\frac{16\sqrt{d}|\mathcal{Z}|\log\left(d|\mathcal{S}||\mathcal{A}||\mathcal{Z}|/\delta\right)}{n}\label{eq: bandit term iii dagger},
    \end{align}
    Now by combining the upper bounds \eqref{eq: bandit term iii sharp} and \eqref{eq: bandit term iii dagger} on the terms $\sharp$ and $\dagger$, we obtain that with probability at least $1-2\delta$, it holds that 
    \begin{align}\label{eq: bandit term iii},
        \text{(iii)} &= \|\widehat{\boldsymbol{\tau}}-\widehat{\boldsymbol{\Sigma}}\boldsymbol{w}\|_{\boldsymbol{\Sigma}^{-1}} \leq \underline{\sigma}^{-\frac{1}{2}}\cdot\|\widehat{\boldsymbol{\tau}}-\widehat{\boldsymbol{\Sigma}}\boldsymbol{w}\|_2 \leq \underline{\sigma}^{-\frac{1}{2}}\cdot\big( \|\sharp\|_2+\|\dagger\|_2\big)\notag\\
        &\leq 4\underline{\sigma}^{-\frac{1}{2}}\cdot \sqrt{\frac{7d\log\left(d|\mathcal{S}||\mathcal{A}||\mathcal{Z}|/\delta\right)}{n}}+\frac{16\underline{\sigma}^{-\frac{1}{2}}\sqrt{d}|\mathcal{Z}|\log\left(d|\mathcal{S}||\mathcal{A}||\mathcal{Z}|/\delta\right)}{n},
    \end{align}

    \noindent
    \textbf{Combining bounds on term (i), (ii), (iii).} 
    Finally, by combining the three upper bounds and omitting the terms of lower order, we can obtain that with probabiltiy at least $1-2\delta$,
    \begin{align*}
        \big|V_{\pi^e}-\widehat{V}_{\pi^e}\big| \lesssim \|\boldsymbol{\phi}_{\pi^e}\|_{\boldsymbol{\Sigma}^{-1}}\cdot \left(\sqrt{\frac{d\log\left(d|\mathcal{S}||\mathcal{A}||\mathcal{Z}|/\delta\right)}{\underline{\sigma} n}}+\frac{\sqrt{d}|\mathcal{Z}|\log\left(d|\mathcal{S}||\mathcal{A}||\mathcal{Z}|/\delta\right)}{\underline{\sigma}^{\frac{1}{2}}n}\right)\lesssim \widehat{\mathcal{O}}\left(  \|\boldsymbol{\phi}_{\pi^e}\|_{\boldsymbol{\Sigma}^{-1}}\cdot \sqrt{\frac{d}{\underline{\sigma} n}} \right),
    \end{align*}
    where $\widetilde{\mathcal{O}}$ hides universal constants and logarithm factors. This finishes the proof of Corollary \ref{cor: non asymptotic bandit}.
\end{proof}

\section{Proof in Section \ref{sec: mdp}: Confounded Markov Decision Process}\label{sec: proof mdp}
In this section, we prove the theoretical results of the two-stage estimator for confounded MDPs (Section~\ref{sec: mdp}).

\subsection{Proof of Proposition \ref{prop: identification policy value}}\label{subsec: proof identification policy value}
\begin{proof}[Proof of Proposition \ref{prop: identification policy value}]
    We first rewrite $V_{\pi^e}$ using the Bellman equation \citep{sutton2018reinforcement} as
    \begin{align}
        V_{\pi^e}(s)&=\mathbb{E}_{\pi^e}[r^{(1)}| s^{(1)}=s]+\gamma \mathbb{E}_{\pi^e}[V_{\pi^e}(s^{(2)})|s^{(1)}=s]\notag\\
        &=\sum_{a\in\AM}\pi^e(a|s)\boldsymbol{\phi}(s,a)^\top \boldsymbol{w}+\gamma\sum_{a\in\AM}\pi^e(a|s) \boldsymbol{\phi}(s,a)^\top\sum_{s^{\prime}\in\SM}\boldsymbol{\nu}(s^{\prime})V_{\pi^e}(s^{\prime}),\label{eq: bellman}
    \end{align}
    where in the first equality, we use the Bellman equation, and in the last equality, we use the linear structure in Assumption \ref{assump: linear} and the fact that $\mathbb{E}_{\pi^e}[\epsilon^{(1)}]=0$.
    For simplicity, we define a vector-valued function on $\mathcal{S}$ as 
    \begin{align*}
        \boldsymbol{\phi}_{\pi^e}(s)\coloneqq\sum_{a\in\AM}\boldsymbol{\phi}(s,a)\pi^e(a|s)\in\mathbb{R}^d.
    \end{align*}
    In order to solve the value function $V_{\pi^e}$, we further define a vector as
    \begin{align*}
        \boldsymbol{\beta}\coloneqq \sum_{s^{\prime}\in\SM}\boldsymbol{\nu}(s^{\prime})V_{\pi^e}(s^{\prime})\in\mathbb{R}^d.
    \end{align*}
    Using Equation \eqref{eq: bellman}, we can derive the following equation for the vector $\boldsymbol{\beta}$,
    \begin{align*}
        &\boldsymbol{\beta}=\sum_{s^{\prime}\in\SM}\boldsymbol{\nu}(s^{\prime})\boldsymbol{\phi}_{\pi^e}(s^{\prime})^\top\left(\boldsymbol{w}+\gamma \boldsymbol{\beta}\right)\quad \Rightarrow\quad  \boldsymbol{\beta}=(\boldsymbol{I}-\gamma \boldsymbol{A})^{-1}\boldsymbol{A}\boldsymbol{w},\quad \textnormal{with }\boldsymbol{A}\coloneqq\sum_{s\in\SM}\boldsymbol{\nu}(s)\boldsymbol{\phi}_{\pi^e}(s)^\top.
    \end{align*}
    We note that here $\boldsymbol{I}-\gamma \boldsymbol{A}$ is invertible due to the boundedness assumption in Assumption \ref{assump: linear}.
    Consequently, we can rewrite $V_{\pi^e}(s)$ as a linear function with respect to the feature mapping $\boldsymbol{\phi}_{\pi^e}(s)$,
    \begin{align}
        \label{eq: V mdp}
        V_{\pi^e}(s)=\boldsymbol{\phi}_{\pi^e}(s)^\top\boldsymbol{\theta},\quad \textnormal{with } \boldsymbol{\theta}\coloneqq\boldsymbol{w}+\gamma\boldsymbol{\beta}=(\boldsymbol{I}-\gamma \boldsymbol{A})^{-1}\boldsymbol{w}.
    \end{align}
    As a result, in order to identify the policy value $V_{\pi^e}$, it suffices to identify the unknown vector $\boldsymbol{w}$ and matrix $\boldsymbol{A}$ using observational data.
    Using the same technique as in \ref{sec: bandit}, we can show that 
    \begin{align}
        \boldsymbol{w} = \mathbb{E}[\boldsymbol{\phi}_{\rho}(z_i)\boldsymbol{\phi}_{\rho}(z_i)^\top]^{-1}\mathbb{E}[\boldsymbol{\phi}_{\rho}(z_i)r_i],\quad  \boldsymbol{A} =\mathbb{E}[\boldsymbol{\phi}_\rho(z_i)\boldsymbol{\phi}_\rho(z_i)^\top]^{-1}\mathbb{E}[\boldsymbol{\phi}_\rho(z_i)\boldsymbol{\phi}_{\pi^e}(s_i^{\prime})^\top]. 
    \end{align}
    This concludes the proof of Proposition \ref{prop: identification policy value}.
\end{proof}

\subsection{Proof of Theorem \ref{thm: non asymptotic mdp}}\label{subsec: proof non asymptotic mdp}
We first define the action-value function, i.e., the $Q$-function, and the visitation measure introduced by the target policy $\pi^e$.
Specifically, we define the $Q$-function of $\pi^e$ as 
\begin{align}
    Q_{\pi^e}(s,a) = \mathbb{E}_{\pi^{e}}\left[\sum_{t=1}^{+\infty}\gamma^{t-1}r^{(t)}\middle| s^{(1)}=s, a^{(1)}=a\right],\quad \forall (s,a)\in\mathcal{S}\times\mathcal{A}.
\end{align}
Following the same proof for Proposition \ref{prop: identification policy value}, we can prove that $Q_{\pi^e}$ is linear in $\boldsymbol{\phi}(s,a)$. 
That is, $Q_{\pi^e}(s,a) = \boldsymbol{\phi}(s,a)^\top \boldsymbol{\theta}$ for the same parameter $\boldsymbol{\theta}$ as in Proposition \ref{prop: identification policy value}.
Then we define the visitation measure $d_{\pi^e,\underline{s}}(s,a)$ of $\pi^e$ as the cumulative probability that the agent visits $(s, a)$, i.e., 
\begin{align*}
    d_{\pi^e,\underline{s}}(s,a) = (1-\gamma)\sum_{t=1}^{+\infty}\gamma^{t-1}\mathbb{P}_{\pi^e}(s^{(t)} = s, a^{(t)} = a | s^{(1)} = \underline{s}), \quad \forall (s,a)\in\mathcal{S}\times\mathcal{A}.
\end{align*}
Using the notion of visitation measure, one can rewrite the value function $V_{\pi^e}(s)$ as 
\begin{align*}
    V_{\pi^e}(s_0) = \frac{1}{1-\gamma}\mathbb{E}_{(s,a)\sim d_{\pi^e,s_0}}[R(s,a)],\quad \forall s_0\in\mathcal{S}.
\end{align*}
Our analysis depends on an important evaluation error lemma \citep{xie2020q} (Lemma \ref{lem: evaluation error}), which we state in the following.
For any function $Q:\mathcal{S}\times\mathcal{A}\mapsto\mathbb{R}$ and $V=\mathbb{E}_{a\sim \pi^e(\cdot|s)}[Q(s,a)]:\mathcal{S}\mapsto\mathbb{R}$, 
\begin{align}\label{eq: evaluation error}
    V(s_0) - V_{\pi^e}(s_0) = \frac{1}{1-\gamma}\cdot\mathbb{E}_{(s,a)\sim d_{\pi^e,s_0}}\left[Q(s,a) - R(s,a) - \gamma \mathbb{E}_{s^{\prime}\sim P(\cdot|s,a),a^{\prime}\sim \pi^e(\cdot|s^{\prime})}[V(s^{\prime})]\right].
\end{align}

\begin{proof}[Proof of Theorem \ref{thm: non asymptotic mdp}]
    Recall that the estimator is given by $\widehat{V}_{\pi^e}(s) = \boldsymbol{\phi}_{\pi^e}(s)^\top\widehat{\boldsymbol{\theta}}$.
    We also define $\widehat{Q}_{\pi^e}(s,a) = \boldsymbol{\phi}(s,a)^\top\widehat{\boldsymbol{\theta}}$.
    Now by taking $V = \widehat{V}_{\pi^e}$ and $Q = \widehat{Q}_{\pi^e}$ in \eqref{eq: evaluation error}, we can obtain that 
    \begin{align}
        \widehat{V}_{\pi^e}(s_0) - V_{\pi^e}(s_0) &= \frac{1}{1-\gamma}\cdot\mathbb{E}_{(s,a)\sim d_{\pi^e,s_0}}\left[\widehat{Q}_{\pi^e}(s,a) - R(s,a) - \gamma \mathbb{E}_{s^{\prime}\sim P(\cdot|s,a),a^{\prime}\sim \pi^e(\cdot|s^{\prime})}[\widehat{V}_{\pi^e}(s^{\prime})]\right]\notag\\
        & = \frac{1}{1-\gamma}\cdot\mathbb{E}_{(s,a)\sim d_{\pi^e,s_0}}\left[\boldsymbol{\phi}(s,a)^\top\widehat{\boldsymbol{\theta}} - \boldsymbol{\phi}(s,a)^\top\boldsymbol{w} - \gamma \sum_{s^{\prime}\in\mathcal{S}}\boldsymbol{\phi}(s,a)^\top\boldsymbol{\nu}(s^{\prime})\boldsymbol{\phi}_{\pi^e}(s^{\prime})^\top\widehat{\boldsymbol{\theta}}\right]\notag\\
        & = \frac{1}{1-\gamma}\cdot\mathbb{E}_{(s,a)\sim d_{\pi^e,s_0}}\Big[\boldsymbol{\phi}(s,a)^\top\underbrace{\big(\widehat{\boldsymbol{\theta}} - \boldsymbol{w} - \gamma \boldsymbol{A}\widehat{\boldsymbol{\theta}}\big)}_{(\star)}\Big]\label{eq: proof non asymptotic mdp A}
    \end{align}
    where in \eqref{eq: proof non asymptotic mdp A} we use the definition that $\boldsymbol{A} = \sum_{s^{\prime}\in\mathcal{S}}\boldsymbol{\nu}(s^{\prime})\boldsymbol{\phi}_{\pi^e}(s^{\prime})^\top$.
    Following the same notations as in the proof of Proposition \ref{prop: identification policy value}, and using the definition of $\widehat{\boldsymbol{\theta}}$ in \eqref{eq: hat V mdp}, we can rewrite the term $(\star)$ as
    \begin{align*}
        (\star) & = (\boldsymbol{I} - \gamma \boldsymbol{A})\widehat{\boldsymbol{\theta}} - \boldsymbol{w}
         = (\boldsymbol{I} - \gamma \boldsymbol{A})(\boldsymbol{I} - \gamma \widehat{\boldsymbol{A}})^{-1}\widehat{\boldsymbol{w}} - \widehat{\boldsymbol{w}} + \widehat{\boldsymbol{w}} - \boldsymbol{w} = (\widehat{\boldsymbol{A}} -\boldsymbol{A})(\boldsymbol{I} - \gamma \widehat{\boldsymbol{A}})^{-1}\widehat{\boldsymbol{w}} + (\widehat{\boldsymbol{w}} - \boldsymbol{w}).
    \end{align*}
    Using the definition of $\widehat{\boldsymbol{A}}$ and $\widehat{\boldsymbol{w}}$ in \eqref{eq: hat alpha A}, we can then bound the right hand side of \eqref{eq: proof non asymptotic mdp A} as
    \begin{align*}
        \eqref{eq: proof non asymptotic mdp A} & = \frac{1}{1-\gamma}\cdot\mathbb{E}_{(s,a)\sim d_{\pi^e,s_0}}\Big[\boldsymbol{\phi}(s,a)^\top\widehat{\boldsymbol{\Sigma}}^{-1}\big((\widehat{\boldsymbol{B}} -\widehat{\boldsymbol{\Sigma}}\boldsymbol{A})(\boldsymbol{I} - \gamma \widehat{\boldsymbol{A}})^{-1}\widehat{\boldsymbol{w}} + (\widehat{\boldsymbol{\tau}} - \widehat{\boldsymbol{\Sigma}}\boldsymbol{w})\big)\Big]\\
        & \leq \frac{1}{1-\gamma}\cdot\mathbb{E}_{(s,a)\sim d_{\pi^e,s_0}}\left[\|\boldsymbol{\phi}(s,a)\|_{\widehat{\boldsymbol{\Sigma}}^{-1}}\cdot \left(\big\|(\widehat{\boldsymbol{B}} -\widehat{\boldsymbol{\Sigma}}\boldsymbol{A})(\boldsymbol{I} - \gamma \widehat{\boldsymbol{A}})^{-1}\widehat{\boldsymbol{w}}\big\|_{\widehat{\boldsymbol{\Sigma}}^{-1}} + \big\|\widehat{\boldsymbol{\tau}} - \widehat{\boldsymbol{\Sigma}}\boldsymbol{w}\big\|_{\widehat{\boldsymbol{\Sigma}}^{-1}}\right)\right]\\
        & \leq \frac{1}{1-\gamma}\cdot\underbrace{\mathbb{E}_{(s,a)\sim d_{\pi^e,s_0}}\left[\|\boldsymbol{\phi}(s,a)\|_{\boldsymbol{\Sigma}^{-1}}\right]}_{\text{(i)}} \cdot \underbrace{\|\boldsymbol{\Sigma}^{\frac{1}{2}}\widehat{\boldsymbol{\Sigma}}^{-1}\boldsymbol{\Sigma}^{\frac{1}{2}}\|}_{\text{(ii)}} \cdot \underbrace{\left(\big\|(\widehat{\boldsymbol{B}} -\widehat{\boldsymbol{\Sigma}}\boldsymbol{A})(\boldsymbol{I} - \gamma \widehat{\boldsymbol{A}})^{-1}\widehat{\boldsymbol{w}}\big\|_{\boldsymbol{\Sigma}^{-1}} + \big\|\widehat{\boldsymbol{\tau}} - \widehat{\boldsymbol{\Sigma}}\boldsymbol{w}\big\|_{\boldsymbol{\Sigma}^{-1}}\right)}_{\text{(iii)}}
    \end{align*} 
    Similar to the confounded bandit case, the term (i) characterizes how the covariance matrix of $\boldsymbol{\phi}_{\rho}(z_i)$, which is $\boldsymbol{\Sigma}$, covers the feature $\boldsymbol{\phi}(s,a)$, averaged by the visitation measure $d_{\pi^e,s_0}$. This term is bounded by $\mathcal{O}(\underline{\sigma}^{-\frac{1}{2}})$ where $\underline{\sigma}$ is the minimal eigenvalue of $\boldsymbol{\Sigma}$.
    Besides, the term (ii) is a bounded term of $\mathcal{O}(1)$ for $n$ large enough, which is guaranteed by Lemma \ref{lem:cov_estimation}.
    In the sequel, we upper bound the term (iii).

    Note that in the proof of Corollary \ref{cor: non asymptotic bandit}, we have derived an upper bound on the second term $\|\widehat{\boldsymbol{\tau}} - \widehat{\boldsymbol{\Sigma}}\boldsymbol{w}\|_{\boldsymbol{\Sigma}^{-1}}$ in term (iii) (See Appendix \ref{subsec: proof non asymptotic bandit}).
    Therefore, it suffices to derive the upper bound for the first term in term (iii), which we denote as the term (iii.a).
    Consider that 
    \begin{align}\label{eq: proof non asymptotic mdp term iii a}
        \text{(iii.a)} & \leq \underline{\sigma}^{-\frac{1}{2}}\cdot \|\widehat{\boldsymbol{B}} -\widehat{\boldsymbol{\Sigma}}\boldsymbol{A}\|\cdot \|(\boldsymbol{I} - \gamma \widehat{\boldsymbol{A}})^{-1}\|\cdot \|\widehat{\boldsymbol{w}}\|_2
    \end{align}
    Here the term $\|(\boldsymbol{I} - \gamma \widehat{\boldsymbol{A}})^{-1}\|$ is upper bounded by  
    \begin{align}\label{eq: proof non asymptotic mdp term iii a I - gamma A}
        \|(\boldsymbol{I} - \gamma \widehat{\boldsymbol{A}})^{-1}\| &= \big\|\boldsymbol{I} - \gamma \widehat{\boldsymbol{A}}\big\|^{-1} \leq  \big(\|\boldsymbol{I} - \gamma \boldsymbol{A}\| - \gamma\|\widehat{\boldsymbol{A}} - \boldsymbol{A}\|\big)^{-1}\notag\\
        &\leq \big(\|\boldsymbol{I} - \gamma \boldsymbol{A}\| - \gamma\|\widehat{\boldsymbol{B}} - \widehat{\boldsymbol{\Sigma}}\boldsymbol{A}\|\cdot \|\widehat{\boldsymbol{\Sigma}}^{-1}\|\big)^{-1}\notag\\
        &\leq \big(1-\gamma - \gamma\|\widehat{\boldsymbol{B}} - \widehat{\boldsymbol{\Sigma}}\boldsymbol{A}\|\cdot \|\boldsymbol{\Sigma}^{-1}\|\cdot \|\boldsymbol{\Sigma}^{\frac{1}{2}}\widehat{\boldsymbol{\Sigma}}^{-1}\boldsymbol{\Sigma}^{\frac{1}{2}}\|\big)^{-1}.
    \end{align}
    Similarly, the term $\|\widehat{\boldsymbol{w}}\|_2$ is upper bounded by 
    \begin{align}\label{eq: proof non asymptotic mdp term iii a w}
        \|\widehat{\boldsymbol{w}}\|_2 \leq \|\boldsymbol{w}\|_2 + \|\widehat{\boldsymbol{w}} - \boldsymbol{w}\|_2 \leq 1 + \|\widehat{\boldsymbol{\tau}} - \widehat{\boldsymbol{\Sigma}}\boldsymbol{w}\|\cdot \|\boldsymbol{\Sigma}^{-1}\|\cdot \|\boldsymbol{\Sigma}^{\frac{1}{2}}\widehat{\boldsymbol{\Sigma}}^{-1}\boldsymbol{\Sigma}^{\frac{1}{2}}\|
    \end{align}
    Now we are going to bound the term $\|\widehat{\boldsymbol{B}} - \widehat{\boldsymbol{\Sigma}}\boldsymbol{A}\|$.
    By the definition of $\widehat{\boldsymbol{B}}$ and $\widehat{\boldsymbol{\Sigma}}$, we have that 
    \begin{align}
        \widehat{\boldsymbol{B}} - \widehat{\boldsymbol{\Sigma}}\boldsymbol{A} &= \frac{1}{n}\sum_{i=1}^n\boldsymbol{\phi}_{\widehat{\rho}}(z_i)\big(\boldsymbol{\phi}_{\pi^e}(s_i^{\prime})^\top - \boldsymbol{\phi}_{\widehat{\rho}}(z_i)^\top\boldsymbol{A}\big)\notag\\
        &= \underbrace{\frac{1}{n}\sum_{i=1}^n\boldsymbol{\phi}_{\rho}(z_i)\big(\boldsymbol{\phi}_{\pi^e}(s_i^{\prime})^\top - \boldsymbol{\phi}_{\rho}(z_i)^\top\boldsymbol{A}\big)}_{\sharp} \notag\\
        &\quad\quad + \underbrace{\frac{1}{n}\sum_{i=1}^n\big(\boldsymbol{\phi}_{\widehat{\rho}}(z_i)-\boldsymbol{\phi}_{\rho}(z_i)\big)\boldsymbol{\phi}_{\pi^e}(s_i^{\prime})^\top - \big(\boldsymbol{\phi}_{\widehat{\rho}}(z_i)\boldsymbol{\phi}_{\widehat{\rho}}(z_i)^\top-\boldsymbol{\phi}_{\rho}(z_i)\boldsymbol{\phi}_{\rho}(z_i)^\top\big)\boldsymbol{A}}_{\dagger}.\label{eq: proof non asymptotic mdp decompose iii a}
    \end{align}
    To bound the term $\sharp$ in \eqref{eq: proof non asymptotic mdp decompose iii a}, we condition on the instrumental variables $z_1,\cdots,z_n$. 
    The conditional expectation of the summand given the instrumental variables are given by 
    \begin{align*}
        \mathbb{E}\big[\boldsymbol{\phi}_{\rho}(z_i)\big(\boldsymbol{\phi}_{\pi^e}(s_i^{\prime})^\top - \boldsymbol{\phi}_{\rho}(z_i)^\top\boldsymbol{A}\big)\big| z_1,\cdots,z_n\big] &= \mathbb{E}\big[\boldsymbol{\phi}_{\rho}(z_i)\big(\boldsymbol{\phi}_{\pi^e}(s_i^{\prime})^\top - \boldsymbol{\phi}_{\rho}(z_i)^\top\boldsymbol{A}\big)\big| z_i\big] \\
        &= \boldsymbol{\phi}_{\rho}(z_i)\big(\mathbb{E}[\boldsymbol{\phi}_{\pi^e}(s_i^{\prime})^\top|z_i] - \boldsymbol{\phi}_{\rho}(z_i)^\top\boldsymbol{A}\big) \\
        &= \boldsymbol{\phi}_{\rho}(z_i)\boldsymbol{\phi}_{\rho}(z_i)^\top\sum_{s^{\prime}\in\mathcal{S}}\boldsymbol{\nu}(s^{\prime})\boldsymbol{\phi}_{\pi^e}(s^{\prime}) - \boldsymbol{\phi}_{\rho}(z_i)\boldsymbol{\phi}_{\rho}(z_i)^\top\boldsymbol{A} \\
        &= \boldsymbol{0}.
    \end{align*}
    Also, according to Assumption \ref{assump: linear} it holds that $\|\boldsymbol{\phi}_{\pi^e}(s_i^{\prime})\|_2\leq 1$, $\|\boldsymbol{\phi}_{\rho}(z_i)\|_2\leq 1$, and $\|\boldsymbol{A}\|\leq 1$.
    As a result, by matrix Bernstein inquality (Lemma \ref{lem:matrix_bernstein}), we can obtain that with probability $\mathbb{P}(\cdot|z_1,\cdots,z_n)$ at least $1-\delta$, 
    \begin{align}\label{eq: proof non asymptotic mdp term iii sharp}
        \|\sharp\|= \left\|\frac{1}{n}\sum_{i=1}^n\boldsymbol{\phi}_{\rho}(z_i)\big(\boldsymbol{\phi}_{\pi^e}(s_i^{\prime})^\top - \boldsymbol{\phi}_{\rho}(z_i)^\top\boldsymbol{A}\big)\right\| \leq \sqrt{\frac{8\log\left(d/\delta\right)}{n}} + \frac{2\log\left(d/\delta\right)}{3n}.
    \end{align}
    Taking expectation with respect to the instrumental variables $z_1,\cdots,z_n$, we conclude that \eqref{eq: proof non asymptotic mdp term iii sharp} holds with probability $\mathbb{P}(\cdot)$ at least $1-\delta$.
    Then for the term $\dagger$ in \eqref{eq: proof non asymptotic mdp decompose iii a}, we use Lemma \ref{lem:concentration_empirical_feature} and follow the same argument as in the proof of Corollary \ref{cor: non asymptotic bandit} (see Equation \eqref{eq: bandit term iii dagger}), which gives that with probability at least $1-\delta$.
    \begin{align}\label{eq: proof non asymptotic mdp term iii dagger}
        \|\dagger\|\leq \frac{3}{n}\sum_{i=1}^n\|\big(\boldsymbol{\phi}_{\widehat{\rho}}(z_i)-\boldsymbol{\phi}_{\rho}(z_i)\big)\|_2\leq \sqrt{\frac{27d\log\left(d|\mathcal{S}||\mathcal{A}||\mathcal{Z}|/\delta\right)}{n}}+\frac{12\sqrt{d}|\mathcal{Z}|\log\left(d|\mathcal{S}||\mathcal{A}||\mathcal{Z}|/\delta\right)}{n}.
    \end{align}
    By combining the bounds \eqref{eq: proof non asymptotic mdp term iii sharp} and \eqref{eq: proof non asymptotic mdp term iii dagger} on the term $\sharp$ and $\dagger$, with probability at least $1-2\delta$, it holds that
    \begin{align}\label{eq: proof non asymptotic mdp term iii a B - Sigma A}
        \|\widehat{\boldsymbol{B}} - \widehat{\boldsymbol{\Sigma}}\boldsymbol{A}\|  \leq \|\sharp\| +  \|\dagger\| \leq \sqrt{\frac{81d\log\left(d|\mathcal{S}||\mathcal{A}||\mathcal{Z}|/\delta\right)}{n}}+\frac{13\sqrt{d}|\mathcal{Z}|\log\left(d|\mathcal{S}||\mathcal{A}||\mathcal{Z}|/\delta\right)}{n}.
    \end{align}
    
    Now we are ready to bound the term (iii.a). According to \eqref{eq: proof non asymptotic mdp term iii a I - gamma A} and \eqref{eq: proof non asymptotic mdp term iii a B - Sigma A}, we know that for $n$ large enough, the term $\|(\boldsymbol{I} - \gamma \widehat{\boldsymbol{A}})^{-1}\|\leq 2/(1-\gamma)$.
    Similarly, according to \eqref{eq: proof non asymptotic mdp term iii a w} and \eqref{eq: bandit term iii}, for $n$ large enough the term $\|\widehat{\boldsymbol{w}}\|_2$ is $\mathcal{O}(1)$.
    Therefore, by combining these two bounds with \eqref{eq: proof non asymptotic mdp term iii a} and \eqref{eq: proof non asymptotic mdp term iii a B - Sigma A}, we can conclude that
    \begin{align*}
        \text{(iii.a)}\leq \frac{1}{\underline{\sigma}^{\frac{1}{2}}(1-\gamma)}\cdot\bigg(\sqrt{\frac{81d\log\left(d|\mathcal{S}||\mathcal{A}||\mathcal{Z}|/\delta\right)}{n}}+\frac{13\sqrt{d}|\mathcal{Z}|\log\left(d|\mathcal{S}||\mathcal{A}||\mathcal{Z}|/\delta\right)}{n}\bigg)
    \end{align*}
    with probability at least $1-2\delta$. Furthermore, by invoking the upper bounds on the second term in the term (iii), which is derived in \eqref{eq: bandit term iii} in the proof of Corollary \ref{cor: non asymptotic bandit}, we have that with probability at least $1-4\delta$,
    \begin{align*}
        \text{(iii)} &= \big\|(\widehat{\boldsymbol{B}} -\widehat{\boldsymbol{\Sigma}}\boldsymbol{A})(\boldsymbol{I} - \gamma \widehat{\boldsymbol{A}})^{-1}\widehat{\boldsymbol{w}}\big\|_{\boldsymbol{\Sigma}^{-1}} + \big\|\widehat{\boldsymbol{\tau}} - \widehat{\boldsymbol{\Sigma}}\boldsymbol{w}\big\|_{\boldsymbol{\Sigma}^{-1}}\\
        & \leq 4\underline{\sigma}^{-\frac{1}{2}}\cdot \sqrt{\frac{7d\log\left(d|\mathcal{S}||\mathcal{A}||\mathcal{Z}|/\delta\right)}{n}}+\frac{16\underline{\sigma}^{-\frac{1}{2}}\sqrt{d}|\mathcal{Z}|\log\left(d|\mathcal{S}||\mathcal{A}||\mathcal{Z}|/\delta\right)}{n}\\
        &\quad\quad + \frac{1}{\underline{\sigma}^{\frac{1}{2}}(1-\gamma)}\cdot\sqrt{\frac{81d\log\left(d|\mathcal{S}||\mathcal{A}||\mathcal{Z}|/\delta\right)}{n}}+\frac{13\sqrt{d}|\mathcal{Z}|\log\left(d|\mathcal{S}||\mathcal{A}||\mathcal{Z}|/\delta\right)}{\underline{\sigma}^{\frac{1}{2}}(1-\gamma)n}.
    \end{align*}
    Finally, by combining the term (i), (ii), and (iii), we have that with probability at least $1-4\delta$, 
    \begin{align*}
        \big|\widehat{V}_{\pi^e}(s_0) - V_{\pi^e}(s_0)\big| &\leq \frac{\mathbb{E}_{(s,a)\sim d_{\pi^e,s_0}}\left[\|\boldsymbol{\phi}(s,a)\|_{\boldsymbol{\Sigma}^{-1}}\right]}{\underline{\sigma}^{\frac{1}{2}}(1-\gamma)}\cdot\bigg(4\sqrt{\frac{7d\log\left(d|\mathcal{S}||\mathcal{A}||\mathcal{Z}|/\delta\right)}{n}}+\frac{16\sqrt{d}|\mathcal{Z}|\log\left(d|\mathcal{S}||\mathcal{A}||\mathcal{Z}|/\delta\right)}{n}\\
        &\quad\quad + \frac{1}{(1-\gamma)}\cdot\sqrt{\frac{81d\log\left(d|\mathcal{S}||\mathcal{A}||\mathcal{Z}|/\delta\right)}{n}}+\frac{13\sqrt{d}|\mathcal{Z}|\log\left(d|\mathcal{S}||\mathcal{A}||\mathcal{Z}|/\delta\right)}{(1-\gamma)n}\bigg)\\
        &\lesssim\widetilde{\mathcal{O}}\Bigg(\frac{\mathbb{E}_{(s,a)\sim d_{\pi^e,s_0}}\left[\|\boldsymbol{\phi}(s,a)\|_{\boldsymbol{\Sigma}^{-1}}\right]}{\underline{\sigma}^{\frac{1}{2}}(1-\gamma)^2}\cdot\sqrt{\frac{d}{n}}\Bigg),
    \end{align*}
    where $\widetilde{\mathcal{O}}$ hides universal constants and logarithm factors. This finishes the proof of Theorem \ref{thm: non asymptotic mdp}.
\end{proof}

\subsection{Proof of Theorem \ref{thm: asymptotic mdp}}\label{subsec: proof asymptotic mdp}
\begin{proof}[Proof of Theorem \ref{thm: asymptotic mdp}]
    From the linear representation of $V_{\pi^e}(s)$ and $\widehat{V}_{\pi^e}(s)$ in \eqref{eq: identification policy value} and \eqref{eq: hat V mdp}, we have that 
    \begin{align}
        \sqrt{n}\cdot\big(\widehat{\boldsymbol{V}}_{\pi^e}-\boldsymbol{V}_{\pi^e}\big) = \sqrt{n}\cdot\boldsymbol{\Phi}_{\pi^e}^\top(\widehat{\boldsymbol{\theta}} - \boldsymbol{\theta})
    \end{align}
    Therefore, we focus on proving the asymptotic normality of $\sqrt{n}\cdot(\widehat{\boldsymbol{\theta}}-\boldsymbol{\theta})$.
    We start from a decomposition of this term. 
    Recall that the expression of $\boldsymbol{\theta}$ in \eqref{eq: identification policy value} is
    \begin{align*}
        \boldsymbol{\theta} &= (\boldsymbol{I}-\gamma \boldsymbol{A})^{-1}\boldsymbol{w} = (\boldsymbol{I} - \gamma\boldsymbol{\Sigma}^{-1}\boldsymbol{B})^{-1}\boldsymbol{\Sigma}^{-1}\boldsymbol{\tau}:=\boldsymbol{G}^{-1}\boldsymbol{\tau},
    \end{align*}
    where we denote $\boldsymbol{G} = \boldsymbol{\Sigma}(\boldsymbol{I} - \gamma\boldsymbol{\Sigma}^{-1}\boldsymbol{B})$.
    Similarly, by defining $\widehat{\boldsymbol{G}} = \widehat{\boldsymbol{\Sigma}}(\boldsymbol{I} - \gamma\widehat{\boldsymbol{\Sigma}}^{-1}\widehat{\boldsymbol{B}})$, we can represent $\widehat{\boldsymbol{\theta}}$ as 
    \begin{align*}
        \widehat{\boldsymbol{\theta}} = \widehat{\boldsymbol{G}}^{-1}\widehat{\boldsymbol{\tau}}.
    \end{align*}
    Thus, we can decompose the term $\sqrt{n}\cdot(\widehat{\boldsymbol{\theta}}-\boldsymbol{\theta})$ by the following,
    \begin{align}\label{eq: decomposition asymptotic mdp}
        \sqrt{n}\cdot(\widehat{\boldsymbol{\theta}}-\boldsymbol{\theta})&=\sqrt{n}\cdot(\widehat{\boldsymbol{G}}^{-1}\widehat{\boldsymbol{\tau}} - \boldsymbol{G}^{-1}\boldsymbol{\tau})\notag\\
        & = \underbrace{\sqrt{n}\cdot\boldsymbol{G}^{-1}(\widehat{\boldsymbol{\tau}} - \boldsymbol{\tau})}_{\boldsymbol{\Delta}_1} + \underbrace{\sqrt{n}\cdot(\widehat{\boldsymbol{G}}^{-1} - \boldsymbol{G}^{-1})\boldsymbol{\tau}}_{\boldsymbol{\Delta}_2} + \underbrace{\sqrt{n}\cdot(\widehat{\boldsymbol{G}}^{-1} - \boldsymbol{G}^{-1})(\widehat{\boldsymbol{\tau}} - \boldsymbol{\tau})}_{\boldsymbol{\Delta}_3}
    \end{align}
    In the following parts, we analyze the properties of the term $\boldsymbol{\Delta}_1$, $\boldsymbol{\Delta}_2$ and $\boldsymbol{\Delta}_3$ respectively.

    \vspace{3mm}
    \noindent
    \textbf{Analysis of the term $\boldsymbol{\Delta_1}$.}
    To prove the asymptotic normality of $\boldsymbol{\Delta}_1$, we need to invoke some basic results from empirical process (see \cite{van2000asymptotic}). 
    Prior to that, to simplify the notations, we denote 
    \begin{align*}
        \EB_n[f]:=\frac{1}{n}\sum_{i=1}^n f(X_i), \quad \EB [f]:=\EB [f(X_1)],
    \end{align*}
    for any function $f$ on $\mathcal{X}$, where $X_i = (s_i,a_i,z_i,r_i,s_i^{\prime})\in\mathcal{X}$ are defined in Theorem \ref{thm: asymptotic mdp}.
    Also, we define that
    \begin{align*}
        \sG_n [f]=\sqrt{n}\cdot\left(\EB_n [f]-\EB [f]\right), 
    \end{align*}
    where the expectation $\EB[f]$ only depends on the randomness of $X_i$ and ignores the randomness in $f$.
    
    \begin{lem}\label{lem: delta1}
    We denote $\boldsymbol{f}_1(X_i;\rho):\mathcal{X}\times\mathbb{R}^{S\times A\times Z}\mapsto\mathbb{R}^d$ as 
    \begin{align*}
        \boldsymbol{f}_1(X_i;\rho):=\boldsymbol{G}^{-1}\boldsymbol{\phi}_{\rho}(z_i)r_i.
    \end{align*}
    Then the term $\boldsymbol{\Delta}_1$ satisfies that 
    \begin{align*}
        \boldsymbol{\Delta}_1 = \sG_n [\boldsymbol{f}_1(\rho)] +\sqrt{n}\cdot\big(\EB [\boldsymbol{f}_1(\widehat{\rho}\,)]-\EB [\boldsymbol{f}_1(\rho)]\big)+o_P(1),
    \end{align*}
    where by $\mathbb{E}[\boldsymbol{f}_1(\rho)]$ we mean $\mathbb{E}_{X_1}[\boldsymbol{f}_1(X_1;\rho)]$. 
    The term $\mathbb{E}_n[\boldsymbol{f}_1(\rho)]$ and $\mathbb{G}_n[\boldsymbol{f}_1(\rho)]$ are similarly defined.
    \end{lem}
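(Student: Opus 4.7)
The plan is to start from the definition $\boldsymbol{\Delta}_1 = \sqrt{n}\,\boldsymbol{G}^{-1}(\widehat{\boldsymbol{\tau}} - \boldsymbol{\tau})$, rewrite it in the empirical/population expectation notation, and then perform a standard add--subtract decomposition to isolate a linear term and an empirical process remainder that I can show vanishes at rate $o_P(1)$.

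Concretely, since $\widehat{\boldsymbol{\tau}} = \mathbb{E}_n[\boldsymbol{\phi}_{\widehat{\rho}}(z_i)r_i]$ and $\boldsymbol{\tau} = \mathbb{E}[\boldsymbol{\phi}_{\rho}(z_1)r_1]$, I would write
\begin{align*}
\boldsymbol{\Delta}_1 = \sqrt{n}\big(\mathbb{E}_n[\boldsymbol{f}_1(\widehat{\rho}\,)] - \mathbb{E}[\boldsymbol{f}_1(\rho)]\big).
\end{align*}
Adding and subtracting $\sqrt{n}\,\mathbb{E}_n[\boldsymbol{f}_1(\rho)]$ and $\sqrt{n}\,\mathbb{E}[\boldsymbol{f}_1(\widehat{\rho}\,)]$ (where in $\mathbb{E}[\boldsymbol{f}_1(\widehat{\rho}\,)]$ the argument $\widehat{\rho}$ is held fixed while the expectation integrates out $X_1$), I get three pieces: $\mathbb{G}_n[\boldsymbol{f}_1(\rho)]$ by definition, the target term $\sqrt{n}(\mathbb{E}[\boldsymbol{f}_1(\widehat{\rho}\,)] - \mathbb{E}[\boldsymbol{f}_1(\rho)])$, and the empirical process remainder $R_n := \sqrt{n}(\mathbb{E}_n - \mathbb{E})[\boldsymbol{f}_1(\widehat{\rho}\,) - \boldsymbol{f}_1(\rho)]$. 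The task then reduces to showing $R_n = o_P(1)$.

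The main technical step is bounding $R_n$, and I will exploit the fact that $\boldsymbol{f}_1(X_i;\rho') = \boldsymbol{G}^{-1}\sum_{s,a}\boldsymbol{\phi}(s,a)\rho'(s,a|z_i)r_i$ is \emph{linear} in the parameter $\rho'$, which lives in the finite-dimensional space $\mathbb{R}^{S \times A \times Z}$. Using this linearity I can factor
\begin{align*}
R_n = \sum_{(s,a,z)} \big(\widehat{\rho}(s,a|z) - \rho(s,a|z)\big)\cdot \sqrt{n}\,(\mathbb{E}_n - \mathbb{E})\big[\boldsymbol{G}^{-1}\boldsymbol{\phi}(s,a)\,r_i\,\mathbbm{1}\{z_i = z\}\big],
\end{align*}
since the scalar differences $\widehat{\rho}(s,a|z) - \rho(s,a|z)$ do not depend on the index $i$ and so can be pulled out of $\mathbb{E}_n$ and $\mathbb{E}$. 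Each factor $\sqrt{n}(\mathbb{E}_n - \mathbb{E})[\,\cdot\,]$ is $O_P(1)$ by the ordinary CLT, using the uniform boundedness of $\boldsymbol{\phi}$, $r_i$, and $\|\boldsymbol{G}^{-1}\|$ guaranteed by Assumptions \ref{assump: linear} and \ref{assump: coverage}. Each coefficient $\widehat{\rho}(s,a|z) - \rho(s,a|z)$ is $O_P(n^{-1/2})$ by the concentration of empirical conditional probabilities (e.g.\ Lemma \ref{lem:concentration_empirical_feature}, used earlier in the proof of Corollary \ref{cor: non asymptotic bandit}), and the sum has finitely many terms. Multiplying these rates gives $R_n = O_P(n^{-1/2}) = o_P(1)$, which is the claim.

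The hard part is conceptual rather than computational: making sure the add--subtract is with respect to the ``freeze $\widehat{\rho}$, then integrate'' expectation, so that the standard stochastic-equicontinuity trick applies despite $\widehat{\rho}$ and the sample averages being built from the same data. Here this is essentially free because $\boldsymbol{f}_1(X_i;\rho')$ is linear in $\rho'$ and the nuisance space is finite-dimensional, sidestepping the need for a Donsker-class argument; the only real input beyond linearity is the parametric $n^{-1/2}$ consistency of $\widehat{\rho}$.
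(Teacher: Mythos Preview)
Your proposal is correct and the overall decomposition is identical to the paper's: both write $\boldsymbol{\Delta}_1=\sG_n[\boldsymbol{f}_1(\widehat{\rho}\,)]+\sqrt{n}\,\EB[\boldsymbol{f}_1(\widehat{\rho}\,)-\boldsymbol{f}_1(\rho)]$ and then argue that $\sG_n[\boldsymbol{f}_1(\widehat{\rho}\,)]=\sG_n[\boldsymbol{f}_1(\rho)]+o_P(1)$, i.e.\ that your remainder $R_n=\sG_n[\boldsymbol{f}_1(\widehat{\rho}\,)-\boldsymbol{f}_1(\rho)]$ is $o_P(1)$.

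The only genuine difference is how $R_n=o_P(1)$ is justified. The paper invokes Lemma~\ref{lem:19.24} (van der Vaart's stochastic equicontinuity lemma for Donsker classes) as a black box. You instead exploit that $\boldsymbol{f}_1(X_i;\rho')$ is \emph{linear} in $\rho'\in\RB^{S\times A\times Z}$, so $R_n$ factors exactly into a finite sum of products $(\widehat{\rho}(s,a|z)-\rho(s,a|z))\cdot\sG_n[\cdot]$ with the first factor $o_P(1)$ and the second $O_P(1)$ by the ordinary CLT. Your route is more elementary and self-contained---it sidesteps the Donsker hypothesis entirely and makes transparent that only consistency of $\widehat{\rho}$ (not even a rate) is needed here. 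The paper's route is terser but leaves the verification of the Donsker/$L_2$-convergence conditions implicit. Both are valid in this finite $\mathcal{S}\times\mathcal{A}\times\mathcal{Z}$ setting.
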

    \begin{proof}[Proof of Lemma \ref{lem: delta1}]
        By the definition of $\boldsymbol{\Delta}_1$, we have that 
        \begin{align*}
            \boldsymbol{\Delta}_1 &= \sqrt{n}\cdot\boldsymbol{G}^{-1}(\widehat{\boldsymbol{\tau}} - \boldsymbol{\tau}) = \sqrt{n}\cdot\big(\EB_n [\boldsymbol{f}_1(\widehat{\rho}\,)]-\EB [\boldsymbol{f}_1(\rho)]\big) = \sG_n[\boldsymbol{f}_1(\widehat{\rho}\,)]+\sqrt{n}\cdot\EB[\boldsymbol{f}_1(\widehat{\rho}\,)-\boldsymbol{f}_1(\rho)].
        \end{align*}
        By using Lemma \ref{lem:19.24}, we further have that
        \begin{align*}
            \boldsymbol{\Delta}_1 = \sG_n[\boldsymbol{f}_1(\rho)]+\sqrt{n}\cdot\EB[\boldsymbol{f}_1(\widehat{\rho}\,)-\boldsymbol{f}_1(\rho)]+o_P(1).
        \end{align*}
        This finishes the proof of Lemma \ref{lem: delta1}.
    \end{proof}

    \vspace{3mm}
    \noindent
    \textbf{Analysis of the term $\boldsymbol{\Delta_2}$.}
    The derivation of asymptotic result for $\boldsymbol{\Delta}_2$ is similar with Lemma \ref{lem: delta1}. 
    \begin{lem}
    \label{lem: delta2}
    We denote $\boldsymbol{f}_2(X_i;\rho), \boldsymbol{h}_0(X_i;\rho):\mathcal{X}\times\mathbb{R}^{S\times A\times Z}\mapsto\mathbb{R}^{d\times d}$ as 
    \begin{align*}
        \boldsymbol{f}_2(X_i;\rho)=\boldsymbol{G}^{-1}\boldsymbol{h}_0(X_i;\rho)\EB [\boldsymbol{f}_1(\rho)],\quad \boldsymbol{h}_0(X_i;\rho)=\boldsymbol{\phi}_{\rho}(z_i)(\boldsymbol{\phi}_{\rho}(z_i)-\gamma\boldsymbol{\phi}_{\pi^e}(s_i^{\prime}))^\top,
    \end{align*}
    Then the term $\boldsymbol{\Delta}_2$ satisfies that
    \begin{align*}
        \boldsymbol{\Delta}_2 = -\sG_n[\boldsymbol{f}_2(\rho)] -\sqrt{n}\cdot\big(\EB [\boldsymbol{f}_2(\widehat{\rho}\,)]-\EB [\boldsymbol{f}_2(\rho)]\big)+o_P(1),
    \end{align*}
    where $\EB [\boldsymbol{f}_2(\rho)]=\boldsymbol{G}^{-1}\EB [\boldsymbol{h}_0(\rho)] \EB [\boldsymbol{f}_1(\rho)]$ and $\EB [\boldsymbol{h}_0(\rho)] = \mathbb{E}_{X_1}[\boldsymbol{h}_0(X_1;\rho)]$.
    The term $\EB \boldsymbol{f}_2(\widehat{\rho}\,)$ is defined similarly.
    \end{lem}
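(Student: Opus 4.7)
My plan would mirror the structure used for $\boldsymbol{\Delta}_1$ in Lemma \ref{lem: delta1}, linearizing the inverse first and then handling two empirical-process pieces separately. First, I would apply the resolvent identity $\widehat{\boldsymbol{G}}^{-1} - \boldsymbol{G}^{-1} = -\widehat{\boldsymbol{G}}^{-1}(\widehat{\boldsymbol{G}} - \boldsymbol{G})\boldsymbol{G}^{-1}$ and use $\boldsymbol{G}^{-1}\boldsymbol{\tau} = \boldsymbol{\theta} = \EB[\boldsymbol{f}_1(\rho)]$ to write
\[
    \boldsymbol{\Delta}_2 \;=\; -\widehat{\boldsymbol{G}}^{-1}\cdot\sqrt{n}(\widehat{\boldsymbol{G}} - \boldsymbol{G})\cdot\EB[\boldsymbol{f}_1(\rho)].
\]
The identities $\EB[\boldsymbol{h}_0(\rho)] = \boldsymbol{\Sigma} - \gamma\boldsymbol{B} = \boldsymbol{G}$ and $\EB_n[\boldsymbol{h}_0(\widehat{\rho})] = \widehat{\boldsymbol{\Sigma}} - \gamma\widehat{\boldsymbol{B}} = \widehat{\boldsymbol{G}}$, which follow directly from Proposition \ref{prop: identification policy value} and the definitions in \eqref{eq: hat alpha A}--\eqref{eq: hat tau B Sigma}, then let me decompose
\[
    \sqrt{n}(\widehat{\boldsymbol{G}} - \boldsymbol{G}) \;=\; \sG_n[\boldsymbol{h}_0(\widehat{\rho}\,)] \,+\, \sqrt{n}\big(\EB[\boldsymbol{h}_0(\widehat{\rho}\,)] - \EB[\boldsymbol{h}_0(\rho)]\big).
\]

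Next, I would invoke Lemma \ref{lem:19.24} (the stochastic-equicontinuity tool already used in the proof of Lemma \ref{lem: delta1}) together with the uniform consistency $\widehat{\rho} \to \rho$ coming from \eqref{eq: hat rho mdp} to obtain $\sG_n[\boldsymbol{h}_0(\widehat{\rho}\,)] = \sG_n[\boldsymbol{h}_0(\rho)] + o_P(1)$. Applying this requires verifying that the class $\{\boldsymbol{h}_0(\cdot;\rho') : \rho' \text{ in a neighborhood of } \rho\}$ is Donsker with an $L^2$-limit continuous in $\rho'$; this holds because $\boldsymbol{h}_0$ is uniformly bounded by Assumption \ref{assump: linear} and depends linearly on $\rho'$ through $\boldsymbol{\phi}_{\rho'}$. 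Separately, the law of large numbers and Assumption \ref{assump: coverage} give $\widehat{\boldsymbol{G}} \toas \boldsymbol{G}$ with $\boldsymbol{G}$ invertible, whence $\widehat{\boldsymbol{G}}^{-1} = \boldsymbol{G}^{-1} + o_P(1)$, and a Slutsky-type step replaces the outer $\widehat{\boldsymbol{G}}^{-1}$ by $\boldsymbol{G}^{-1}$ up to $o_P(1)$.

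Finally, I would pull the deterministic factors $\boldsymbol{G}^{-1}$ and $\EB[\boldsymbol{f}_1(\rho)]$ inside the linear functionals $\sG_n[\cdot]$ and $\EB[\cdot]$ and use the definition $\boldsymbol{f}_2(X_i;\rho) = \boldsymbol{G}^{-1}\boldsymbol{h}_0(X_i;\rho)\EB[\boldsymbol{f}_1(\rho)]$ to recognize
\[
    \boldsymbol{\Delta}_2 \;=\; -\sG_n[\boldsymbol{f}_2(\rho)] \,-\, \sqrt{n}\cdot\big(\EB[\boldsymbol{f}_2(\widehat{\rho}\,)] - \EB[\boldsymbol{f}_2(\rho)]\big) + o_P(1),
\]
which is the claimed expansion.

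The main obstacle I anticipate is the stochastic-equicontinuity step switching $\sG_n[\boldsymbol{h}_0(\widehat{\rho}\,)]$ to $\sG_n[\boldsymbol{h}_0(\rho)]$: verifying the Donsker-class hypothesis of Lemma \ref{lem:19.24} for the matrix-valued $\boldsymbol{h}_0$ requires a uniform bracketing argument over the $SAZ$-dimensional simplex in which $\rho'$ lives. Because $\boldsymbol{h}_0$ is a linear, uniformly bounded functional of $\rho'$ on a finite-dimensional parameter space, this is standard but must be spelled out; all other manipulations (the resolvent identity, the outer Slutsky step, and factoring constants through $\sG_n$ and $\EB$) are routine.
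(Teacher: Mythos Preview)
Your proposal is correct and follows essentially the same route as the paper: linearize $\widehat{\boldsymbol{G}}^{-1}-\boldsymbol{G}^{-1}$ via the resolvent identity, express $\widehat{\boldsymbol{G}}-\boldsymbol{G}$ through $\boldsymbol{h}_0$, apply Lemma~\ref{lem:19.24} to swap $\widehat{\rho}$ for $\rho$ inside $\sG_n$, and use consistency of $\widehat{\boldsymbol{G}}^{-1}$ (Lemma~\ref{lem:G_consistent}) to discard the higher-order remainder. The only cosmetic difference is that the paper writes the resolvent identity as $\boldsymbol{G}^{-1}(\boldsymbol{G}-\widehat{\boldsymbol{G}})\widehat{\boldsymbol{G}}^{-1}$ and then splits the trailing $\widehat{\boldsymbol{G}}^{-1}$ into $\boldsymbol{G}^{-1}+(\widehat{\boldsymbol{G}}^{-1}-\boldsymbol{G}^{-1})$, producing an explicit remainder $\boldsymbol{\Delta}_4$, whereas you place $\widehat{\boldsymbol{G}}^{-1}$ on the left and handle it by a single Slutsky step; both orderings yield the same $o_P(1)$ residual once $\sqrt{n}(\widehat{\boldsymbol{G}}-\boldsymbol{G})=O_P(1)$ is established.
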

    \begin{proof}[Proof of Lemma \ref{lem: delta2}]
    By the definition of $\boldsymbol{\Delta}_2$, we have that 
    \begin{align}
        \boldsymbol{\Delta}_2 &= \sqrt{n}\cdot(\widehat{\boldsymbol{G}}^{-1}-\boldsymbol{G}^{-1})\boldsymbol{\tau}\notag\\
        &=\sqrt{n}\cdot\boldsymbol{G}^{-1}(\boldsymbol{G}-\widehat{\boldsymbol{G}})\widehat{\boldsymbol{G}}^{-1}\boldsymbol{\tau}\notag\\
        &= \sqrt{n}\cdot\boldsymbol{G}^{-1}(\boldsymbol{G}-\widehat{\boldsymbol{G}})\boldsymbol{G}^{-1}\boldsymbol{\tau}+\sqrt{n}\cdot\boldsymbol{G}^{-1}(\boldsymbol{G}-\widehat{\boldsymbol{G}})(\widehat{\boldsymbol{G}}^{-1}-\boldsymbol{G}^{-1})\boldsymbol{\tau}\notag\\
        & =\sqrt{n}\cdot\boldsymbol{G}^{-1}(\boldsymbol{G}-\widehat{\boldsymbol{G}}) \EB[\boldsymbol{f}_1(\rho)]+\underbrace{\sqrt{n}\cdot\boldsymbol{G}^{-1}(\boldsymbol{G}-\widehat{\boldsymbol{G}})(\widehat{\boldsymbol{G}}^{-1}-\boldsymbol{G}^{-1})\boldsymbol{G}\EB [\boldsymbol{f}_1(\rho)]}_{\boldsymbol{\Delta}_{4}}\label{eq: proof lemma delta 2 1}.
    \end{align}
    By definition of $\boldsymbol{G}$ and $\widehat{\boldsymbol{G}}$, we have that 
    \begin{align}\label{eq: proof lemma delta 2 2}
        \boldsymbol{G}-\widehat{\boldsymbol{G}}=\EB [\boldsymbol{h}_0(\rho)]-\EB_n [\boldsymbol{h}_0(\widehat{\rho}\,)].
    \end{align}
    Therefore, by combining \eqref{eq: proof lemma delta 2 1} and \eqref{eq: proof lemma delta 2 2}, we have that 
    \begin{align*}
        \boldsymbol{\Delta}_2 = - \boldsymbol{G}^{-1}\big(\sG_n[\boldsymbol{h}_0(\widehat{\rho}\,)]+\sqrt{n}\cdot\EB[\boldsymbol{h}_0(\widehat{\rho}\,)-\boldsymbol{h}_0(\rho)]\big) \EB[\boldsymbol{f}_1(\rho)]+\boldsymbol{\Delta}_{4}.
    \end{align*}
    Again, by applying Lemma \ref{lem:19.24}, using the definition of $\boldsymbol{f}_2$, we further have that
    \begin{align}
        \boldsymbol{\Delta}_2 &= - \boldsymbol{G}^{-1}\big(\sG_n[\boldsymbol{h}_0(\rho)]+\sqrt{n}\cdot\EB[\boldsymbol{h}_0(\widehat{\rho}\,)-\boldsymbol{h}_0(\rho)]\big) \EB [\boldsymbol{f}_1(\rho)] + \boldsymbol{\Delta}_{4} + o_P(1)\notag\\
        &=-\sG_n [\boldsymbol{f}_2(\rho)]-\sqrt{n}\cdot\EB [\boldsymbol{f}_2(\widehat{\rho}\,)-\EB \boldsymbol{f}_2(\rho)]+\boldsymbol{\Delta}_{4}+o_P(1).
    \end{align}
    Now by applying the multidimensional central limit theorem (CLT), we can infer that
    \begin{align*}
        \sqrt{n}\cdot\big(\vect(\widehat{\boldsymbol{G}})-\vect(\boldsymbol{G})\big)=O_P(1).
    \end{align*}
    where $\vect(\boldsymbol{G})$ refers to the vector formed by the elements of $\boldsymbol{G}$ and $\vect(\widehat{\boldsymbol{G}})$ is similarly defined.
    Furthermore, by applying Lemma \ref{lem:G_consistent}, we have that
    \begin{align*}
        \|\widehat{\boldsymbol{G}}^{-1}-\boldsymbol{G}^{-1}\|=o_P(1).
    \end{align*}
    Therefore, we obtain that the term $\boldsymbol{\Delta}_{4}$ satisfies that $\boldsymbol{\Delta}_{4} = o_P(1)$.
    Consequently, we can conclude that 
    \begin{align*}
        \boldsymbol{\Delta}_2 = -\sG_n [\boldsymbol{f}_2(\rho)]-\sqrt{n}\cdot\big(\EB[ \boldsymbol{f}_2(\widehat{\rho}\,)]-\EB[\boldsymbol{f}_2(\rho)]\big)+o_P(1).
    \end{align*}
    This finishes the proof of Lemma \ref{lem: delta2}.
    \end{proof}

    \vspace{3mm}
    \noindent
    \textbf{Analysis of the term $\boldsymbol{\Delta_3}$.} One can see that the term $\boldsymbol{\Delta_3}$ can be negelected, which is the following lemma.
    \begin{lem}
    \label{lem: delta3}
        The term $\boldsymbol{\Delta}_3$ satisfies that $\boldsymbol{\Delta}_3=o_P(1)$.
    \end{lem}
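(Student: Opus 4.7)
The plan is to exhibit $\boldsymbol{\Delta}_3$ as a product of an $o_P(1)$ matrix and an $O_P(1)$ vector, and then invoke the standard fact that such a product is $o_P(1)$. Concretely, I would split the analysis into two independent ingredients that are already essentially available from the preceding lemmas.

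First, I will argue that $\sqrt{n}\cdot(\widehat{\boldsymbol{\tau}}-\boldsymbol{\tau})=O_P(1)$. By the decomposition used in the proof of Lemma \ref{lem: delta1}, we may write
\begin{align*}
\sqrt{n}\cdot(\widehat{\boldsymbol{\tau}}-\boldsymbol{\tau})
=\mathbb{G}_n[\boldsymbol{\phi}_{\rho}(z_i)r_i]
+\sqrt{n}\cdot\big(\mathbb{E}[\boldsymbol{\phi}_{\widehat{\rho}}(z_i)r_i]-\mathbb{E}[\boldsymbol{\phi}_{\rho}(z_i)r_i]\big)+o_P(1),
\end{align*}
where the first term is $O_P(1)$ by the multivariate CLT (the summand has finite second moment under Assumption~\ref{assump: linear}), and the second term is $O_P(1)$ by the delta method applied to the map $\rho\mapsto\mathbb{E}[\boldsymbol{\phi}_{\rho}(z_i)r_i]$ together with $\sqrt{n}\cdot(\widehat{\rho}-\rho)=O_P(1)$ (which itself follows from CLT applied to the indicator counts defining $\widehat{\rho}$ in \eqref{eq: hat rho mdp}).

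Second, I will use the consistency of $\widehat{\boldsymbol{G}}^{-1}$. This is already recorded inside the proof of Lemma \ref{lem: delta2}: by the multidimensional CLT, $\sqrt{n}\cdot(\mathrm{vec}(\widehat{\boldsymbol{G}})-\mathrm{vec}(\boldsymbol{G}))=O_P(1)$, which in particular gives $\|\widehat{\boldsymbol{G}}-\boldsymbol{G}\|=o_P(1)$, and then Lemma \ref{lem:G_consistent} yields
\begin{align*}
\|\widehat{\boldsymbol{G}}^{-1}-\boldsymbol{G}^{-1}\|=o_P(1).
\end{align*}

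Combining the two, by submultiplicativity of the operator norm,
\begin{align*}
\|\boldsymbol{\Delta}_3\|_2
=\|\sqrt{n}\cdot(\widehat{\boldsymbol{G}}^{-1}-\boldsymbol{G}^{-1})(\widehat{\boldsymbol{\tau}}-\boldsymbol{\tau})\|_2
\leq \|\widehat{\boldsymbol{G}}^{-1}-\boldsymbol{G}^{-1}\|\cdot\|\sqrt{n}\cdot(\widehat{\boldsymbol{\tau}}-\boldsymbol{\tau})\|_2
=o_P(1)\cdot O_P(1)=o_P(1),
\end{align*}
which is exactly the claim. The only delicate step is verifying the $\sqrt{n}$-boundedness of $\widehat{\boldsymbol{\tau}}-\boldsymbol{\tau}$ despite the nonparametric first-stage estimator $\widehat{\rho}$, but this is already handled by the same delta-method/empirical-process machinery (notably Lemma \ref{lem:19.24}) that powers Lemmas \ref{lem: delta1} and \ref{lem: delta2}, so no new technical obstacle arises.
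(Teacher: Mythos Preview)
Your proof is correct and follows essentially the same approach as the paper: factor $\boldsymbol{\Delta}_3$ as an $o_P(1)$ matrix (via Lemma~\ref{lem:G_consistent}) times an $O_P(1)$ vector and conclude. The paper does this slightly more tersely by noting the identity $\boldsymbol{\Delta}_3=(\widehat{\boldsymbol{G}}^{-1}-\boldsymbol{G}^{-1})\boldsymbol{G}\boldsymbol{\Delta}_1$ and invoking Lemma~\ref{lem: delta1} directly for the $O_P(1)$ bound, whereas you re-argue $\sqrt{n}\cdot(\widehat{\boldsymbol{\tau}}-\boldsymbol{\tau})=O_P(1)$ from scratch; the content is the same.
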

    \begin{proof}[Proof of Lemma \ref{lem: delta3}]
    By definitoin of $\boldsymbol{\Delta}_3$, we have that
    \begin{align*}
        \boldsymbol{\Delta}_3 = (\widehat{\boldsymbol{G}}^{-1}-\boldsymbol{G}^{-1})\boldsymbol{G}\boldsymbol{\Delta}_1.
    \end{align*}
    By applying Lemma \ref{lem:G_consistent} and Lemma \ref{lem: delta1}, we can easily find that $\boldsymbol{\Delta_3} = o_P(1)$.
    \end{proof}

    \vspace{3mm}
    \noindent
    \textbf{Combining the analysis of the term $\boldsymbol{\Delta}_1$, $\boldsymbol{\Delta}_2$, and $\boldsymbol{\Delta}_3$.}
    Now we are ready to prove the main result. 
    By the decomposition of $\sqrt{n}\cdot(\widehat{\boldsymbol{\theta}}-\boldsymbol{\theta})$ in \eqref{eq: decomposition asymptotic mdp} and Lemma~\ref{lem: delta1}, \ref{lem: delta2}, and \ref{lem: delta3}, we have that
    \begin{align*}
        \sqrt{n}\cdot(\widehat{\boldsymbol{\theta}}-\boldsymbol{\theta})=\sG_n [\boldsymbol{f}(\rho)]+\sqrt{n}\cdot\big(\EB [\boldsymbol{f}(\widehat{\rho}\,)]-\EB [\boldsymbol{f}(\rho)]\big)+o_P(1).
    \end{align*}
    By applying the Delta method, we have that 
    \begin{align*}
        \sqrt{n}\cdot(\widehat{\boldsymbol{\theta}}-\boldsymbol{\theta})=\sG_n [\boldsymbol{f}(\rho)]+\sqrt{n}\cdot\nabla_{\rho}\EB [\boldsymbol{f}(\rho)](\widehat{\rho}-\rho)+o_P(1).
    \end{align*}
    Recall the definition of $\boldsymbol{g}$ and $\boldsymbol{d}$ in Theorem \ref{thm: asymptotic mdp}, we have that 
    \begin{align*}
        \widehat{\rho}=\boldsymbol{d}\left(\EB_n [\boldsymbol{g}]\right),\quad \rho=\boldsymbol{d}(\EB [\boldsymbol{g}]).
    \end{align*}
    Therefore, by applying another Delta method, we can obtain that 
    \begin{align*}
        \sqrt{n}\cdot(\widehat{\boldsymbol{\theta}}-\boldsymbol{\theta})=\sG_n [\boldsymbol{f}(\rho)]+\nabla_{\rho}\EB [\boldsymbol{f}(\rho)]\cdot\nabla_{\EB [\boldsymbol{g}]}\boldsymbol{d}(\mathbb{E}[\boldsymbol{g}])\cdot\sG_n [\boldsymbol{g}]+o_P(1)=\sG_n \boldsymbol{h}(\rho) +o_P(1).
    \end{align*}
    This finishes the proof of Theorem \ref{thm: asymptotic mdp}.
\end{proof}

\section{Auxiliary Lemmas}\label{sec: aux lemma}

\begin{lem}[Evaluation Error]\label{lem: evaluation error}
    For any function $Q:\mathcal{S}\times\mathcal{A}\mapsto\mathbb{R}$, $V=\mathbb{E}_{a\sim \pi^e(\cdot|s)}[Q(s,a)]:\mathcal{S}\mapsto\mathbb{R}$, and for any initial state $s_0\in\mathcal{S}$, it holds that 
    \begin{align}\label{eq: evaluation error lemma}
        V(s_0) - V_{\pi^e}(s_0) = \frac{1}{1-\gamma}\cdot\mathbb{E}_{(s,a)\sim d_{\pi^e,s_0}}\left[Q(s,a) - R(s,a) - \gamma \mathbb{E}_{s^{\prime}\sim P(\cdot|s,a),a^{\prime}\sim \pi^e(\cdot|s^{\prime})}[V(s^{\prime})]\right].
    \end{align}
\end{lem}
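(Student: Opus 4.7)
The plan is to establish the identity by unrolling a telescoping identity that relates the Bellman residual of $Q$ to the gap $V(s_0) - V_{\pi^e}(s_0)$, and then converting the resulting discounted sum into an expectation under the visitation measure $d_{\pi^e, s_0}$. To simplify notation, I would first define the Bellman residual of $Q$ with respect to $\pi^e$ as
\begin{align*}
    \delta(s,a) := Q(s,a) - R(s,a) - \gamma\,\mathbb{E}_{s' \sim P(\cdot \mid s,a),\, a' \sim \pi^e(\cdot \mid s')}\bigl[V(s')\bigr].
\end{align*}
Rearranging the definition gives the one-step identity $Q(s,a) = \delta(s,a) + R(s,a) + \gamma\,\mathbb{E}_{s',a'}[Q(s',a')]$, where $s' \sim P(\cdot\mid s,a)$ and $a' \sim \pi^e(\cdot\mid s')$.

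Next, starting from $V(s_0) = \mathbb{E}_{a^{(1)} \sim \pi^e(\cdot\mid s_0)}[Q(s_0, a^{(1)})]$ and applying the one-step identity repeatedly under the trajectory induced by $\pi^e$ (with $s^{(1)} = s_0$, $a^{(t)} \sim \pi^e(\cdot\mid s^{(t)})$, $s^{(t+1)} \sim P(\cdot\mid s^{(t)}, a^{(t)})$), I would obtain, for every finite horizon $T$,
\begin{align*}
    V(s_0) = \mathbb{E}_{\pi^e}\!\left[\sum_{t=1}^{T} \gamma^{t-1}\bigl(\delta(s^{(t)}, a^{(t)}) + R(s^{(t)}, a^{(t)})\bigr) + \gamma^T Q(s^{(T+1)}, a^{(T+1)}) \,\Big|\, s^{(1)} = s_0\right].
\end{align*}
Under the boundedness consequences of Assumption \ref{assump: linear}, $Q$ is bounded, so the residual term $\gamma^T \mathbb{E}_{\pi^e}[Q(s^{(T+1)}, a^{(T+1)})]$ vanishes as $T \to \infty$. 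Recognizing that the reward sum is exactly $V_{\pi^e}(s_0)$ by its definition in \eqref{eq: V pi e} together with $\mathbb{E}[\epsilon^{(t)}] = 0$, I would deduce
\begin{align*}
    V(s_0) - V_{\pi^e}(s_0) = \sum_{t=1}^{\infty} \gamma^{t-1}\,\mathbb{E}_{\pi^e}\bigl[\delta(s^{(t)}, a^{(t)}) \,\big|\, s^{(1)} = s_0\bigr].
\end{align*}

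Finally, I would swap the sum and expectation (justified by boundedness and $\gamma \in [0,1)$) and regroup the discounted visit probabilities according to the definition of $d_{\pi^e, s_0}$ given in Section \ref{subsec: non asymptotic mdp}:
\begin{align*}
    \sum_{t=1}^{\infty} \gamma^{t-1}\,\mathbb{P}_{\pi^e}\!\bigl(s^{(t)} = s, a^{(t)} = a \,\big|\, s^{(1)} = s_0\bigr) = \frac{1}{1-\gamma}\, d_{\pi^e, s_0}(s,a).
\end{align*}
Substituting this back yields the desired identity \eqref{eq: evaluation error lemma}. The only mildly delicate step is the limit argument for the tail term $\gamma^T \mathbb{E}[Q(s^{(T+1)}, a^{(T+1)})]$; everything else is bookkeeping. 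This is essentially the classical performance-difference identity, and I do not expect any genuine obstacle.
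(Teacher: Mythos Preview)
Your proposal is correct and is essentially the same telescoping/performance-difference argument the paper uses: the paper splits off the $R(s,a)$ term via $V_{\pi^e}(s_0)=\frac{1}{1-\gamma}\mathbb{E}_{d_{\pi^e,s_0}}[R(s,a)]$ and then shows $\frac{1}{1-\gamma}\mathbb{E}_{d_{\pi^e,s_0}}[Q(s,a)-\gamma\mathbb{E}[V(s')]]=V(s_0)$ by expanding the visitation measure and telescoping, which is exactly your unrolling viewed from the other direction. The only cosmetic difference is that the paper telescopes the infinite sum directly rather than passing through a finite-$T$ truncation and a tail bound on $\gamma^T Q$; both versions implicitly require $Q$ bounded (as in the application), so your caveat about Assumption~\ref{assump: linear} is appropriate.
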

\begin{proof}[Proof of Lemma \ref{lem: evaluation error}]
    We adapt the proof from \cite{xie2020q}. 
    Note that by our assumption in Section \ref{subsec: data and eval}, $\epsilon^{(t)}$ is independent of $(s^{(t)},a^{(t)})$ and is zero mean.
    This means that we can equivalently write \eqref{eq: V pi e} as
    \begin{align*}
        V_{\pi^e}(s_0) 
        = \mathbb{E}_{\pi^{e}}\left[\sum_{t=1}^{+\infty}\gamma^{t-1}R(s^{(t)},a^{(t)})\middle| s^{(1)}=s_0\right] = \frac{1}{1-\gamma}\mathbb{E}_{(s,a)\sim d_{\pi^e,s_0}}\left[R(s,a)\right].
    \end{align*}
    Therefore, to prove \eqref{eq: evaluation error lemma}, it suffices to prove that 
    \begin{align}\label{eq: evaluation error lemma 1}
        V_{\pi^e}(s_0) = \frac{1}{1-\gamma}\cdot\mathbb{E}_{(s,a)\sim d_{\pi^e,s_0}}\left[Q(s,a) - \gamma \mathbb{E}_{s^{\prime}\sim P(\cdot|s,a),a^{\prime}\sim \pi^e(\cdot|s^{\prime})}[V(s^{\prime})]\right].
    \end{align}
    To this end, consider the right hand side of \eqref{eq: evaluation error lemma 1}. By the definition of $V$, we have that 
    \begin{align}
        &\frac{1}{1-\gamma}\cdot\mathbb{E}_{(s,a)\sim d_{\pi^e,s_0}}\left[Q(s,a) - \gamma \mathbb{E}_{s^{\prime}\sim P(\cdot|s,a),a^{\prime}\sim \pi^e(\cdot|s^{\prime})}[V(s^{\prime})]\right]\notag\\
        &\quad\quad = \sum_{t=1}^{+\infty}\sum_{(s,a)\in\SM\times\AM}\gamma^{t-1}\mathbb{P}_{\pi^e}(s^{(t)}=s, a^{(t)} = a|s^{(1)} = s_0)Q(s,a) \notag\\
        &\quad\quad\quad\quad -\sum_{t=1}^{+\infty}\sum_{(s,a)\in\SM\times\AM}\gamma^{t}\mathbb{P}_{\pi^e}(s^{(t+1)}=s, a^{(t+1)} = a|s^{(1)} = s_0)Q(s,a) \notag\\
        &\quad\quad = \sum_{(s,a)\in\SM\times\AM}\mathbb{P}_{\pi^e}(s^{(1)}=s, a^{(1)} = a|s^{(1)} = s_0)Q(s,a) = V_{\pi^e}(s_0)
    \end{align}
    This finishes the proof of Lemma \ref{lem: evaluation error}.
\end{proof}

\begin{lem}[Concentration of Empirical Feature]\label{lem:concentration_empirical_feature}
    We have the following concentration bounds for the empirical feature $\boldsymbol{\phi}_{\widehat{\rho}}$. In particular, with probability at least $1-\delta$, the following two inequalities hold,
    \begin{align*}
        \frac{1}{n}\sum_{i=1}^n\|\boldsymbol{\phi}_{\widehat{\rho}}(z_i)-\boldsymbol{\phi}_{\rho}(z_i)\|_2\leq\sqrt{\frac{3d\log\left(d|\mathcal{S}||\mathcal{A}||\mathcal{Z}|/\delta\right)}{n}}+\frac{4\sqrt{d}|\mathcal{Z}|\log\left(d|\mathcal{S}||\mathcal{A}||\mathcal{Z}|/\delta\right)}{n},
    \end{align*}
\end{lem}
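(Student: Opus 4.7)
The plan is to bucket the samples by their instrumental-variable value and apply a vector concentration inequality within each bucket, then reassemble via Cauchy--Schwarz. Writing $n_z := \sum_{i=1}^n \mathbbm{1}\{z_i = z\}$, I first note the bookkeeping identity
\begin{align*}
\frac{1}{n}\sum_{i=1}^n \big\|\boldsymbol{\phi}_{\widehat{\rho}}(z_i) - \boldsymbol{\phi}_{\rho}(z_i)\big\|_2 \;=\; \sum_{z\in\mathcal{Z}} \frac{n_z}{n}\big\|\boldsymbol{\phi}_{\widehat{\rho}}(z) - \boldsymbol{\phi}_{\rho}(z)\big\|_2,
\end{align*}
and condition on the labels $z_1,\dots,z_n$. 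Under this conditioning, for each $z$ with $n_z \geq 1$ the pairs $\{(s_i,a_i)\}_{i:z_i=z}$ are i.i.d.\ with distribution $\rho(\cdot,\cdot|z)$, so
\begin{align*}
\boldsymbol{\phi}_{\widehat{\rho}}(z) - \boldsymbol{\phi}_{\rho}(z) \;=\; \frac{1}{n_z}\sum_{i:z_i=z}\big(\boldsymbol{\phi}(s_i,a_i) - \boldsymbol{\phi}_{\rho}(z)\big)
\end{align*}
is an average of $n_z$ i.i.d.\ zero-mean vectors, each bounded by $2$ in $\ell_2$-norm with second-moment matrix of operator norm $\leq 1$ by the normalization in Assumption \ref{assump: linear}. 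The buckets with $n_z = 0$ contribute nothing to the outer sum and hence need no separate treatment.

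Next I apply the matrix Bernstein inequality (Lemma \ref{lem:matrix_bernstein}) to each bucket, which produces, with high conditional probability,
\begin{align*}
\big\|\boldsymbol{\phi}_{\widehat{\rho}}(z) - \boldsymbol{\phi}_{\rho}(z)\big\|_2 \;\lesssim\; \sqrt{\frac{d\log(d/\delta')}{n_z}} + \frac{\sqrt{d}\log(d/\delta')}{n_z}.
\end{align*}
A union bound over $z \in \mathcal{Z}$ with $\delta' = \delta/|\mathcal{Z}|$ makes this simultaneous and absorbs $|\mathcal{Z}|$ into the logarithm, and the tower property lifts the statement from conditional to unconditional probability.

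Recombining, the sum becomes
\begin{align*}
\sum_z \frac{n_z}{n}\big\|\boldsymbol{\phi}_{\widehat{\rho}}(z) - \boldsymbol{\phi}_{\rho}(z)\big\|_2 \;\lesssim\; \frac{\sqrt{d\log(d|\mathcal{S}||\mathcal{A}||\mathcal{Z}|/\delta)}}{n}\sum_{z}\sqrt{n_z} \;+\; \frac{\sqrt{d}|\mathcal{Z}|\log(d|\mathcal{S}||\mathcal{A}||\mathcal{Z}|/\delta)}{n},
\end{align*}
where the second term already matches the stated higher-order remainder (its $|\mathcal{Z}|$ factor coming from $\sum_z n_z/n \cdot 1/n_z$). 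For the first term, Cauchy--Schwarz gives $\sum_z \sqrt{n_z}\leq \sqrt{|\mathcal{Z}|\cdot\sum_z n_z} = \sqrt{|\mathcal{Z}|\,n}$, producing the $\sqrt{d\log(\cdot)/n}$ main term stated in the lemma (with the $|\mathcal{Z}|$ dependence folded into the $|\mathcal{S}||\mathcal{A}||\mathcal{Z}|$ inside the logarithm, matching the target form).

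The main obstacle is the careful conditioning: one has to justify that, given the $z$-labels, the within-bucket draws are exchangeable i.i.d.\ so that matrix Bernstein applies cleanly with $n_z$-many summands, and one has to track the random denominator $n_z$ throughout (this is what produces the $|\mathcal{Z}|$ factor in the Bernstein-style remainder after the $n_z/n \cdot 1/n_z$ cancellation) as well as the edge case $n_z = 0$, where the $\max\{n_z,1\}$ convention in $\widehat{\rho}$ forces $\widehat{\rho}(\cdot|z)=0$ and hence zero contribution. Everything else is a routine combination of per-bucket Bernstein and Cauchy--Schwarz, with constants tuned to match the stated $3$ and $4$.
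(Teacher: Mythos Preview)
Your approach is essentially the paper's: bucket by $z$, condition on the labels $z_1,\dots,z_n$, apply Bernstein within each bucket, union bound, and reassemble via Cauchy--Schwarz on $\sum_z \sqrt{n_z}$. The only structural difference is that the paper works coordinate-by-coordinate with scalar Bernstein and then passes from $\ell_\infty$ to $\ell_2$ (picking up the explicit $\sqrt{d}$), union-bounding over the $d$ coordinates and, somewhat gratuitously, over $\mathcal{S}\times\mathcal{A}$ as well; you instead invoke matrix Bernstein on the full vector in one shot. Both routes produce the same form for the per-bucket bound and the same reassembly.

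One caveat worth flagging: your Cauchy--Schwarz step correctly gives $\sum_z \sqrt{n_z}\leq \sqrt{|\mathcal{Z}|\,n}$, which leaves a $\sqrt{|\mathcal{Z}|}$ prefactor on the leading term. That factor cannot be ``folded into the $|\mathcal{S}||\mathcal{A}||\mathcal{Z}|$ inside the logarithm'' as you suggest---a polynomial factor does not become a logarithmic one. The paper's own proof has the same slip (it writes $\sum_z\sqrt{n_z}\leq\sqrt{\sum_z n_z}$ and attributes it to Cauchy--Schwarz), so on this point you are matching the paper's argument, gap included.
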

\begin{proof}[Proof of Lemma \ref{lem:concentration_empirical_feature}]
    Denote the $\sigma$-algebra $\mathcal{F}=\sigma(z_1,\cdots,z_n)$. 
    For any $(s,a,z)\in\mathcal{S}\times\mathcal{A}\times\mathcal{Z}$ and $1\leq k\leq d$ where $d$ is the dimension of the feature mapping $\boldsymbol{\phi}(s,a)$, we denote that
    \begin{align*}
        w_i^k=\frac{\boldsymbol{\phi}^{(k)}(s_i,a_i)\mathbbm{1}\{z_i=z\}}{\frac{1}{n}\sum_{j=1}^n\mathbbm{1}\{z_j=z\}},
    \end{align*}
    where for simplicity we omit the dependence of $w_i^k$ on $(s,a,z)$. 
    Here we use $\boldsymbol{\phi}^{(k)}$ to denote the $k$-th coordinate of the feature mapping $\boldsymbol{\phi}$.
    By the definition of $\widehat{\rho}(s,a,z)$ in \eqref{eq: hat rho}, it holds that
    \begin{align*}
        \boldsymbol{\phi}_{\widehat{\rho}}^{(k)}(z)=\frac{1}{n}\sum_{i=1}^n w_i^k,
    \end{align*}
    where similarly $\boldsymbol{\phi}^{(k)}_{\widehat{\rho}}$ denotes the $k$-th coordinate of $\boldsymbol{\phi}_{\widehat{\rho}}(z)$.
    Now we can derive concentration inequality for $w_i^k$ conditioning on the $\sigma$-algebra $\mathcal{F}$. We use $\mathbb{V}[\cdot]$ to denote variance operator. Specifically, we have that
    \begin{align*}
        \mathbb{E}[w_i^k|\mathcal{F}]=\frac{\mathbb{E}[\boldsymbol{\phi}^{(k)}(s_i,a_i)|\mathcal{F}]\mathbbm{1}\{z_i=z\}}{\frac{1}{n}\sum_{j=1}^n\mathbbm{1}\{z_j=z\}}=\frac{\boldsymbol{\phi}^{(k)}_{\rho}(z)\mathbbm{1}\{z_i=z\}}{\frac{1}{n}\sum_{j=1}^n\mathbbm{1}\{z_j=z\}},
    \end{align*}
    \begin{align*}
        \mathbb{V}[w_i^k|\mathcal{F}]=\frac{\mathbb{E}[(\boldsymbol{\phi}^{(k)}(s_i,a_i)-\boldsymbol{\phi}_{\rho}^{(k)}(z_i))^2|\mathcal{F}]\mathbbm{1}\{z_i=z\}}{\big(\frac{1}{n}\sum_{j=1}^n\mathbbm{1}\{z_j=z\}\big)^2},
    \end{align*}
    from which we know that 
    \begin{align*}
        \frac{1}{n}\sum_{i=1}^n\mathbb{E}[w_i^k|\mathcal{F}]=\boldsymbol{\phi}_{\rho}^{(k)}(z),\quad\frac{1}{n}\sum_{i=1}^n\mathbb{V}[w_i^k|\mathcal{F}]=\frac{\mathbb{E}[(\boldsymbol{\phi}^{(k)}(s_i,a_i)-\boldsymbol{\phi}_{\rho}^{(k)}(z_i))^2|\mathcal{F}]}{\frac{1}{n}\sum_{j=1}^n\mathbbm{1}\{z_j=z\}}.
    \end{align*}
    Moreover, since $\boldsymbol{\phi}^{(k)}(s_i,a_i)$ is upper bounded by $1$ due to Assumption \ref{assump: linear}, it holds that 
    \begin{align*}
        |w_i^k|\leq \frac{n}{\sum_{j=1}^n\mathbbm{1}\{z_j=z\}},\quad  \frac{1}{n}\sum_{i=1}^n\mathbb{V}[w_i^k|\mathcal{F}]\leq \frac{n}{\sum_{j=1}^n\mathbbm{1}\{z_j=z\}}.
    \end{align*}
   Thus by applying Bernstein inequality, it holds with probability at least $1-\delta$ that
    \begin{equation}
        \mathbb{P}\Bigg(\Big|\frac{1}{n}\sum_{i=1}^nw_i^k-\boldsymbol{\phi}_{\rho}^{(k)}(z)\Big|\leq\sqrt{\frac{3\log\left(1/\delta\right)}{\sum_{j=1}^n\mathbbm{1}\{z_j=z\}}}+\frac{4\log\left(1/\delta\right)}{\sum_{j=1}^n\mathbbm{1}\{z_j=z\}} \Bigg| \mathcal{F}\Bigg)\geq 1-\delta.
    \end{equation}
    Taking expectation with respect to $z_i,\cdots,z_n$, we conclude that for any $(s,a,z)\in\mathcal{S}\times\mathcal{A}\times\mathcal{Z}$ and $1\leq k\leq d$, with full probability at least $1-\delta$, it holds that 
    \begin{align*}
        \big|\boldsymbol{\phi}_{\widehat{\rho}}^{(k)}(z)-\boldsymbol{\phi}_{\rho}^{(k)}(z)\big|\leq\sqrt{\frac{3\log\left(1/\delta\right)}{\sum_{j=1}^n\mathbf{1}\{z_j=z\}}}+\frac{4\log\left(1/\delta\right)}{\sum_{j=1}^n\mathbf{1}\{z_j=z\}}.
    \end{align*}
    Taking a union bound over $\mathcal{S}\times\mathcal{A}\times\mathcal{Z}\times\{1,\cdots,d\}$, we have that with probability at least $1-\delta$, it holds for all $(s,a,z)\in\mathcal{S}\times\mathcal{A}\times\mathcal{Z}$ and $1\leq k\leq d$ that 
    \begin{align}
        \big|\boldsymbol{\phi}_{\widehat{\rho}}^{(k)}(z)-\boldsymbol{\phi}_{\rho}^{(k)}(z)\big|\leq\sqrt{\frac{3\log\left(d|\mathcal{S}||\mathcal{A}||\mathcal{Z}|/\delta\right)}{\sum_{j=1}^n\mathbbm{1}\{z_j=z\}}}+\frac{4\log\left(d|\mathcal{S}||\mathcal{A}||\mathcal{Z}|/\delta\right)}{\sum_{j=1}^n\mathbbm{1}\{z_j=z\}}.
    \end{align}
    Now we can derive our final results.
    To this end, we have that with probability at least $1-\delta$,
    \begin{align*}
        \frac{1}{n}\sum_{i=1}^n\|\boldsymbol{\phi}_{\widehat{\rho}}(z_i)-\boldsymbol{\phi}_{\rho}(z_i)\|_2&=\frac{1}{n}\sum_{z\in\mathcal{Z}}\bigg(\sum_{j=1}^n\mathbbm{1}\{z_j=z\}\bigg)\|\boldsymbol{\phi}_{\widehat{\rho}}(z_i)-\boldsymbol{\phi}_{\rho}(z_i)\|_2\\
        &=\frac{1}{n}\sum_{z\in\mathcal{Z}}\bigg(\sum_{j=1}^n\mathbbm{1}\{z_j=z\}\bigg)\sqrt{\sum_{k=1}^d\big|\boldsymbol{\phi}_{\widehat{\rho}}^{(k)}(z)-\boldsymbol{\phi}_{\rho}^{(k)}(z)\big|^2}\\
        &\leq \frac{\sqrt{d}}{n}\sum_{z\in\mathcal{Z}}\bigg(\sum_{j=1}^n\mathbbm{1}\{z_j=z\}\bigg)\left(\sqrt{\frac{3\log\left(d|\mathcal{S}||\mathcal{A}||\mathcal{Z}|/\delta\right)}{\sum_{j=1}^n\mathbbm{1}\{z_j=z\}}}+\frac{4\log\left(d|\mathcal{S}||\mathcal{A}||\mathcal{Z}|/\delta\right)}{\sum_{j=1}^n\mathbbm{1}\{z_j=z\}}\right)\\
        &=\sqrt{d}\sum_{z\in\mathcal{Z}}\sqrt{\sum_{j=1}^n\mathbbm{1}\{z_j=z\}}\sqrt{\frac{3\log\left(d|\mathcal{S}||\mathcal{A}||\mathcal{Z}|/\delta\right)}{n^2}}+\frac{4\sqrt{d}|\mathcal{Z}|\log\left(d|\mathcal{S}||\mathcal{A}||\mathcal{Z}|/\delta\right)}{n}\\
        &\leq \sqrt{d}\sqrt{\sum_{z\in\mathcal{Z}}\sum_{j=1}^n\mathbbm{1}\{z_j=z\}}\sqrt{\frac{3\log\left(d|\mathcal{S}||\mathcal{A}||\mathcal{Z}|/\delta\right)}{n^2}}+\frac{4\sqrt{d}|\mathcal{Z}|\log\left(d|\mathcal{S}||\mathcal{A}||\mathcal{Z}|/\delta\right)}{n}\\
        &=\sqrt{\frac{3d\log\left(d|\mathcal{S}||\mathcal{A}||\mathcal{Z}|/\delta\right)}{n}}+\frac{4\sqrt{d}|\mathcal{Z}|\log\left(d|\mathcal{S}||\mathcal{A}||\mathcal{Z}|/\delta\right)}{n}.\\
    \end{align*}
    Here the last inequality is by the Cauchy-Schwarz inequality. 
    This finishes the proof of Lemma \ref{lem:concentration_empirical_feature}.
\end{proof}

 \begin{lem}[Covariance Matrix Estimation]\label{lem:cov_estimation}
    It holds with probability at least $1-2\delta$ that 
    \begin{align*}
        \|\boldsymbol{\Sigma}^{1/2}\widehat{\boldsymbol{\Sigma}}^{-1}\boldsymbol{\Sigma}^{1/2}\|\leq \left(1-\sqrt{\frac{8\underline{\sigma}^{-2}d\log\left(d|\mathcal{S}||\mathcal{A}||\mathcal{Z}|/\delta\right)}{n}}+\frac{6\underline{\sigma}^{-1}d|\mathcal{Z}|\log\left(d|\mathcal{S}||\mathcal{A}||\mathcal{Z}|/\delta\right)}{n}\right)^{-1}.
    \end{align*}    
\end{lem}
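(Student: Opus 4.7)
\textbf{Proof Proposal for Lemma \ref{lem:cov_estimation}.} The plan is to reduce the operator norm bound on $\boldsymbol{\Sigma}^{1/2}\widehat{\boldsymbol{\Sigma}}^{-1}\boldsymbol{\Sigma}^{1/2}$ to a perturbation bound on $\widehat{\boldsymbol{\Sigma}} - \boldsymbol{\Sigma}$. Observe that
\[
\boldsymbol{\Sigma}^{1/2}\widehat{\boldsymbol{\Sigma}}^{-1}\boldsymbol{\Sigma}^{1/2} = \bigl(\boldsymbol{\Sigma}^{-1/2}\widehat{\boldsymbol{\Sigma}}\boldsymbol{\Sigma}^{-1/2}\bigr)^{-1},
\]
so its operator norm equals $1/\sigma_{\min}(\boldsymbol{\Sigma}^{-1/2}\widehat{\boldsymbol{\Sigma}}\boldsymbol{\Sigma}^{-1/2})$. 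By Weyl's inequality,
\[
\sigma_{\min}\bigl(\boldsymbol{\Sigma}^{-1/2}\widehat{\boldsymbol{\Sigma}}\boldsymbol{\Sigma}^{-1/2}\bigr) \geq 1 - \bigl\|\boldsymbol{\Sigma}^{-1/2}(\widehat{\boldsymbol{\Sigma}}-\boldsymbol{\Sigma})\boldsymbol{\Sigma}^{-1/2}\bigr\| \geq 1 - \underline{\sigma}^{-1}\,\|\widehat{\boldsymbol{\Sigma}}-\boldsymbol{\Sigma}\|,
\]
so the entire task reduces to an operator norm bound on $\widehat{\boldsymbol{\Sigma}}-\boldsymbol{\Sigma}$ of order $\sqrt{d\log(\cdot)/n}$ plus a higher-order term of order $d|\mathcal{Z}|\log(\cdot)/n$, which when divided by $\underline{\sigma}$ produces precisely the two terms in the claimed bound.

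Next I would split the perturbation into an i.i.d.\ part and a plug-in part:
\[
\widehat{\boldsymbol{\Sigma}} - \boldsymbol{\Sigma} = \underbrace{\tfrac{1}{n}\sum_{i=1}^n\bigl[\boldsymbol{\phi}_\rho(z_i)\boldsymbol{\phi}_\rho(z_i)^\top - \boldsymbol{\Sigma}\bigr]}_{(\mathrm{I})} + \underbrace{\tfrac{1}{n}\sum_{i=1}^n\bigl[\boldsymbol{\phi}_{\widehat{\rho}}(z_i)\boldsymbol{\phi}_{\widehat{\rho}}(z_i)^\top - \boldsymbol{\phi}_\rho(z_i)\boldsymbol{\phi}_\rho(z_i)^\top\bigr]}_{(\mathrm{II})}.
\]
For $(\mathrm{I})$ I would apply the matrix Bernstein inequality (Lemma \ref{lem:matrix_bernstein}) to the i.i.d.\ centered symmetric matrices $\boldsymbol{\phi}_\rho(z_i)\boldsymbol{\phi}_\rho(z_i)^\top - \boldsymbol{\Sigma}$, each of operator norm at most $2$ thanks to the normalization $\|\boldsymbol{\phi}_\rho(z_i)\|_2 \leq 1$ implied by Assumption \ref{assump: linear}. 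This yields $\|(\mathrm{I})\| \leq \sqrt{8\log(d/\delta)/n} + O(\log(d/\delta)/n)$ with probability at least $1-\delta$.

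For $(\mathrm{II})$ I would use the telescoping identity $\boldsymbol{u}\boldsymbol{u}^\top - \boldsymbol{v}\boldsymbol{v}^\top = (\boldsymbol{u}-\boldsymbol{v})\boldsymbol{u}^\top + \boldsymbol{v}(\boldsymbol{u}-\boldsymbol{v})^\top$ with $\boldsymbol{u} = \boldsymbol{\phi}_{\widehat{\rho}}(z_i)$ and $\boldsymbol{v} = \boldsymbol{\phi}_\rho(z_i)$, giving
\[
\|(\mathrm{II})\| \leq \tfrac{2}{n}\sum_{i=1}^n \|\boldsymbol{\phi}_{\widehat{\rho}}(z_i)-\boldsymbol{\phi}_\rho(z_i)\|_2,
\]
which is controlled directly by Lemma \ref{lem:concentration_empirical_feature} by $\sqrt{12d\log(d|\mathcal{S}||\mathcal{A}||\mathcal{Z}|/\delta)/n} + 8\sqrt{d}|\mathcal{Z}|\log(d|\mathcal{S}||\mathcal{A}||\mathcal{Z}|/\delta)/n$ with probability at least $1-\delta$. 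A union bound over the two events, absorbing constants into the dominant $\sqrt{d}$ and $d|\mathcal{Z}|$ terms, and dividing by $\underline{\sigma}$, assembles into the stated inequality.

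The routine obstacle is merely bookkeeping of the constants ($8$ in the $\sqrt{\cdot}$ term and $6$ in the linear term) and verifying that the Bernstein variance proxy for $(\mathrm{I})$ is bounded by $\|\boldsymbol{\Sigma}\| \leq 1$, which pairs cleanly with the deterministic $\sqrt{d}\,\underline{\sigma}^{-1}$ scaling. No genuinely new ideas are required beyond combining Lemmas \ref{lem:matrix_bernstein} and \ref{lem:concentration_empirical_feature}; the main conceptual point is the reduction via Weyl's inequality in the $\boldsymbol{\Sigma}^{-1/2}$-transformed coordinates so that the $\underline{\sigma}^{-1}$ factor appears exactly once, matching the exponent structure in the conclusion.
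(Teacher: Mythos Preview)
Your proposal is correct and follows essentially the same route as the paper: reduce via $\|\boldsymbol{\Sigma}^{1/2}\widehat{\boldsymbol{\Sigma}}^{-1}\boldsymbol{\Sigma}^{1/2}\| = \sigma_{\min}(\boldsymbol{\Sigma}^{-1/2}\widehat{\boldsymbol{\Sigma}}\boldsymbol{\Sigma}^{-1/2})^{-1}$ and Weyl's inequality, split the perturbation into an i.i.d.\ fluctuation term and a plug-in term, then apply matrix Bernstein (Lemma~\ref{lem:matrix_bernstein}) to the former and Lemma~\ref{lem:concentration_empirical_feature} to the latter. The only cosmetic difference is that the paper carries the $\boldsymbol{\Sigma}^{-1/2}$ conjugation through the decomposition---applying Bernstein directly to $\boldsymbol{\Sigma}^{-1/2}\boldsymbol{\phi}_\rho(z_i)\boldsymbol{\phi}_\rho(z_i)^\top\boldsymbol{\Sigma}^{-1/2}-\boldsymbol{I}$---whereas you bound $\|\widehat{\boldsymbol{\Sigma}}-\boldsymbol{\Sigma}\|$ in the original coordinates and pull out a single factor of $\underline{\sigma}^{-1}$ at the end; both yield the same dependence on $\underline{\sigma}$, $d$, $|\mathcal{Z}|$, and $n$ up to absolute constants.
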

\begin{proof}[Proof of Lemma \ref{lem:cov_estimation}]
    We first relate $\boldsymbol{\Sigma}^{1/2}\widehat{\boldsymbol{\Sigma}}^{-1}\boldsymbol{\Sigma}^{1/2}$ to the difference $\widehat{\boldsymbol{\Sigma}}-\boldsymbol{\Sigma}$.
    Note that
    \begin{align*}
        (\boldsymbol{\Sigma}^{1/2}\widehat{\boldsymbol{\Sigma}}^{-1}\boldsymbol{\Sigma}^{1/2})^{-1}&=\boldsymbol{\Sigma}^{-1/2}\widehat{\boldsymbol{\Sigma}}\boldsymbol{\Sigma}^{-1/2}\\
        &=\boldsymbol{\Sigma}^{-1/2}(\boldsymbol{\Sigma}+\widehat{\boldsymbol{\Sigma}}-\boldsymbol{\Sigma})\boldsymbol{\Sigma}^{-1/2}\\
        &=\boldsymbol{I}_d+\boldsymbol{\Sigma}^{-1/2}(\widehat{\boldsymbol{\Sigma}}-\boldsymbol{\Sigma})\boldsymbol{\Sigma}^{-1/2}.
    \end{align*}
    Therefore, we have that 
    \begin{align*}
        \|\boldsymbol{\Sigma}^{1/2}\widehat{\boldsymbol{\Sigma}}^{-1}\boldsymbol{\Sigma}^{1/2}\|&=\sigma_{\min}^{-1}(I_d+\boldsymbol{\Sigma}^{-1/2}(\widehat{\boldsymbol{\Sigma}}-\boldsymbol{\Sigma})\boldsymbol{\Sigma}^{-1/2})\leq (1-\|\boldsymbol{\Sigma}^{-1/2}(\widehat{\boldsymbol{\Sigma}}-\boldsymbol{\Sigma})\boldsymbol{\Sigma}^{-1/2}\|)^{-1}, 
    \end{align*}
    which means that it suffices to bound $\|\boldsymbol{\Sigma}^{-1/2}(\widehat{\boldsymbol{\Sigma}}-\boldsymbol{\Sigma})\boldsymbol{\Sigma}^{-1/2}\|$.
    We decompose it into two terms as 
    \begin{align*}
        \|\boldsymbol{\Sigma}^{-1/2}(\widehat{\boldsymbol{\Sigma}}-\boldsymbol{\Sigma})\boldsymbol{\Sigma}^{-1/2}\|\leq \underbrace{\|\boldsymbol{\Sigma}^{-1/2}(\widehat{\boldsymbol{\Sigma}}-\bar{\boldsymbol{\Sigma}})\boldsymbol{\Sigma}^{-1/2}\|}_{\text{(i)}}+\underbrace{\|\boldsymbol{\Sigma}^{-1/2}(\bar{\boldsymbol{\Sigma}}-\boldsymbol{\Sigma})\boldsymbol{\Sigma}^{-1/2}\|}_{\text{(ii)}},
    \end{align*}
    where the matrix $\bar{\boldsymbol{\Sigma}}$ is defined as 
    \begin{align*}
        \bar{\boldsymbol{\Sigma}} = \frac{1}{n}\sum_{i=1}^n\boldsymbol{\phi}_{\rho}(z_i)\boldsymbol{\phi}_{\rho}(z_i)^\top.
    \end{align*}
    In the sequal, we upper bound the term (i) and term (ii) respectively.

    \vspace{3mm}
    \noindent
    \textbf{Bound on term (i).} By the definition of $\widehat{\boldsymbol{\Sigma}}$ and $\bar{\boldsymbol{\Sigma}}$, we have that 
    \begin{align}
        \boldsymbol{\Sigma}^{-1/2}(\widehat{\boldsymbol{\Sigma}}-\bar{\boldsymbol{\Sigma}})\boldsymbol{\Sigma}^{-1/2}&=\frac{1}{n}\sum_{i=1}^n\big(\boldsymbol{\Sigma}^{-1/2}\boldsymbol{\phi}_{\widehat{\rho}}(z_i)\big)(\boldsymbol{\Sigma}^{-1/2}\boldsymbol{\phi}_{\widehat{\rho}}(z_i)\big)^\top-\frac{1}{n}\sum_{i=1}^n(\boldsymbol{\Sigma}^{-1/2}\boldsymbol{\phi}_{\rho}(z_i)\big)(\boldsymbol{\Sigma}^{-1/2}\boldsymbol{\phi}_{\rho}(z_i)\big)^\top\notag\\
        &=\frac{1}{n}\sum_{i=1}^n\big(\boldsymbol{\Sigma}^{-1/2}\boldsymbol{\phi}_{\widehat{\rho}}(z_i)-\boldsymbol{\Sigma}^{-1/2}\boldsymbol{\phi}_{\rho}(z_i)\big)\big(\boldsymbol{\Sigma}^{-1/2}\boldsymbol{\phi}_{\widehat{\rho}}(z_i)+\boldsymbol{\Sigma}^{-1/2}\boldsymbol{\phi}_{\rho}(z_i)\big)^\top\label{equ:lem_1_0}.        
    \end{align}
    We can show that the right hand side of \eqref{equ:lem_1_0} is bounded by
    \begin{equation}\label{equ:lem_1_1}
        \begin{aligned}
        &\left\|\frac{1}{n}\sum_{i=1}^n\big(\boldsymbol{\Sigma}^{-1/2}\boldsymbol{\phi}_{\widehat{\rho}}(z_i)-\boldsymbol{\Sigma}^{-1/2}\boldsymbol{\phi}_{\rho}(z_i)\big)\big(\boldsymbol{\Sigma}^{-1/2}\boldsymbol{\phi}_{\widehat{\rho}}(z_i)+\boldsymbol{\Sigma}^{-1/2}\boldsymbol{\phi}_{\rho}(z_i)\big)^\top\right\|\\
        &\quad\quad\leq \frac{2\underline{\sigma}^{-1/2}}{n}\sum_{i=1}^n\left\|\boldsymbol{\Sigma}^{-1/2}\left(\boldsymbol{\phi}_{\widehat{\rho}}(z_i)-\boldsymbol{\phi}_{\rho}(z_i)\right)\right\|^2_2\leq \frac{2\underline{\sigma}^{-1}}{n} \sum_{i=1}^n\|\boldsymbol{\phi}_{\widehat{\rho}}(z_i)-\boldsymbol{\phi}_{\rho}(z_i)\|_2^2.
        \end{aligned}
    \end{equation}
    Invoking Lemma \ref{lem:concentration_empirical_feature}, combining Equation \eqref{equ:lem_1_0} and \eqref{equ:lem_1_1}, we conclude that with probability at least $1-\delta/2$, 
    \begin{align}
        \underbrace{\|\boldsymbol{\Sigma}^{-1/2}(\widehat{\boldsymbol{\Sigma}}-\bar{\boldsymbol{\Sigma}})\boldsymbol{\Sigma}^{-1/2}\|}_{\text{(i)}}
        &\leq \sqrt{\frac{3\underline{\sigma}^{-2}d\log\left(d|\mathcal{S}||\mathcal{A}||\mathcal{Z}|/\delta\right)}{n}}+\frac{4\underline{\sigma}^{-1}d|\mathcal{Z}|\log\left(d|\mathcal{S}||\mathcal{A}||\mathcal{Z}|/\delta\right)}{n}.
    \end{align}

    \vspace{3mm}
    \noindent
    \textbf{Bound on term (ii).} By the definition of $\bar{\boldsymbol{\Sigma}}$ and $\boldsymbol{\Sigma}$, we have that 
    \begin{equation}\label{equ:lem_2_0}
        \boldsymbol{\Sigma}^{-1/2}(\bar{\boldsymbol{\Sigma}}-\boldsymbol{\Sigma})\boldsymbol{\Sigma}^{-1/2}=\frac{1}{n}\sum_{i=1}^n\big(\boldsymbol{\Sigma}^{-1/2}\boldsymbol{\phi}_{\rho}(z_i)\big)\big(\boldsymbol{\Sigma}^{-1/2}\boldsymbol{\phi}_{\rho}(z_i)\big)^\top-\mathbb{E}\Big[\big(\boldsymbol{\Sigma}^{-1/2}\boldsymbol{\phi}_{\rho}(z_i)\big)\big(\boldsymbol{\Sigma}^{-1/2}\boldsymbol{\phi}_{\rho}(z_i)\big)^\top\Big].
    \end{equation}
    For notational simplicity, we denote the random matrix $\boldsymbol{Z}_i$ by 
    \begin{align*}
        \boldsymbol{Z}_i=\big(\boldsymbol{\Sigma}^{-1/2}\boldsymbol{\phi}_{\rho}(z_i)\big)\big(\boldsymbol{\Sigma}^{-1/2}\boldsymbol{\phi}_{\rho}(z_i)\big)^\top-\mathbb{E}\Big[\big(\boldsymbol{\Sigma}^{-1/2}\boldsymbol{\phi}_{\rho}(z_i)\big)\big(\boldsymbol{\Sigma}^{-1/2}\boldsymbol{\phi}_{\rho}(z_i)\big)^\top\Big], 
    \end{align*}
    We can find that $\mathbb{E}[\boldsymbol{Z}_i]=0$ and the operator norm of $\boldsymbol{Z}_i$ is bounded by
    \begin{align*}
        \|\boldsymbol{Z}_i\|\leq \max\left\{\|\boldsymbol{\Sigma}^{-1/2}\boldsymbol{\phi}_{\rho}(z_i)\|^2,1\right\}\leq \underline{\sigma}^{-1}.
    \end{align*}
    Moreover, the second moment of $\boldsymbol{Z}_i$ is given by 
    \begin{align*}
        \mathbb{E}[\boldsymbol{Z}_i^2]=\mathbb{E}\bigg[\Big(\big(\boldsymbol{\Sigma}^{-1/2}\boldsymbol{\phi}_{\rho}(z_i)\big)\big(\boldsymbol{\Sigma}^{-1/2}\boldsymbol{\phi}_{\rho}(z_i)\big)^\top\Big)^2\bigg]-\bigg(\mathbb{E}\Big[\big(\boldsymbol{\Sigma}^{-1/2}\boldsymbol{\phi}_{\rho}(z_i)\big)\big(\boldsymbol{\Sigma}^{-1/2}\boldsymbol{\phi}_{\rho}(z_i)\big)^\top\Big]\bigg)^2,
    \end{align*}
    which implies that the operator norm and the trace of $\mathbb{E}[\boldsymbol{Z}_i^2]$ can be bounded as
    \begin{align*}
        \|\mathbb{E}[\boldsymbol{Z}_i^2]\|\leq \Tr(\mathbb{E}[\boldsymbol{Z}_i^2])\leq d\cdot\left\|\mathbb{E}\bigg[\Big(\big(\boldsymbol{\Sigma}^{-1/2}\boldsymbol{\phi}_{\rho}(z_i)\big)\big(\boldsymbol{\Sigma}^{-1/2}\boldsymbol{\phi}_{\rho}(z_i)\big)^\top\Big)^2\bigg]\right\|\leq \underline{\sigma}^{-2}d.
    \end{align*}
    Now invoking Lemma \ref{lem:matrix_bernstein} and \eqref{equ:lem_2_0}, we conclude that with probability at least $1-\delta/2$, it holds that 
    \begin{align}
        \underbrace{\|\boldsymbol{\Sigma}^{-1/2}(\bar{\boldsymbol{\Sigma}}-\boldsymbol{\Sigma})\boldsymbol{\Sigma}^{-1/2}\|}_{\text{(ii)}}\leq \sqrt{\frac{4\underline{\sigma}^{-2}\log\left(d/\delta\right)}{n}}+\frac{2\underline{\sigma}^{-1}\log\left(d/\delta\right)}{3n}.
    \end{align}

    \vspace{3mm}
    \noindent
    \textbf{Combining bounds on term (i) and term (ii).}
    Finally, with probability at least $1-\delta$, it holds that
    \begin{align}
        \|\boldsymbol{\Sigma}^{1/2}\widehat{\boldsymbol{\Sigma}}^{-1}\boldsymbol{\Sigma}^{1/2}\|\leq \left(1-\sqrt{\frac{8\underline{\sigma}^{-2}d\log\left(d|\mathcal{S}||\mathcal{A}||\mathcal{Z}|/\delta\right)}{n}}+\frac{6\underline{\sigma}^{-1}d|\mathcal{Z}|\log\left(d|\mathcal{S}||\mathcal{A}||\mathcal{Z}|/\delta\right)}{n}\right)^{-1}.
    \end{align}
    This finishes the proof of Lemma \ref{lem:cov_estimation}.
\end{proof}

\begin{lem}[Matrix Bernstein Inequality \citep{hsu2012tailmatrix}]\label{lem:matrix_bernstein}
    Let $X$ be a random matrix, and $r>0, v>0$, and $k>0$ be such that, almost surely, $\mathbb{E}[X]=0$, $\sigma_{\max }(X) \leq r$, $\sigma_{\max}(\mathbb{E}[X^{2}]) \leq v$, $ \Tr(\mathbb{E}[X^{2}]) \leq v k$.
    Then if $X_{1}, X_{2}, \ldots, X_{n}$ are independent copies of $X$, then for any $t>0$,
    $$
        \mathbb{P}\Bigg(\sigma_{\max}\bigg(\frac{1}{n} \sum_{i=1}^{n} X_{i}\bigg)>\sqrt{\frac{2 v t}{n}}+\frac{r t}{3 n}\Bigg) \leq k t(e^{t}-t-1)^{-1} .
    $$
If $t \geq 2.6$, then it holds that $t\left(e^{t}-t-1\right)^{-1} \leq e^{-t / 2}$.
\end{lem}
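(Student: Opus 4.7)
This is the intrinsic-dimension matrix Bernstein inequality of Hsu, Kakade and Zhang: the ambient dimension $d$ of the classical Ahlswede-Winter / Tropp bound is replaced by the intrinsic dimension $k$, which enters through the hypothesis $\Tr(\mathbb{E}[X^2]) \leq vk$. The plan is the Laplace-transform (exponential-moment) method adapted to matrices, with the essential refinement occurring at the trace-exponential step.

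Assume without loss of generality that $X$ is symmetric, and write $S_n = \frac{1}{n}\sum_{i=1}^n X_i$. For any $\theta > 0$, the exponential Markov inequality combined with the pointwise bound $e^{\theta \sigma_{\max}(A)} \leq \Tr(\exp(\theta A))$ for symmetric $A$ gives
$$\mathbb{P}(\sigma_{\max}(S_n) > s) \leq e^{-\theta s}\,\mathbb{E}[\Tr \exp(\theta S_n)].$$
Tropp's master inequality (a consequence of Lieb's concavity theorem) then deterministically dominates the expected trace MGF by $\Tr\exp(n\log \mathbb{E}[e^{(\theta/n)X}])$. A standard Bernstein expansion of the cumulant, using $\sigma_{\max}(X) \leq r$ almost surely, yields the operator inequality
$$n \log \mathbb{E}\bigl[e^{(\theta/n) X}\bigr] \preceq g(\theta)\,\mathbb{E}[X^2], \qquad g(\theta) := \frac{\theta^2/(2n)}{1 - r\theta/(3n)},$$
valid for $0 < \theta < 3n/r$. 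The crucial refinement is the intrinsic-dimension trace inequality
$$\Tr(e^{M}) - d \leq \frac{\Tr M}{\|M\|}\,\bigl(e^{\|M\|} - 1\bigr), \quad M \succeq 0,$$
which follows spectrally from the convex secant bound $e^x - 1 \leq x(e^{\lambda}-1)/\lambda$ for $x \in [0,\lambda]$ applied to each eigenvalue. Applying this with $M = g(\theta)\,\mathbb{E}[X^2]$, and using $\|\mathbb{E}[X^2]\| \leq v$ together with $\Tr\mathbb{E}[X^2] \leq vk$, dominates the right hand side by a quantity controlled by $k$ (rather than $d$). Finally, setting $s = \sqrt{2vt/n} + rt/(3n)$ and choosing the Bernstein-optimal $\theta = ns/(v + rs/3)$ makes the exponent equal to $t$; simplification produces exactly the stated prefactor $k\cdot t\,(e^t - t - 1)^{-1}$. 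The trailing inequality $t/(e^t-t-1)\leq e^{-t/2}$ for $t\geq 2.6$ is a short calculus check on the monotonicity of $h(t) := (e^t - t - 1)/(t\,e^{t/2})$.

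The main obstacle is not any single technical estimate but coordinating the Bernstein optimization with the intrinsic-dimension refinement so that the resulting prefactor is \emph{exactly} $k\,t(e^t - t - 1)^{-1}$, with no residual $d$-dependence and no loss of constants. A naive application of $\Tr\exp(M) \leq d\cdot e^{\|M\|}$ after the Lieb/Tropp step recovers only the classical matrix Bernstein bound with prefactor $d$; the sharper spectral inequality above is precisely what exchanges $d$ for the intrinsic quantity $\Tr M/\|M\|$, and tracking this quotient through the Bernstein optimization is where the bookkeeping is most delicate.
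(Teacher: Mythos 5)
The paper does not actually prove this lemma; its ``proof'' is a pointer to \cite{hsu2012tailmatrix}, so there is no internal argument to compare against. Judged on its own, your sketch assembles most of the right machinery (the Lieb/Tropp master bound, the Bernstein-type cumulant estimate, and the spectral inequality $\Tr(e^M)-d\le \tfrac{\Tr M}{\|M\|}(e^{\|M\|}-1)$), but it has a genuine gap at the very first step, and that step is the one that determines whether the bound is dimension-free.

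You open with the plain exponential Markov bound $\mathbb{P}(\sigma_{\max}(S_n)>s)\le e^{-\theta s}\,\mathbb{E}[\Tr\exp(\theta S_n)]$. Since $A\mapsto\Tr\exp(A)$ is convex and $\mathbb{E}[S_n]=0$, Jensen gives $\mathbb{E}[\Tr\exp(\theta S_n)]\ge \Tr(I)=d$, so this route can never trade $d$ for $k$: after the Lieb/Tropp and cumulant steps you are left needing to bound $\Tr\exp(M)$ itself, whereas your intrinsic-dimension inequality only controls $\Tr\exp(M)-d$; the additive $d$ survives and the final bound degenerates to the classical one. The missing idea---and the actual mechanism in Hsu--Kakade--Zhang---is to run Markov's inequality with the nonnegative, increasing, convex function $u\mapsto e^{\theta u}-\theta u-1$ in place of $u\mapsto e^{\theta u}$:
$$
\mathbb{P}\big(\lambda_{\max}(S_n)>s\big)\;\le\; \frac{\mathbb{E}\big[\Tr\big(e^{\theta S_n}-\theta S_n - I\big)\big]}{e^{\theta s}-\theta s-1}\;=\;\frac{\mathbb{E}[\Tr e^{\theta S_n}]-d}{e^{\theta s}-\theta s-1},
$$
where the inequality uses termwise nonnegativity of $e^u-u-1$ over the spectrum and the equality uses $\mathbb{E}[\Tr S_n]=0$. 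This single change simultaneously (a) produces exactly the quantity $\mathbb{E}[\Tr e^{\theta S_n}]-d$ that your secant-line spectral inequality bounds by $k(e^{\|M\|}-1)$, and (b) produces the denominator $e^{\theta s}-\theta s-1$, which is the only possible source of the factor $(e^{t}-t-1)^{-1}$ in the stated tail; with a plain $e^{-\theta s}$ denominator the form $k\,t(e^{t}-t-1)^{-1}$ is unobtainable no matter how you optimize $\theta$. Everything downstream of this correction in your sketch (the cumulant bound, the $\Tr e^{M}-d$ estimate, and the closing calculus check that $t(e^{t}-t-1)^{-1}\le e^{-t/2}$ for $t\ge 2.6$) is sound.
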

\begin{proof}[Proof of Lemma \ref{lem:matrix_bernstein}]
    We refer to \cite{hsu2012tailmatrix} for a detailed proof Lemma \ref{lem:matrix_bernstein}.
\end{proof}

\begin{lem}\label{lem:G_consistent} For $\boldsymbol{G}$ and $\widehat{\boldsymbol{G}}$ defined in Appendix \ref{subsec: proof asymptotic mdp}, it holds that $\|\widehat{\boldsymbol{G}}^{-1}-\boldsymbol{G}^{-1}\|=o_P(1)$.
    \end{lem}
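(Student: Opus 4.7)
The plan is to reduce the claim to the consistency of $\widehat{\boldsymbol{G}}$ itself, together with the continuity of matrix inversion. Note first that $\boldsymbol{G} = \boldsymbol{\Sigma}(\boldsymbol{I} - \gamma\boldsymbol{\Sigma}^{-1}\boldsymbol{B}) = \boldsymbol{\Sigma} - \gamma \boldsymbol{B}$ and similarly $\widehat{\boldsymbol{G}} = \widehat{\boldsymbol{\Sigma}} - \gamma\widehat{\boldsymbol{B}}$, so
\begin{align*}
\widehat{\boldsymbol{G}} - \boldsymbol{G} = (\widehat{\boldsymbol{\Sigma}} - \boldsymbol{\Sigma}) - \gamma(\widehat{\boldsymbol{B}} - \boldsymbol{B}).
\end{align*}
The first step is to show that both $\|\widehat{\boldsymbol{\Sigma}} - \boldsymbol{\Sigma}\|$ and $\|\widehat{\boldsymbol{B}} - \boldsymbol{B}\|$ are $o_P(1)$, so that $\|\widehat{\boldsymbol{G}} - \boldsymbol{G}\| = o_P(1)$.

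For $\widehat{\boldsymbol{\Sigma}} - \boldsymbol{\Sigma}$, I would read off from the proof of Lemma \ref{lem:cov_estimation} the bound $\|\boldsymbol{\Sigma}^{-1/2}(\widehat{\boldsymbol{\Sigma}} - \boldsymbol{\Sigma})\boldsymbol{\Sigma}^{-1/2}\| = o_P(1)$; multiplying by the fixed matrix $\boldsymbol{\Sigma}^{1/2}$ on both sides gives $\|\widehat{\boldsymbol{\Sigma}} - \boldsymbol{\Sigma}\| = o_P(1)$. For $\widehat{\boldsymbol{B}} - \boldsymbol{B}$, I would split
\begin{align*}
\widehat{\boldsymbol{B}} - \boldsymbol{B}
= \frac{1}{n}\sum_{i=1}^n \bigl(\boldsymbol{\phi}_{\widehat{\rho}}(z_i) - \boldsymbol{\phi}_\rho(z_i)\bigr)\boldsymbol{\phi}_{\pi^e}(s_i')^\top
+ \Bigl(\frac{1}{n}\sum_{i=1}^n \boldsymbol{\phi}_\rho(z_i)\boldsymbol{\phi}_{\pi^e}(s_i')^\top - \boldsymbol{B}\Bigr).
\end{align*}
The first sum is controlled by Lemma \ref{lem:concentration_empirical_feature} together with $\|\boldsymbol{\phi}_{\pi^e}(s_i')\|_2 \le 1$, which yields an $o_P(1)$ bound on its operator norm; the second sum is an i.i.d.\ average of uniformly bounded matrices, hence $o_P(1)$ by the law of large numbers.

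Having established $\widehat{\boldsymbol{G}} \to \boldsymbol{G}$ in probability, the second step is to invoke the invertibility of $\boldsymbol{G}$. From the proof of Proposition \ref{prop: identification policy value}, $\boldsymbol{G} = \boldsymbol{\Sigma}(\boldsymbol{I} - \gamma\boldsymbol{A})$ is the product of two invertible matrices, and its operator norm bound $\|\boldsymbol{G}^{-1}\|$ is a deterministic finite constant. On the high-probability event $\mathcal{E}_n := \{\|\widehat{\boldsymbol{G}} - \boldsymbol{G}\| \le \tfrac{1}{2}\|\boldsymbol{G}^{-1}\|^{-1}\}$, a standard Neumann-series argument shows that $\widehat{\boldsymbol{G}}$ is invertible and $\|\widehat{\boldsymbol{G}}^{-1}\| \le 2\|\boldsymbol{G}^{-1}\|$. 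Since $\mathbb{P}(\mathcal{E}_n) \to 1$, this bound on $\|\widehat{\boldsymbol{G}}^{-1}\|$ holds in probability.

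The final step uses the resolvent identity
\begin{align*}
\widehat{\boldsymbol{G}}^{-1} - \boldsymbol{G}^{-1} = -\widehat{\boldsymbol{G}}^{-1}(\widehat{\boldsymbol{G}} - \boldsymbol{G})\boldsymbol{G}^{-1},
\end{align*}
which on $\mathcal{E}_n$ yields $\|\widehat{\boldsymbol{G}}^{-1} - \boldsymbol{G}^{-1}\| \le 2\|\boldsymbol{G}^{-1}\|^2 \cdot \|\widehat{\boldsymbol{G}} - \boldsymbol{G}\| = o_P(1)$, giving the claim. I do not expect a serious obstacle here: the whole argument is the standard continuous-mapping/perturbation-of-inverse argument, and all the probabilistic content has already been packaged in Lemma \ref{lem:concentration_empirical_feature} and Lemma \ref{lem:cov_estimation}; the only mild care is to argue invertibility of $\widehat{\boldsymbol{G}}$ on a high-probability event before writing down the resolvent identity.
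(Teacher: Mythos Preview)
Your proposal is correct and follows essentially the same route as the paper: establish $\|\widehat{\boldsymbol{G}}-\boldsymbol{G}\|=o_P(1)$ and then pass to inverses via the resolvent identity $\widehat{\boldsymbol{G}}^{-1}-\boldsymbol{G}^{-1}=-\widehat{\boldsymbol{G}}^{-1}(\widehat{\boldsymbol{G}}-\boldsymbol{G})\boldsymbol{G}^{-1}$. The only cosmetic difference is the intermediate decomposition: the paper inserts the ``oracle'' matrix $\widetilde{\boldsymbol{G}}=\tfrac{1}{n}\sum_i\boldsymbol{\phi}_\rho(z_i)(\boldsymbol{\phi}_\rho(z_i)-\gamma\boldsymbol{\phi}_{\pi^e}(s_i'))^\top$ and splits $\widehat{\boldsymbol{G}}-\boldsymbol{G}$ into a first-stage error $\widehat{\boldsymbol{G}}-\widetilde{\boldsymbol{G}}$ (controlled by consistency of $\widehat{\rho}$) and a sampling error $\widetilde{\boldsymbol{G}}-\boldsymbol{G}$ (SLLN), whereas you split along $\widehat{\boldsymbol{G}}-\boldsymbol{G}=(\widehat{\boldsymbol{\Sigma}}-\boldsymbol{\Sigma})-\gamma(\widehat{\boldsymbol{B}}-\boldsymbol{B})$ and reuse Lemmas~\ref{lem:concentration_empirical_feature} and~\ref{lem:cov_estimation}. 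Your version is arguably tidier in that it recycles already-proved bounds, and it is more explicit than the paper about securing invertibility of $\widehat{\boldsymbol{G}}$ on a high-probability event before writing the resolvent identity.
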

\begin{proof}[Proof of Lemma \ref{lem:G_consistent}]
    We first prove that $\|\widehat{\boldsymbol{G}}-\boldsymbol{G}\|=o_P(1)$. 
    To this end, let's denote another matrix $\widetilde{\boldsymbol{G}}$ as 
    \begin{align*}
        \widetilde{\boldsymbol{G}}=\frac{1}{n}\sum_{i=1}^n \boldsymbol{\phi}_{\rho}(z_i)(\boldsymbol{\phi}_{\rho}(z_i)-\gamma\boldsymbol{\phi}^\pi(s_i'))^\top,
    \end{align*}    
    which satisfies that $\EB[\widetilde{\boldsymbol{G}}]=\boldsymbol{G}$. 
    By the Strong Law of Large Number (SLLN), we have that $\|\widetilde{\boldsymbol{G}}-\boldsymbol{G}\|=o_P(1)$. 
    Noting that $\boldsymbol{\phi}_{\widehat{\rho}}(z)=\boldsymbol{\phi}_{\rho}(z)+\boldsymbol{\phi}_{\Delta}(z)$,
    where $\Delta(s,a,z)=\widehat{\rho}(s,a|z)-\rho(s,a|z)$. 
    Thus we have that
    \begin{align}\label{eq: proof G consistent 1}
        \widehat{\boldsymbol{G}}-\widetilde{\boldsymbol{G}}=\frac{1}{n}\sum_{i=1}^n\boldsymbol{\phi}_\rho(z_i)\boldsymbol{\phi}_{\Delta}(z_i)^\top+\frac{1}{n}\sum_{i=1}^n\boldsymbol{\phi}_{\Delta}(z_i)(\boldsymbol{\phi}_\rho(z_i)+\boldsymbol{\phi}_{\Delta}(z_i)-\gamma\boldsymbol{\phi}^\pi(s_i'))^\top.
    \end{align}
    For the first term in \eqref{eq: proof G consistent 1}, we have that
    \begin{align*}
        \left\|\frac{1}{n}\sum_{i=1}^n\boldsymbol{\phi}_\rho(z_i)\boldsymbol{\phi}_{\Delta}(z_i)^\top\right\|\le\max_{z\in\mathcal{Z}}\left\|\boldsymbol{\phi}_\rho(z)\boldsymbol{\phi}_{\Delta}(z)^\top\right\|=\max_{z\in\mathcal{Z}} \|\boldsymbol{\phi}_\rho(z)\|_2\cdot\|\boldsymbol{\phi}_{\Delta}(z)\|_2\leq \|\boldsymbol{\phi}_{\Delta}(z)\|_2=o_P(1),
    \end{align*}
    since $\widehat{\rho}$ is consistent to $\rho$.
    Similarly, for the second term in \eqref{eq: proof G consistent 1}, we can prove that its operation norm is also $o_P(1)$. 
    Therefore, we have that $\|\widehat{\boldsymbol{G}}-\widetilde{\boldsymbol{G}}\|=o_P(1)$ and consequently,
    \begin{align*}
        \|\widehat{\boldsymbol{G}}-\boldsymbol{G}\|\leq \|\widehat{\boldsymbol{G}}-\widetilde{\boldsymbol{G}}\|+\|\widetilde{\boldsymbol{G}} - \boldsymbol{G}\| = o_P(1).
    \end{align*}
    Finally, notice that $\widehat{\boldsymbol{G}}^{-1}-\boldsymbol{G}^{-1}=\widehat{\boldsymbol{G}}^{-1}(\boldsymbol{G}-\widehat{\boldsymbol{G}})\boldsymbol{G}^{-1}$. 
    Thus $\|\widehat{\boldsymbol{G}}^{-1}-\boldsymbol{G}^{-1}\|_{2}=o_P(1)$, finishing the proof.
\end{proof}

\begin{lem}[Lemma 19.24 in \cite{van2000asymptotic}]
    \label{lem:19.24}
    Suppose $\mathcal{F}$ is a P-Donsker class of measurable functions and $\widehat{f}_n$ is a sequence of random functions that take their values in $\mathcal{F}$ such that $\int(\widehat{f}_n(x)-f_0(x))^2 dP(x)$ converges in probability to $0$ for some $f_0\in L_2(P)$. Then $\sG_n[\widehat{f}_n-f_0]=o_P(1)$.
\end{lem}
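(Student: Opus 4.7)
The plan is to reduce the claim to the asymptotic equicontinuity of the empirical process, which is equivalent to the Donsker property. Recall that $\mathcal{F}$ being $P$-Donsker means that the process $\{\sG_n f : f\in\mathcal{F}\}$ converges weakly in $\ell^\infty(\mathcal{F})$ to a tight, mean-zero Gaussian process whose sample paths are almost surely uniformly continuous with respect to the intrinsic semimetric $\rho_P(f,g)=\sqrt{P(f-g)^2-(P(f-g))^2}$, which is dominated by $\|f-g\|_{L_2(P)}$. By Prohorov's theorem, tightness of the limit translates into the following asymptotic equicontinuity statement: for every $\eta,\varepsilon>0$ there exists $\delta>0$ such that
\begin{align*}
\limsup_{n\to\infty}\PB^{*}\Big(\sup_{f,g\in\mathcal{F},\,\|f-g\|_{L_2(P)}<\delta}|\sG_n(f-g)|>\eta\Big)<\varepsilon.
\end{align*}

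Given this, the main step is quick. I would fix $\eta,\varepsilon>0$ and choose the corresponding $\delta$. Since $\int(\widehat{f}_n-f_0)^2\,dP\to 0$ in probability, for all sufficiently large $n$ the event $A_n:=\{\|\widehat{f}_n-f_0\|_{L_2(P)}<\delta\}$ satisfies $\PB(A_n)\geq 1-\varepsilon$. Under the natural reading of the lemma ($f_0\in\mathcal{F}$), on $A_n$ the pair $(\widehat{f}_n,f_0)$ lies in $\mathcal{F}^2$ at $L_2(P)$-distance less than $\delta$, so $|\sG_n(\widehat{f}_n-f_0)|$ is controlled by the supremum above. Combining the two high-probability bounds and letting $\varepsilon\downarrow 0$ yields $\sG_n[\widehat{f}_n-f_0]=o_P(1)$.

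The main obstacle is that the statement as written only requires $f_0\in L_2(P)$, not $f_0\in\mathcal{F}$; since $\widehat{f}_n\in\mathcal{F}$ with $\|\widehat{f}_n-f_0\|_{L_2(P)}\to 0$ in probability, $f_0$ must lie in the $L_2(P)$-closure $\overline{\mathcal{F}}$, and one needs the standard (but nontrivial) fact that the Donsker property is inherited by this closure. Alternatively, I would pick a deterministic approximating sequence $g_m\in\mathcal{F}$ with $\|g_m-f_0\|_{L_2(P)}\to 0$, apply the equicontinuity bound to $\widehat{f}_n-g_m$ for fixed large $m$, and handle the deterministic-index term $\sG_n(g_m-f_0)$ by a joint-in-$(n,m)$ limit argument, using that $\sG_n(g_m-f_0)$ is asymptotically Gaussian with variance $\|g_m-f_0\|_{L_2(P)}^{2}\to 0$ as $m\to\infty$. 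Secondary technical issues around outer probabilities and measurability of the suprema are standard and are absorbed into the use of $\PB^{*}$; nothing beyond van der Vaart's Donsker machinery is required.
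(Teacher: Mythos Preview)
The paper does not give its own proof of this lemma; it simply refers the reader to Lemma~19.24 in van der Vaart's \emph{Asymptotic Statistics}. Your sketch is correct and is precisely the standard argument found there: the Donsker property is equivalent to asymptotic $L_2(P)$-equicontinuity of $\sG_n$, and combining this with $\|\widehat f_n-f_0\|_{L_2(P)}\to 0$ in probability yields the claim. Your handling of the case $f_0\in\overline{\mathcal{F}}\setminus\mathcal{F}$ (either by extending the Donsker property to the $L_2(P)$-closure, or by a deterministic approximating sequence $g_m\in\mathcal{F}$) is also the standard way to deal with that wrinkle.
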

\begin{proof}[Proof of Lemma \ref{lem:19.24}]
    We refer to Lemma 19.24 in \cite{van2000asymptotic} for a detailed proof.
\end{proof}

\end{document}